\pgfplotsset{width=5.35cm,compat=1.9}
\newtheorem{theorem}{Theorem}
\newtheorem{proposition}{Proposition}
\newtheorem{lemma}{Lemma}
\newtheorem{corollary}{Corollary}
\long\def\comment#1{}
\newcommand{\red}[1]{\textcolor{red}{#1}}
\newcommand{\defn}{:\,=}
\newcommand{\numobs}{\ensuremath{n}}
\newcommand{\ones}{\ensuremath{{\bf 1}}}
\newcommand{\inprod}[2]{\ensuremath{\langle #1 , \, #2 \rangle}}
\newcommand{\KL}{\ensuremath{D_{\mathsf{KL}}}}
\newcommand{\EE}{\ensuremath{{\mathbb{E}}}}
\newcommand{\Prob}{\ensuremath{{\mathbb{P}}}}
\newcommand{\ind}[1]{\ensuremath{{\mathbf{1}\left\{ #1 \right\}}}}
\newcommand{\1}{\ensuremath{{\sf (i)}}}
\newcommand{\2}{\ensuremath{{\sf (ii)}}}
\newcommand{\NORMAL}{\ensuremath{\mathcal{N}}}
\newcommand{\BER}{\ensuremath{\mbox{\sf Ber}}}
\newcommand{\BIN}{\ensuremath{\mbox{\sf Bin}}}
\newcommand{\Dspace}{\ensuremath{\mathcal{D}}}
\newcommand{\Xspace}{\ensuremath{\mathcal{X}}}
\newcommand{\Espace}{\ensuremath{\mathcal{E}}}
\newcommand{\Mspace}{\ensuremath{\mathfrak{M}}}
\newcommand{\thetastar}{\ensuremath{\theta^*}}
\newcommand{\thetahat}{\ensuremath{\widehat{\theta}}}
\newcommand{\real}{\ensuremath{\mathbb{R}}}
\newcommand{\st}{\ensuremath{\; \Big | \;}}
\newcommand{\ValueFunc}{\ensuremath{\theta}}
\newcommand{\ValueFunchat}{\ensuremath{\widehat{\theta}_{\mathsf{plug}}}}
\newcommand{\ValueFunchatrob}{\ensuremath{\widehat{\theta}_{\mathsf{MoM}}}}
\newcommand{\Belemp}{\ensuremath{\widehat{\mathcal{T}}}}
\newcommand{\MoM}{\ensuremath{\widehat{\mathcal{M}}}}
\newcommand{\muhat}{\ensuremath{\widehat{\mu}}}
\newcommand{\robust}{\ensuremath{\mathsf{MoM}}}
\newcommand{\rdist}{\ensuremath{\mathcal{D}_r}}
\newcommand{\sigmar}{\ensuremath{\rho(\rstar)}}
\newcommand{\Nsamp}{\ensuremath{N}}
\newcommand{\Dim}{\ensuremath{D}}
\newcommand{\phat}{\ensuremath{\widehat{p}}}
\newcommand{\DelHatIpert}{\ensuremath{\DelHat^{(i)}}}
\newcommand{\thetahatIpert}{\ensuremath{\thetahat^{(i)}}}
\newcommand{\pmat}{\ensuremath{p}}
\newcommand{\Phatloo}[1]{\Phat^{(#1)}}
\newcommand{\basic}{\mathbf{0}}
\newcommand{\secondbasic}{\mathbf{1}}
\newcommand{\secondmrpbd}{\mu}
\newcommand{\val}{\mathsf{vfun}}
\newcommand{\rew}{\mathsf{rew}}
\newcommand{\sigvalbd}{\ensuremath{\vartheta}}
\newcommand{\sigrewbd}{\ensuremath{\varrho}}
\newcommand{\valbd}{\ensuremath{\zeta}}
\newcommand{\lbmaxrew}{\ensuremath{\nu}}
\newcommand{\LOGDN}{\ensuremath{\sqrt{\LOGDNSQ}}}
\newcommand{\LOGDNSQ}{\ensuremath{\frac{\log(8 \Dim/\delta)}{\Nsamp}}}
\newcommand{\sighat}{\ensuremath{\widehat{\sigma}}}
\newcommand{\IdMat}{\ensuremath{\mathbf{I}}}
\newcommand{\Phat}{\ensuremath{\widehat{\Pmat}}}
\newcommand{\Id}{\ensuremath{\mathbf{I}}}
\newcommand{\onevec}{\ensuremath{\mathbf{1}}}
\newcommand{\state}{\ensuremath{x}}
\newcommand{\reward}{\ensuremath{r}}
\newcommand{\rhat}{\ensuremath{\widehat{r}}}
\newcommand{\Reward}{\ensuremath{R}}
\newcommand{\discount}{\ensuremath{\gamma}}
\newcommand{\StateSpace}{\ensuremath{\mathcal{X}}}
\newcommand{\myspan}{\ensuremath{\operatorname{span}}}
\newcommand{\bcar}{\begin{itemize}}
\newcommand{\ecar}{\end{itemize}}
\newcommand{\unicon}{\ensuremath{c}}
\newcommand{\DelHat}{\ensuremath{\widehat{\Delta}}}
\newcommand{\rmax}{\ensuremath{r_{\max}}}
\newcommand{\Ehat}{\ensuremath{\widehat{\mathbb{E}}}}
\definecolor{MyGray}{rgb}{0.9,0.9,0.9}
\makeatletter\newenvironment{graybox}{ 
\begin{lrbox}{\@tempboxa}
\begin{minipage}{0.985\columnwidth}}{\end{minipage}
\end{lrbox}%
\colorbox{MyGray}{\usebox{\@tempboxa}} }
\newcommand{\Zback}[1]{\ensuremath{Z^{\backslash i}}}
\newcommand{\xsamstack}[1]{\ensuremath{x_1^\numobs}}
\newcommand{\Xsamstack}[1]{\ensuremath{X_1^\numobs}}
\newcommand{\widgraph}[2]{\includegraphics[keepaspectratio,width=#1]{#2}}
\newcommand{\Term}{\ensuremath{T}}
\newcommand{\fancysoln}[1]{
\ifthenelse{\equal{\doctype}{WITHSOLS}}
{
\begin{soln}
#1
\end{soln}
}
{
}
}
\newcommand{\med}{\ensuremath{\operatorname{med}}}
\newcommand{\pvec}[1]{\ensuremath{p_{#1}}}
\long\def\comment#1{}
\def\@cite#1#2{[\if@tempswa #2 \fi #1]}
\newcommand{\var}{\ensuremath{\operatorname{var}}}
\newcommand{\mprob}{\ensuremath{\mathbb{P}}}
\newcommand{\mymathbf}[1]{\ensuremath{\mathbf{#1}}}
\newcommand{\Zmat}{\ensuremath{\mymathbf{Z}}}
\newcommand{\Zmatsub}[1]{\ensuremath{\mymathbf{Z}}_{#1}}
\newcommand{\Mmat}{\ensuremath{\mymathbf{M}}}
\newcommand{\Amat}{\ensuremath{\mymathbf{A}}}
\newcommand{\Pmat}{\ensuremath{\mymathbf{P}}}
\newcommand{\betastar}{\ensuremath{\beta^*}}
\newlength{\widebarargwidth}
\newlength{\widebarargheight}
\newlength{\widebarargdepth}
\newcommand{\Exs}{\ensuremath{\mathbb{E}}}
\newcommand{\rstar}{\reward}
\newcommand{\sigmarsqindex}{\ensuremath{\rho_j^2(\rstar)}}
\newcommand{\MspaceVar}{\ensuremath{\Mspace_{\mathsf{var}}}}
\newcommand{\MspaceValFun}{\ensuremath{\Mspace_{\mathsf{vfun}}}}
\newcommand{\MspaceRew}{\ensuremath{\Mspace_{\mathsf{rew}}}}
\newcommand{\spannorm}[1]{\ensuremath{\|#1\|}_{\myspan}}
\newcommand{\MRP}{\ensuremath{\mathcal{R}}}
\newcommand{\betaplug}{\ensuremath{\widehat{\beta}_{\mbox{\tiny{plug}}}}}
\newcommand{\betamom}{\ensuremath{\widehat{\beta}_{\mbox{\tiny{MoM}}}}}
\long\def\@makecaption#1#2{
        \vskip 0.8ex
        \setbox\@tempboxa\hbox{\small {\bf #1:} #2}
        \parindent 1.5em  
        \dimen0=\hsize
        \advance\dimen0 by -3em
        \ifdim \wd\@tempboxa >\dimen0
                \hbox to \hsize{
                        \parindent 0em
                        \hfil 
                        \parbox{\dimen0}{\def\baselinestretch{0.96}\small
                                {\bf #1.} #2
                                } 
                        \hfil}
        \else \hbox to \hsize{\hfil \box\@tempboxa \hfil}
        \fi
        }
\begin{document}

\begin{center}

  {\bf{\LARGE{Instance-dependent $\ell_\infty$-bounds for policy evaluation in tabular reinforcement learning}}}

\vspace*{.2in}

{\large{
\begin{tabular}{ccc}
Ashwin Pananjady$^\star$ and Martin J. Wainwright$^{\dagger, \ddagger}$
\end{tabular}
}}
\vspace*{.2in}

\begin{tabular}{c}
$^\star$Simons Institute for the Theory of Computing, UC Berkeley \\
$^\dagger$Departments of EECS and Statistics, UC Berkeley \\
  $^\ddagger$Voleon Group, Berkeley
\end{tabular}

\vspace*{.2in}

\today

\vspace*{.2in}

\begin{abstract}
Markov reward processes (MRPs) are used to model stochastic phenomena
arising in operations research, control engineering, robotics, and
artificial intelligence, as well as communication and transportation
networks. In many of these cases, such as in the policy evaluation
problem encountered in reinforcement learning, the goal is to estimate
the long-term value function of such a process without access to the
underlying population transition and reward functions. Working with
samples generated under the synchronous model, we study the problem of
estimating the value function of an infinite-horizon, discounted 
MRP on finitely many states in the $\ell_\infty$-norm. 
We analyze both the standard plug-in
approach to this problem and a more robust variant, and establish
non-asymptotic bounds that depend on the (unknown) problem instance,
as well as data-dependent bounds that can be evaluated based on the
observations of state-transitions and rewards. 
We show that these approaches are minimax-optimal up to
constant factors over natural sub-classes of MRPs. Our analysis makes
use of a leave-one-out decoupling argument tailored to the policy
evaluation problem, one which may be of independent interest.
\end{abstract}
\end{center}

\section{Introduction} \label{sec:intro}

A variety of applications spanning science and engineering use Markov
reward processes as models for real-world phenomena, including
queuing systems, transportation networks, robotic exploration, game
playing, and epidemiology.  In some of these settings, the underlying
parameters that govern the process are known to the modeler, but in
others, these must be estimated from observed data. A salient example
of the latter setting, which forms the main motivation for this paper,
is the policy evaluation problem encountered in Markov decision
processes (MDPs) and reinforcement
learning~\cite{Bertsekas_dyn1,Bertsekas_dyn2,SutBar18}. Here an agent
operates in an environment whose dynamics are unknown: at each step,
it observes the current state of the environment, and takes an action
that changes its state according to some stochastic transition
function determined by the environment.  The goal is to evaluate the
utility of some policy---that is, a mapping from states to actions,
where utility is measured using rewards that the agent receives from
the environment.  These rewards are usually assumed to be additive
over time, and since the policy determines the action to be taken at
each state, the reward obtained at any time is simply a function of
the current state of the agent. Thus, this setting induces a Markov
reward process (MRP) on the state space, in which both the underlying
transitions and rewards are unknown to the agent.  The agent only
observes samples of state transitions and rewards.

Given these samples, the goal of the agent is to estimate the
\emph{value function} of the MRP.  As noted above, in the context of
Markov decision processes (MDPs), this problem is known as policy
evaluation.  The value function evaluated at a given state measures
the expected long-term reward accumulated by starting at that state
and running the underlying Markov chain.  In applications, this value
function encodes crucial information about the MRP.  For example,
there are MRPs in which the value function corresponds to the
probability of a power grid failing~\cite{frank2008reinforcement}, the
taxi times of flights in an airport~\cite{balakrishna2008estimating},
or the value of a board configuration in a game of
Go~\cite{silver2007reinforcement}.  Moreover, policy evaluation is an
important component of many policy optimization algorithms for
reinforcement learning, which use it as a sub-routine while searching
for good policies to deploy in the environment.

The focus of this paper is on understanding the policy evaluation
problem in finite-state (or tabular) MRPs in an instance-dependent manner, focusing
on the the generative setting in which the agent has access to a
simulator that generates samples from the underlying MRP.  In
particular, we would like guarantees on the \emph{sample complexity}
of policy evaluation---defined as the number of samples required to
obtain a value function estimate of some pre-specified error
tolerance---as a function of the agent's environment, i.e., the
transition and reward functions induced by the policy being evaluated.
Local guarantees of this form provide more guidance for algorithm
design in finite sample settings than their worst-case counterparts.
Indeed, this viewpoint underpins the important sub-field of local
minimax complexity studied widely in the statistics and optimization
literatures (e.g.,~\cite{cai2004adaptation,zhu2016local, WeiWai17}),
as well as in more recent work on online reinforcement learning
algorithms~\cite{zanette2019tighter}.

As a natural first step towards providing local guarantees for the
policy evaluation problem, we analyze the plug-in estimator for the
problem, which estimates the underlying transition and reward
functions from the samples, and outputs the value function of the MRP
in which these estimates correspond to the ground truth parameters. 
We also analyze a robust variant
of this approach, and provide minimax lower bounds that hold over
subsets of the parameter space.


\begin{table}[]
\resizebox{\textwidth}{!}{%
\begin{tabular}{|c|c|c|c|c|c|c|}
\hline
\begin{tabular}[c]{@{}c@{}} \text{} \\ \textbf{Problem} \\ \text{ } \end{tabular} &
  \textbf{Algorithm} &
  \textbf{Paper} &
  \textbf{Model} &
  \textbf{Sample-size} &
  \textbf{Guarantee} &
  \textbf{Technique} \\ \hline
\multirow{5}{*}{\begin{tabular}[c]{@{}c@{}}State-action \\ value \\ estimation \\ in MDPs\end{tabular}} &
  \multirow{2}{*}{Plug-in} &
  \cite{KeaSin99}, \cite{Kak03} &
  Synchronous &
  Non-asymptotic &
  Global, $\ell_\infty$ &
  Hoeffding \\ \cline{3-7} 
 &
   &
  \cite{AzaMunKap13} &
  Synchronous &
  Non-asymptotic &
  Global, $\ell_\infty$ &
  Bernstein \\ \cline{2-7} 
 &
  \multirow{3}{*}{\begin{tabular}[c]{@{}c@{}}Stochastic\\ approximation:\\ $Q$-learning \&\\ variants\end{tabular}} &
 \cite{borkar2000ode} &
  Synchronous &
  Asymptotic &
  \begin{tabular}[c]{@{}c@{}}Global, \\ conv. in dist.\end{tabular} &
  ODE method \\ \cline{3-7} 
 & &
  \cite{devraj2017fastest} &
  Synchronous &
  Asymptotic &
  \begin{tabular}[c]{@{}c@{}}Local, \\ conv. in dist.\end{tabular} &
  \begin{tabular}[c]{@{}c@{}}Asymptotic \\ normality \end{tabular} \\ \cline{3-7} 
 &
   &
   \begin{tabular}[c]{@{}c@{}} \cite{wainwright2019stochastic}, \\ \cite{chen2020finite} \end{tabular}
&
  Synchronous &
  Non-asymptotic &
  Local, $\ell_\infty$ &
  \begin{tabular}[c]{@{}c@{}}Bernstein, \\ Moreau envelope\end{tabular} \\ \cline{3-7} 
 &
   &
  \begin{tabular}[c]{@{}c@{}}\cite{Aza11}, \\ \cite{Sid18a}, \\ \cite{wainwright2019variance}\end{tabular} &
  Synchronous &
  Non-asymptotic &
  Global, $\ell_\infty$ &
  \begin{tabular}[c]{@{}c@{}}Bernstein, \\ variance reduction\end{tabular} \\ \hline
\multirow{3}{*}{\begin{tabular}[c]{@{}c@{}}Optimal \\ value \\ estimation \\ in MDPs\end{tabular}} &
  \multirow{2}{*}{Plug-in} &
  \cite{AzaMunKap13} &
  Synchronous &
  Non-asymptotic &
  Global, $\ell_\infty$ &
  Bernstein \\ \cline{3-7} 
 &
   &
  \cite{agarwal2019optimality} &
  Synchronous &
  Non-asymptotic &
  Global, $\ell_\infty$ &
  \begin{tabular}[c]{@{}c@{}}Bernstein + \\ decoupling\end{tabular} \\ \cline{2-7} 
 &
  \begin{tabular}[c]{@{}c@{}}Stochastic\\ approximation\end{tabular} &
  \cite{Sid18b} &
  Synchronous &
  Non-asymptotic &
  Global, $\ell_\infty$ &
  \begin{tabular}[c]{@{}c@{}}Bernstein + \\ variance reduction\end{tabular} \\ \hline
\multirow{6}{*}{\begin{tabular}[c]{@{}c@{}}Policy \\ evaluation \\ in MRPs\end{tabular}} &
  \textbf{\red{Plug-in}} &
  \textbf{\red{Current paper}} &
  \textbf{\red{Synchronous}} &
  \textbf{\red{Non-asymptotic}} &
  \textbf{\red{Local, $\ell_\infty$}} &
  \textbf{\red{\begin{tabular}[c]{@{}c@{}}Bernstein + \\ leave-one-out\end{tabular}}} \\ \cline{2-7} 
 &
  \multirow{2}{*}{\begin{tabular}[c]{@{}c@{}}Stochastic \\ approximation:\\ TD-learning\end{tabular}} &
  \begin{tabular}[c]{@{}c@{}}\cite{tadic2004almost}, \cite{polyak1992acceleration}, \\ \cite{jaakkola1994convergence}, \cite{borkar2000ode}, \\ \cite{devraj2017fastest} \\ \end{tabular} &
  \begin{tabular}[c]{@{}c@{}}Synchronous,\\ trajectories\end{tabular} &
  Asymptotic &
  \begin{tabular}[c]{@{}c@{}}Local, $\ell_2$ and\\ conv. in dist.\end{tabular} &
  \begin{tabular}[c]{@{}c@{}}Averaging, \\ ODE method\end{tabular} \\ \cline{3-7} 
 &
   &
   \begin{tabular}[c]{@{}c@{}}\cite{lakshminarayanan2018linear}, \cite{bhandari2018finite}, \\ \cite{srikant2019finite} \end{tabular}
 &
  \begin{tabular}[c]{@{}c@{}}Synchronous,\\ trajectories\end{tabular} &
  Non-asymptotic &
  Global, $\ell_2$ &
  \begin{tabular}[c]{@{}c@{}}Averaging, \\ martingales\end{tabular} \\ \cline{2-7} 
 &
  \multirow{2}{*}{\begin{tabular}[c]{@{}c@{}}TD-learning \\ with function\\ approximation\end{tabular}} &
  \begin{tabular}[c]{@{}c@{}}\cite{tsitsiklis1997analysis}, \\ \cite{ueno2008semiparametric} \end{tabular} &
  Trajectories &
  Asymptotic &
  \begin{tabular}[c]{@{}c@{}}Global oracle \\ inequality \\ Local, \\ conv. in dist.\end{tabular} &
  \begin{tabular}[c]{@{}c@{}}Asymptotic \\ normality \end{tabular} \\ \cline{3-7} 
 &
   &
   \begin{tabular}[c]{@{}c@{}}\cite{bhandari2018finite}, \\ \cite{doan2019finite}, \\ \cite{dalal2018finite} \end{tabular}
&
  \begin{tabular}[c]{@{}c@{}}Synchronous,\\ trajectories\end{tabular} &
  Non-asymptotic &
  \begin{tabular}[c]{@{}c@{}}Global and \\ local, $\ell_2$\end{tabular} &
  \begin{tabular}[c]{@{}c@{}}Population to\\ sample\end{tabular} \\ \cline{2-7}
&
\begin{tabular}[c]{@{}c@{}}\textbf{\red{Median of}} \\ \textbf{\red{means}} \end{tabular} &
  \textbf{\red{Current paper}} &
  \textbf{\red{Synchronous}} &
  \textbf{\red{Non-asymptotic}} &
  \textbf{\red{Local, $\ell_\infty$}} &
  \textbf{\red{Robustness}} \\ \hline
\end{tabular}%
}
\caption{A subset of results in the tabular and infinite-horizon discounted
setting, both for policy evaluation in MRPs and policy optimization in MDPs. 
For a broader overview of results, see Gosavi~\cite{gosavi2009reinforcement} for the setting of infinite-horizon average reward, and Dann and Brunskill~\cite{dann2015sample} for the episodic setting.
The ``technique'' vertical of the table is only meant to showcase a representative
subset of those employed. Our contributions are highlighted in red.}
\label{tab:related-work}
\end{table}

\paragraph{Related work:}
Markov reward processes have a rich history originating in the theory
of Markov chains and renewal processes; we refer the reader to the
classical books~\cite{Feller2} and~\cite{durrett1999essentials} for
introductions to the subject. The policy evaluation problem has seen
considerable interest in the stochastic control and reinforcement
learning communities, and various algorithms have been analyzed in
both asymptotic~\cite{borkar1998asynchronous,tadic2004almost} and
non-asymptotic~\cite{lakshminarayanan2018linear,srikant2019finite}
settings. Chapter~3 of the monograph by Szepesv\'{a}ri~\cite{Sze09}
provides a brief introduction to these methods, and the recent survey by Dann
et al.~\cite{dann2014policy} focuses on methods based on temporal
differences~\cite{sutton1988learning}.

In the language of temporal difference (TD) algorithms, the plug-in
approach that we analyze corresponds to the least squares temporal
difference (LSTD) solution~\cite{bradtke1996linear} in the tabular
setting, without function approximation.  While TD algorithms for
policy evaluation have been analyzed by many previous papers, their
focus is typically either on (i) how function approximation affects
the algorithm~\cite{tsitsiklis1997analysis}, (ii) asymptotic
convergence guarantees~\cite{borkar1998asynchronous,tadic2004almost}
or (iii) establishing convergence rates in metrics of the
$\ell_2$-type~\cite{tadic2004almost,
  lakshminarayanan2018linear,srikant2019finite}.  Since $\ell_2$-type
metrics can be associated with an inner product, many specialized
analyses can be ported over from the literature on stochastic
optimization (e.g.,~\cite{BacMou11,NemJudLanSha09}).\footnote{Here we
  have only referenced some representative papers; see the references
  in Szepesv\'{a}ri~\cite{Sze09} for a broader overview.} On the other
hand, our focus is on providing non-asymptotic guarantees in the
$\ell_{\infty}$-error metric, since these are
particularly compatible with policy iteration methods.  In particular,
policy iteration can be shown to converge at a geometric rate when
combined with policy evaluation methods that are accurate in
$\ell_\infty$-norm (e.g.,
see the books~\cite{agarwal2019reinforcement,bertsekas1996neuro}).
Also, given that we are interested in
fine-grained, instance-dependent guarantees, we first study the
problem without function approximation.

As briefly alluded to before, there has also been some recent focus on
obtaining instance-dependent guarantees in online reinforcement
learning settings~\cite{maillard2014hard,simchowitz2019non,zanette2019almost}. 
These
analyses have led to more practically applicable algorithms for certain
episodic MDPs~\cite{zanette2019tighter,jiang2018open} that
improve upon worst-case bounds~\cite{AzaOsbMun17}.  Recent work has
also established some instance-dependent bounds for the problem of
state-action value function estimation in Markov decision processes,
for both ordinary $Q$-learning~\cite{wainwright2019stochastic} and a
variance-reduced improvement~\cite{wainwright2019variance}.  However,
we currently lack the localized lower bounds that would allow us to
understand the fundamental limits of the problem in a more local
sense, except in some special cases for asymptotic settings; for
instance, see Ueno et al.~\cite{ueno2008semiparametric} and Devraj and
Meyn~\cite{devraj2017fastest} for bounds of this type for LSTD and
stochastic approximation, respectively.  We hope that our analysis of
the simpler policy evaluation problem will be useful in broadening the
scope of such guarantees.

Portions of our analysis exploit a decoupling that is induced by a
leave-one-out technique.  We note that leave-one-out techniques are
frequently used in probabilistic analysis
(e.g.,~\cite{bousquet2002stability,de2012decoupling,ma2017implicit}).
In the context of Markov processes, arguments that are related to but
distinct from those in this paper have been used in analyzing
estimates of the stationary distribution of a Markov
chain~\cite{chen2019spectral}, and for analyzing optimal policies in
reinforcement learning~\cite{agarwal2019optimality}.

For the reader's convenience, we have collected many of the relevant
results both in policy optimization and evaluation in
Table~\ref{tab:related-work}, along with the settings and sample-size
regimes in which they apply, the nature of the guarantee, and the
salient techniques used.


\paragraph{Contributions:}

We study the problem of estimating the infinite-horizon, discounted
value function of a tabular MRP in $\ell_\infty$-norm, assuming access
to state transitions and reward samples under the generative
model. Our first main result, Theorem~\ref{thm:plugin}, analyzes the
plug-in estimator, showing two types of guarantees: on one hand, we
derive high-probability upper bounds on the error that can be computed
based on the observed data, and on the other, we show upper bounds
that depend on the underlying (unknown) population transition matrix
and reward function. The latter result is achieved via a decoupling
argument that we expect to be more broadly applicable to problems of
this type.

Corollary~\ref{cor:azar} then specializes the population-based result
in Theorem~\ref{thm:plugin} to natural sub-classes of
MRPs. Theorem~\ref{thm:lb} provides minimax lower bounds for these
sub-classes, showing---in conjunction with
Corollary~\ref{cor:azar}---that the plug-in approach is minimax
optimal over the class of MRPs with uniformly bounded reward
functions. However, these results suggest that the plug-in approach is
\emph{not} minimax-optimal over the class of MRPs having value
functions with bounded variance under the transition model (this
notion is defined precisely in Section~\ref{sec:main-results} to
follow).  Consequently, we analyze an approach based on the
median-of-means device and show that this modified estimator is
minimax optimal over the class of MRPs having value functions with
bounded variance.

\begin{figure}[ht]
  \begin{centering}
  \includegraphics[scale = 0.4]{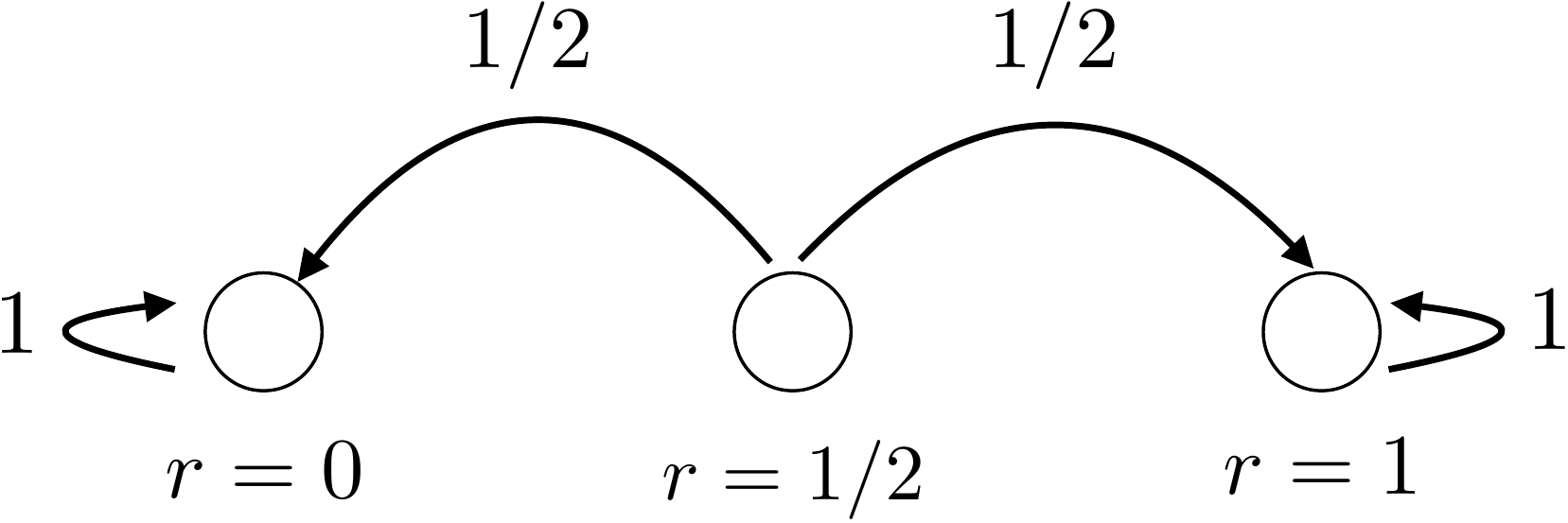}
  \caption{A simple $3$-state Markov reward process.}
  \label{fig:simple}
  \end{centering}
\end{figure}
The benefits of our instance-dependent guarantees are even evident in
a model as simple as the $3$-state MRP illustrated in
Figure~\ref{fig:simple}.  Suppose that we observe noiseless rewards of
this MRP and wish to compute its infinite-horizon value function with
discount factor $\discount \in (0, 1)$. Bounds based on the
contractivity of the Bellman
operator~\cite{KeaSin99,Kak03,wainwright2019stochastic} imply that the
$\ell_\infty$-error of the plug-in estimate scales proportionally to
$1 / (1 - \discount)^{2}$. The worst-case bounds of Azar et
al.~\cite{AzaMunKap13} imply a rate $1 / (1 - \discount)^{3/2}$. But
the optimal local result captured in this paper shows that the error
is only proportional to $1 / (1 - \discount)$. For a discount factor
$\discount = 0.99$, this improves the previous bounds by factors of
$100$ and $10$, respectively, and consequently, the respective sample
complexities by factors of $10^4$ and $10^2$. Instance-dependent
results therefore allow us to differentiate problems that are
``solvable'' with finite samples from those that are not.

\paragraph{Notation:}
For a positive integer $n$, let $[n] \defn \{1, 2, \ldots, n\}$. For a
finite set $S$, we use $|S|$ to denote its cardinality.  We use $c, C,
c_1, c_2, \dots$ to denote universal constants that may change from
line to line.  We use the convenient shorthand $a \lor b \defn \max\{
a, b\}$ and $a \land b = \min\{ a, b \}$.  We let $\ones$ denote the
all-ones vector in $\real^{\Dim}$, and abusing notation slightly, we
let $\ind{\Espace}$ denote the indicator of an event~$\Espace$. Let
$e_j$ denote the $j$th standard basis vector in $\real^{\Dim}$.  We
let $v_{(i)}$ denote the $i$-th order statistic of a vector $v$, i.e.,
the $i$-th largest entry of~$v$.  For a pair of vectors $(u, v)$ of
compatible dimensions, we use the notation $u \preceq v$ to indicate
that the difference vector $v - u$ is entry-wise non-negative. The
relation $u \succeq v$ is defined analogously. We let $|u|$ denote the
entry-wise absolute value of a vector $u \in \real^{\Dim}$; squares
and square-roots of vectors are, analogously, taken entrywise. Note
that for a positive scalar $\lambda$, the statements $|u | \preceq
\lambda \cdot \ones$ and $\| u \|_{\infty} \leq \lambda$ are
equivalent. Finally, we let $\| \Mmat \|_{1, \infty}$ denote the
maximum $\ell_1$-norm of the rows of a matrix $\Mmat$, and refer to it
as the $(1, \infty)$-operator norm of a matrix.


\section{Background and Problem Formulation}
\label{SecBackground}

In this section, we introduce the basic notation required to specify a
Markov reward process, and formally define the problem of estimating
value functions in the generative setting.

\subsection{Markov reward processes and value functions}

We study Markov reward processes defined on a finite set $\StateSpace$
of states, indexed as $\StateSpace = \{1, 2, \ldots, \Dim \}$. The
state evolution over time is determined by a set of transition
functions $\{ P(\; \cdot \; \mid \state), \; \; \state \in \StateSpace
\big\}$, with the transition from state $\state$ to the next state
being randomly chosen according to the distribution $P( \, \cdot \,
\mid \state)$.  For notational convenience, we let $\Pmat \in [0
  ,1]^{\Dim \times \Dim}$ denote a row stochastic (Markov) transition
matrix, where row $j$ of this matrix---which we denote by
$\pvec{j}$---collects the transition function of the $j$-th state.
Also associated with an MRP is a \emph{population} reward function
$\reward: \StateSpace \mapsto \real$: transitioning from state
$\state$ results in the reward $\reward (\state)$. For convenience, we
engage in a minor abuse of notation by letting $\reward$ also denote a
vector of length $\Dim$, with $\reward_j$ corresponding to the reward
obtained at the $j$-th state.

In this paper, we consider the infinite-horizon, discounted reward as
our notion for the long-term value of a state in the MRP. 
In particular, for a scalar discount factor $\discount \in (0, 1)$,
the long-term value of state $\state$ in the MRP is given by
\begin{align*}
\ValueFunc^* (\state) \defn \EE \left[ \sum_{k = 0}^\infty \discount^k
  \reward(\state_k) \; \st \; \state_0 = \state \right], \qquad \text{
  where } \state_k \sim P(\; \cdot \; | \; \state_{k - 1}) \text{ for all
} k \geq 1.
\end{align*} 
In words, this measures the expected discounted reward obtained by
starting at the state $\state$, where the expectation is taken with
respect to the random transitions over states.  Once again, we use
$\ValueFunc^*$ to also denote a vector of length $\Dim$, where
$\ValueFunc^*_j$ corresponds to the value of the $j$-th state.

A note to the reader: in the sequel, we often reference a state simply
by its index, and often refer to the state space $\Xspace \equiv
[\Dim]$. Accordingly, we also use $P(\; \cdot \; \mid j)$ to denote
the transition function corresponding to state $j \in [\Dim]$.


\subsection{Observation model}

Given access to the true transition and reward functions, it is
straightforward, at least in principle, to compute the value function.
By definition, it is the unique solution of the Bellman fixed point
relation
\begin{align}
\label{eq:Bellman-true}
\ValueFunc^* = \reward + \discount \Pmat \ValueFunc^*.
\end{align} 
In the learning setting, the pair $(\Pmat, \reward)$ is unknown, and
we instead assume access to a black box that generates samples from
the transition and reward functions.  In this paper, we operate under
a setting known as the \emph{synchronous or generative setting}; it is
a stylized observation model that has been used extensively in the
study of Markov decision processes (see Kearns and Singh~\cite{KeaSin99}
for an introduction).  Let us introduce
it in the context of MRPs: for a given sample index $k = 1, 2, \ldots,
\Nsamp$ and for each state $j \in [\Dim]$, we observe a random next
state $X_{k,j} \in [\Dim]$ drawn according to the transition function
$P( \; \cdot \; \mid \; j)$, and a random reward $\Reward_{k,j}$ drawn
from a conditional distribution $\rdist(\, \cdot \, \mid j)$.
Throughout, we assume that the rewards are generated independently
across states, with $\EE[ \Reward_{k,j} ] = \rstar_j$.
Letting
$\sigmar$ denote a non-negative vector indexed by the states $j \in
           [\Dim]$, we assume the conditional distributions $\{
           \rdist(\, \cdot \, \mid j), \; j \in [\Dim] \}$ are
           $\sigmar$-sub-Gaussian, meaning that for each $j \in
                 [\Dim]$, we have
\begin{align}
  \label{eq:reward-sg}
\EE_{R \sim \rdist(\, \cdot \, \mid j)} \Big[ e^{\lambda (R -
    \rstar_j)} \Big] \leq e^{ \frac{\lambda^2 \sigmarsqindex}{2}}
\qquad \mbox{for all $\lambda \in \real$.}
\end{align}
With $\Nsamp$ such i.i.d. samples in hand, our goal is to estimate the value
function $\ValueFunc^*$ in the
$\ell_\infty$-error metric.

Such a goal is particularly relevant to the policy evaluation
problem described in the introduction, since
$\ell_\infty$-estimates of the value function can be used in conjunction with a policy
improvement sub-routine to eventually arrive at an optimal
policy (see, e.g., Section 1.2.2. of the recent monograph~\cite{agarwal2019reinforcement}). 
We note in passing that
bounds proved under the generative model may be translated into the
more challenging online setting via the notion of Markov cover times
(see, e.g., the papers~\cite{EveMan03,Aza11} for conversions of this
type for Markov decision processes).


\section{Main results}
\label{sec:main-results}

We now turn to the statement and discussion of our main results.  We
begin by providing \mbox{$\ell_\infty$-guarantees} on value function
estimation for the natural plug-in approach.

\subsection{Guarantees for the plug-in approach}

A natural approach to this problem is use the observations to
construct estimates $(\Phat, \rhat)$ of the pair $(\Pmat, \rstar)$,
and then substitute or ``plug in'' these estimates into the Bellman
equation, thereby obtaining the value function of the MRP having
transition matrix $\Phat$ and reward vector $\rhat$.

In order to define the plug-in estimator, let us introduce some
helpful notation.  For each time index $k$, we use the associated set
of state samples $\{X_{k,j}, j \in [\Dim] \}$ to form a random binary
matrix $\Zmatsub{k} \in \{0,1\}^{\Dim \times \Dim}$, in which row $j$
has a single non-zero entry, determined by the sample $X_{k,j}$.
Thus, the location of the non-zero entry in row $j$ is drawn from the
probability distribution defined by $\pvec{j}$, the $j$-th row of
$\Pmat$.  Recall that our observations also include the stochastic
reward vectors $\{\Reward_k\}_{k=1}^\Nsamp$ sampled from the reward
distribution $\rdist$.  Based on these observations, we define the
sample means
\begin{align}
\label{eq:plugin}
\Phat = \frac{1}{\Nsamp} \sum_{k = 1}^{\Nsamp} \Zmatsub{k} \quad
\text{ and } \quad \rhat = \frac{1}{\Nsamp} \sum_{k = 1}^{\Nsamp}
\Reward_k,
\end{align}
which can be seen as unbiased estimates of the transition matrix $\Pmat$
and the reward vector $\reward$, respectively.

The estimates $(\Phat, \rhat)$ define a new MRP, and its value
function is given by the fixed point relation
\begin{align}
  \label{eq:Bellman-est}
\ValueFunchat = \rhat + \discount \Phat \ValueFunchat.
\end{align}
Solving this fixed point equation, we obtain the closed form
expression $\ValueFunchat = (\Id - \discount \Phat)^{-1} \rhat$ for
the plug-in estimator.  Note that the terminology ``plug-in'' arises
the fact that $\ValueFunchat$ is obtained by substituting the
estimates $(\Phat, \rhat)$ into the original Bellman
equation~\eqref{eq:Bellman-true}.  We also note that in this special
case---that is, the tabular setting without function
approximation---the plug-in estimate is equivalent to the LSTD
solution~\cite{bradtke1996linear,boyan2002technical}.

In order to establish guarantees for the estimator $\ValueFunchat$, we
require some additional notation.  As mentioned before, we are
interested in non-asymptotic, instance-dependent guarantees of two
types: the first is a bound that can be evaluated in practice from the
observed data, and the second is a guarantee that depends on the
unknown population quantities $\Pmat$ and $\reward$.  For each vector
$\theta \in \real^{\Dim}$, define the vector of empirical variances
\begin{align*}
\sighat^2(\theta) & = \Ehat \left| (\Zmat - \Phat) \theta \right|^2,
\end{align*}
where $\Ehat$ denotes expectation over the empirical distribution
(i.e., the random matrix $\Zmat$ is drawn uniformly at random from the
set $\{\Zmatsub{k}\}_{k=1}^\Nsamp$).  Note that given $\theta$, this
quantity is computable purely from the observed samples.
On the other hand, the population result will involve the population
variance vector
\begin{align*}
\sigma^2(\theta) & = \Exs \left| (\Zmat - \Pmat) \theta
\right|^2,
\end{align*}
where in this case $\Zmat$ is drawn according to the population model
$\Pmat$.  As a final definition, the span semi-norm of a value
function $\theta$ is given by
\begin{align*}
\| \theta \|_{\myspan} \defn \max_{\state \in \StateSpace}
\theta(\state) - \min_{\state \in \StateSpace} \theta(\state).
\end{align*}
Equivalently, the span semi-norm is equal to the variation of the
vector $\theta \in \real^{\Dim}$; see Puterman~\cite{Puterman05} for
more details. \\

\noindent We now ready to state our main result for the plug-in
estimator.
\begin{theorem} \label{thm:plugin}
There is a pair of universal constants $(\unicon_1, \unicon_2)$ such
that if $\Nsamp \geq \unicon_1 \frac{
  \discount^2}{(1-\discount)^2} \log(8 \Dim/\delta)$, then
  each of the following statements holds with probability at least $1 -
\delta$.
\begin{enumerate}
\item[(a)] We have
  \begin{subequations}
  \label{eq:thm1}
    \begin{align}
      \label{eq:thm1-data}
\hspace{-8mm} \|\ValueFunchat - \ValueFunc^* \|_\infty &\leq \unicon_2
\left\{ \sqrt{\frac{\log(8 \Dim/\delta)}{\Nsamp}} \left( \discount
\|(\IdMat - \discount \Phat)^{-1} \sighat (\ValueFunchat)\|_\infty +
\frac{\|\sigmar\|_\infty}{ 1 - \discount} \right) + \frac{\log(8D /
  \delta)}{N} \cdot \frac{\discount \| \ValueFunchat \|_{\myspan}}{1 -
  \discount} \right\}.
    \end{align}
\item[(b)] We have
  \begin{align}
    \label{eq:thm1-pop}
\hspace{-8mm} \|\ValueFunchat - \ValueFunc^* \|_\infty & \leq
\unicon_2 \left\{ \sqrt{\frac{\log(8D / \delta)}{N} } \left( \discount
\left \| (\IdMat - \discount \Pmat)^{-1} \sigma(\ValueFunc^*)
\right\|_\infty + \frac{\|\sigmar\|_\infty}{1 - \discount} \right) +
\frac{\log(8D / \delta)}{N} \cdot \frac{\discount \| \ValueFunc^*
  \|_{\myspan}}{1 - \discount} \right\}.
  \end{align}
  \end{subequations}
\end{enumerate}
\end{theorem}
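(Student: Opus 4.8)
The argument controls the error vector $\DelHat \defn \ValueFunchat - \ValueFunc^*$ by combining the Bellman relations~\eqref{eq:Bellman-true} and~\eqref{eq:Bellman-est}. Since $\Phat$ (and $\Pmat$) is row-stochastic, the matrix $\discount\Phat$ has $(1,\infty)$-operator norm $\discount<1$, so $(\IdMat - \discount\Phat)^{-1}=\sum_{t\ge 0}\discount^t\Phat^t$ is a well-defined, entrywise nonnegative matrix whose every row sums to $1/(1-\discount)$; the same holds with $\Phat$ replaced by $\Pmat$. Subtracting the two Bellman relations and rearranging in two ways yields the pair of identities
\begin{align*}
\DelHat \;=\; (\IdMat - \discount\Pmat)^{-1}\big[(\rhat - \reward) + \discount(\Phat - \Pmat)\ValueFunchat\big] \;=\; (\IdMat - \discount\Phat)^{-1}\big[(\rhat - \reward) + \discount(\Phat - \Pmat)\ValueFunc^*\big].
\end{align*}
For part (b), substituting $\ValueFunchat = \ValueFunc^* + \DelHat$ into the first identity gives the self-referential decomposition $\DelHat = T_1 + T_2$, with $T_1 \defn (\IdMat - \discount\Pmat)^{-1}[(\rhat - \reward) + \discount(\Phat - \Pmat)\ValueFunc^*]$ and $T_2 \defn \discount\,(\IdMat - \discount\Pmat)^{-1}(\Phat - \Pmat)\DelHat$. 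The plan is to bound $\|T_1\|_\infty$ by the right-hand side of~\eqref{eq:thm1-pop} and to show $\|T_2\|_\infty \le \tfrac12\|\DelHat\|_\infty$ up to terms of the same order, so that rearranging proves~\eqref{eq:thm1-pop}; part (a) then follows by running the analogous argument from the second identity (with resolvent $(\IdMat - \discount\Phat)^{-1}$ and noise term involving $\ValueFunchat$), or by converting population quantities to empirical ones.

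\emph{Bounding $T_1$.} Fix a state $j$ and set $u_j^\top \defn e_j^\top(\IdMat - \discount\Pmat)^{-1}$, a \emph{deterministic} nonnegative vector with $\|u_j\|_1 = 1/(1-\discount)$. Then $u_j^\top(\Phat - \Pmat)\ValueFunc^* = \tfrac{1}{\Nsamp}\sum_{k=1}^\Nsamp u_j^\top(\Zmatsub{k} - \Pmat)\ValueFunc^*$ is an average of i.i.d.\ mean-zero scalars, each bounded in magnitude by $\|u_j\|_1\,\|\ValueFunc^*\|_{\myspan}$ and with variance at most $\big(u_j^\top\sigma(\ValueFunc^*)\big)^2 \le \big\|(\IdMat - \discount\Pmat)^{-1}\sigma(\ValueFunc^*)\big\|_\infty^2$, the last bound being the triangle inequality for the standard deviation of a weighted sum. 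Bernstein's inequality together with a union bound over $j\in[\Dim]$ controls $\big\|(\IdMat - \discount\Pmat)^{-1}(\Phat - \Pmat)\ValueFunc^*\big\|_\infty$ by the first and third terms of~\eqref{eq:thm1-pop} (after multiplying by $\discount$); applying the same reasoning to $u_j^\top(\rhat - \reward)$, an average of mean-zero $\big(\|u_j\|_1\|\sigmar\|_\infty\big)$-sub-Gaussian scalars, produces the $\tfrac{\|\sigmar\|_\infty}{1-\discount}\sqrt{\log(8\Dim/\delta)/\Nsamp}$ contribution.

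\emph{Bounding $T_2$: the leave-one-out step.} This is the crux. One cannot bound $\|(\Phat - \Pmat)\DelHat\|_\infty$ by $\|\Phat - \Pmat\|_{1,\infty}\|\DelHat\|_\infty$, because controlling the random operator norm $\|\Phat - \Pmat\|_{1,\infty}$ would need $\Nsamp \gtrsim \Dim$, far beyond the hypothesized sample size. Instead, expanding $e_j^\top(\Phat - \Pmat)\DelHat = \sum_\ell (u_j)_\ell\,(\phat_\ell - \pvec{\ell})^\top\DelHat$, we decouple each inner product from the samples $\{X_{k,\ell}\}_{k=1}^\Nsamp$ drawn at state $\ell$: let $\Phat^{(\ell)}$ and $\rhat^{(\ell)}$ agree with $\Phat,\rhat$ except that row $\ell$ (respectively entry $\ell$) is replaced by its population value, put $\ValueFunchat^{(\ell)} \defn (\IdMat - \discount\Phat^{(\ell)})^{-1}\rhat^{(\ell)}$ and $\DelHat^{(\ell)} \defn \ValueFunchat^{(\ell)} - \ValueFunc^*$, which is independent of $\{X_{k,\ell},\Reward_{k,\ell}\}_k$. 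Splitting $(\phat_\ell - \pvec{\ell})^\top\DelHat = (\phat_\ell - \pvec{\ell})^\top\DelHat^{(\ell)} + (\phat_\ell - \pvec{\ell})^\top(\DelHat - \DelHat^{(\ell)})$, the first piece concentrates by Bernstein — conditionally on $\DelHat^{(\ell)}$ it is an average of i.i.d.\ mean-zero terms of variance at most $\|\DelHat^{(\ell)}\|_{\myspan}^2/(4\Nsamp)$ — while the perturbation is handled via the rank-one identity $\Phat - \Phat^{(\ell)} = e_\ell(\phat_\ell - \pvec{\ell})^\top$: the resolvent identity gives $\DelHat - \DelHat^{(\ell)} = (\IdMat - \discount\Phat)^{-1}e_\ell\big[\discount(\phat_\ell - \pvec{\ell})^\top\ValueFunchat^{(\ell)} + (\rhat_\ell - \reward_\ell)\big]$, and $(\phat_\ell - \pvec{\ell})^\top\ValueFunchat^{(\ell)}$ again concentrates, being unchanged when $\ValueFunchat^{(\ell)}$ is shifted by a multiple of $\ones$ (both $\phat_\ell,\pvec{\ell}$ are probability vectors) and $\ValueFunchat^{(\ell)}$ being independent of the state-$\ell$ samples. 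A preliminary crude bound — from the identities, the operator-norm bound $1/(1-\discount)$ on the resolvent, and the $T_1$ concentration — shows $\|\DelHat\|_\infty$ and each $\|\DelHat^{(\ell)}\|_\infty$ is $\order\big(\|\ValueFunc^*\|_{\myspan} + \|\sigmar\|_\infty/(1-\discount)\big)$, closing the bootstrap. Reassembling over $\ell$ with weights $u_j$, taking a union bound over $j$, and using $\Nsamp \ge \unicon_1\tfrac{\discount^2}{(1-\discount)^2}\log(8\Dim/\delta)$ (which makes $\tfrac{\discount}{1-\discount}\sqrt{\log(8\Dim/\delta)/\Nsamp}$ a small constant) then yields $\|T_2\|_\infty \le \tfrac12\|\DelHat\|_\infty$ plus terms dominated by the right-hand side of~\eqref{eq:thm1-pop}; rearranging proves part (b). Carrying out the bookkeeping of these perturbation and lower-order terms — ensuring in particular that the departure $\DelHat - \DelHat^{(\ell)}$ does not re-introduce an errant factor of $1/(1-\discount)$ — is the most delicate part of the analysis.

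\emph{Part (a).} Running the same scheme on the decomposition $\DelHat = (\IdMat - \discount\Phat)^{-1}\big[(\rhat - \reward) + \discount(\Phat - \Pmat)\ValueFunchat\big] - \discount(\IdMat - \discount\Phat)^{-1}(\Phat - \Pmat)\DelHat$ (obtained by substituting $\ValueFunc^* = \ValueFunchat - \DelHat$ into the second identity above) gives~\eqref{eq:thm1-data}: expanding the now-random resolvent row $e_j^\top(\IdMat - \discount\Phat)^{-1}$ entrywise and applying the per-state leave-one-out to each coordinate $(\phat_\ell - \pvec{\ell})^\top\ValueFunchat$ — whose conditional variance agrees with the \emph{empirical} variance $\sighat_\ell^2(\ValueFunchat)$ up to lower-order corrections — surfaces $\big\|(\IdMat - \discount\Phat)^{-1}\sighat(\ValueFunchat)\big\|_\infty$ and $\|\ValueFunchat\|_{\myspan}$ in the bound. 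Equivalently, one can deduce~\eqref{eq:thm1-data} from~\eqref{eq:thm1-pop} by using standard concentration of empirical variances, the Lipschitz dependence of $\sighat(\cdot)$ on its argument, the smallness of $\|\DelHat\|_\infty$, and a resolvent-perturbation comparison of $(\IdMat - \discount\Phat)^{-1}$ with $(\IdMat - \discount\Pmat)^{-1}$ to show that $\big\|(\IdMat - \discount\Pmat)^{-1}\sigma(\ValueFunc^*)\big\|_\infty$ and $\|\ValueFunc^*\|_{\myspan}$ are each dominated — up to lower-order terms absorbed into $\unicon_2$ — by their empirical counterparts.
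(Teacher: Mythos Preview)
Your skeleton for part~(b) matches the paper's: the same decomposition $\DelHat = T_1 + T_2$, Bernstein for $T_1$, and a leave-one-out split $(\phat_\ell - \pvec{\ell})^\top\DelHat = (\phat_\ell - \pvec{\ell})^\top\DelHat^{(\ell)} + (\phat_\ell - \pvec{\ell})^\top(\DelHat - \DelHat^{(\ell)})$ for $T_2$. The gap is exactly where you flag it as ``delicate'': you never say how to control $(\phat_\ell - \pvec{\ell})^\top(\DelHat - \DelHat^{(\ell)})$. Your resolvent identity gives $\DelHat - \DelHat^{(\ell)} = (\IdMat - \discount\Phat)^{-1}e_\ell\cdot s_\ell$ with $s_\ell$ a scalar you can bound, but the remaining factor $(\phat_\ell - \pvec{\ell})^\top(\IdMat - \discount\Phat)^{-1}e_\ell$ still couples $\phat_\ell$ to the resolvent. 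If you bound it crudely by $\|\phat_\ell - \pvec{\ell}\|_1\cdot\|(\IdMat - \discount\Phat)^{-1}e_\ell\|_\infty \le 2/(1-\discount)$, the term $\tfrac{\discount}{1-\discount}|(\phat_\ell - \pvec{\ell})^\top\thetastar|$ survives, and after multiplying by the resolvent you end up with $\tfrac{\discount}{1-\discount}\|(\IdMat - \discount\Pmat)^{-1}\sigma(\thetastar)\|_\infty\sqrt{\log(8\Dim/\delta)/\Nsamp}$ --- one $1/(1-\discount)$ too many, and this cannot be absorbed by the sample-size assumption. The paper's fix is the Woodbury identity: write $(\IdMat - \discount\Phat)^{-1}e_\ell$ in terms of $(\IdMat - \discount\Phat^{(\ell)})^{-1}e_\ell$ plus a rank-one correction, so that the quantity $Z_\ell \defn \discount(\phat_\ell - \pvec{\ell})^\top(\IdMat - \discount\Phat^{(\ell)})^{-1}e_\ell$ is an inner product of \emph{independent} pieces and Hoeffding gives $|Z_\ell| \le \tfrac{c\discount}{1-\discount}\sqrt{\log(8\Dim/\delta)/\Nsamp} \le 1/4$. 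This is the missing idea; without it the bookkeeping cannot close.

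For part~(a) you are working harder than necessary. The paper does \emph{not} run leave-one-out here. It uses the identity with $\thetastar$ (not $\thetahat$) on the right, namely $\DelHat = (\IdMat - \discount\Phat)^{-1}[(\rhat - \reward) + \discount(\Phat - \Pmat)\thetastar]$, so that Bernstein applies directly to the deterministic-target term $(\Phat - \Pmat)\thetastar$ and yields a bound in $\|(\IdMat - \discount\Phat)^{-1}\sigma(\thetastar)\|_\infty$ and $\|\thetastar\|_{\myspan}$. The only remaining work is a population-to-empirical variance comparison: one shows $\sigma(\thetastar) \preceq 2\sighat(\thetahat) + 2|\DelHat| + c\|\thetastar\|_{\myspan}\sqrt{\log(8\Dim/\delta)/\Nsamp}\cdot\ones$, multiplies through by the nonnegative matrix $(\IdMat - \discount\Phat)^{-1}$, substitutes, and rearranges (using $\|\thetastar\|_{\myspan} \le \|\thetahat\|_{\myspan} + 2\|\DelHat\|_\infty$). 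Your Option~1 for part~(a) --- doing a per-state leave-one-out with the random resolvent and claiming $(\phat_\ell - \pvec{\ell})^\top\ValueFunchat$ has ``conditional variance'' $\sighat_\ell^2(\ValueFunchat)$ --- is not well-posed, since $\phat_\ell$ and $\ValueFunchat$ are dependent.
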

It is worth making a few comments on this theorem, which provides two
instance-dependent upper bounds on the error of the plug-in
approach. Assuming for simplicity of discussion\footnote{We note that
  when $\sigmar$ is \emph{not} known but the reward distribution is
  (say) Gaussian, it is straightforward to provide an entry-wise upper
  bound for it by computing the empirical standard deviation of
  rewards from samples, and using this to define a high-probability
  and data-dependent bound on the sub-Gaussian parameter.}  that the
maximum noise reward parameter $\| \sigmar \|_\infty$ is known, then
part (a) of the theorem provides a bound that can be evaluated based
on the observed data; bounds of this form are especially useful in
downstream analyses.  For instance, a central consideration in policy
iteration methods is to obtain ``good enough'' value function
estimates $\thetahat$ for fixed policies, in that we have $\|
\thetahat - \thetastar \|_{\infty} \leq \epsilon$ for some prescribed
tolerance~$\epsilon$.  Theorem~\ref{thm:plugin}(a) provides a method
by which such a bound may be verified for the plug-in approach:
compute the statistic on the RHS of bound~\eqref{eq:thm1-data}; if
this is less than $\epsilon$, then the bound $\| \ValueFunchat -
\thetastar \|_{\infty} \leq \epsilon$ holds with probability exceeding
$1 - \delta$.

On the other hand, Theorem~\ref{thm:plugin}(b) provides a guarantee
that depends on the unknown problem instance. From the perspective of
the analysis, this is the more difficult bound to establish, since it
requires a leave-one-out technique to decouple dependencies between
the estimate $\ValueFunchat$ and the matrix~$\Phat$.  
We expect our technique---presented in full in
Section~\ref{sec:pf-thm1b}---and its variants to be more broadly
useful in analyzing other problems in reinforcement learning besides
the policy evaluation problem considered here.

Third, note that our lower bound on the sample size---which evaluates
to $\Nsamp \geq \frac{\unicon_1}{(1-\discount)^2} \log(8 \Dim/\delta)$
for any strictly positive discount factor---is unavoidable in general.
In particular, for any fixed reward-noise parameter $\|\sigmar
\|_\infty > 0$, this condition is required in order to obtain a
consistent estimate of the value function.\footnote{For instance, even
  with known transition dynamics, estimating the value function of a
  single state to within additive error $\epsilon$ requires $\Omega
  \left(\tfrac{1}{(1 - \discount)^2 \epsilon^2}\right)$ samples of the
  noisy reward.}  On the other hand, in the special case of
deterministic rewards ($\|\sigmar\|_\infty = 0$), we suspect that this
condition can be weakened, but leave this for future work.

Finally, it is worth noting that there are two terms in the bounds of
Theorem~\ref{thm:plugin}: the first term corresponds to a notion of
standard deviations of the estimated/true value function and reward,
and the second depends on the span semi-norm of the value
function. Are both of these terms necessary? What is the optimal rate
at which any value function can be estimated?  These questions
motivate the analysis to be presented in the following section.


\subsection{Is the plug-in approach optimal?}

In order to study the question of optimality, we adopt the notion of
\emph{local minimax risk}, in which the performance of an estimator is
measured in a worst-case sense locally over natural subsets of the
parameter space.  Our upper bounds depend on the problem instance via
the standard deviation function $\sigma(\thetastar)$, the reward
standard deviation $\sigmar$, and the span semi-norm of $\thetastar$.
Accordingly, we define the following subsets\footnote{The following mnemonic device
may help the reader appreciate and remember notation:
the symbol $\vartheta$, or ``vartheta'', stands for a measure of the variability in the
value function $\theta$; the symbol $\varrho$, or ``varrho'', represents
the variability in reward samples, and $\rmax$ represents the maximum absolute reward mean.} of Markov reward
processes (MRPs):
\begin{subequations}
  \begin{align}
  \MspaceVar(\sigvalbd, \sigrewbd) & \defn \Big \{ \mbox{set of all MRPs s.t.
    $\|\sigma(\thetastar)\|_\infty \leq \sigvalbd$ and $\|\sigmar\|_\infty
    \leq \sigrewbd$} \Big \}, \\
  \MspaceValFun(\valbd, \sigrewbd) & \defn \Big \{ \mbox{set of all MRPs s.t.
    $\spannorm{\thetastar} \leq \valbd$ and $\|\sigmar\|_\infty \leq \sigrewbd$} \Big \}, \qquad \mbox{and}  \\
  \MspaceRew(\rmax, \sigrewbd) & \defn \Big \{ \mbox{set of all MRPs s.t.
    $\|\reward\|_\infty \leq \rmax$ and $\|\sigmar\|_\infty \leq \sigrewbd$} \Big \}.
  \end{align}
\end{subequations}
Letting $\Mspace$ be any one of these sets, 
we use the shorthand
$\theta \in \Mspace$ to mean that $\theta$ is the value function of
some MRP in the set $\Mspace$.  Each choice of the set $\Mspace$
defines the local minimax risk given by
\begin{align*}
\inf_{\thetahat} \; \sup_{ \ValueFunc^* \in \Mspace } \; \EE \left[\|
  \thetahat - \thetastar \|_\infty \right],
\end{align*}
where the infimum ranges over all measurable functions $\thetahat$ of
$\Nsamp$ observations from the generative model.  With this set-up, we
can now state some lower bounds in terms of such local minimax risks:
\begin{theorem} \label{thm:lb}
  There is a pair of absolute constants $(\unicon_1, \unicon_2)$ such
  that for all $\discount \in [\tfrac{1}{2}, 1)$ and sample sizes
    $\Nsamp \geq \frac{\unicon_1}{1 - \discount} \log
    (\Dim/2)$, the following statements hold.
\begin{enumerate}
\item[(a)] For each triple
  of positive scalars $(\sigvalbd, \valbd, \sigrewbd)$ satisfying\footnote{We
  conjecture that this lower bound can be proved under the weaker condition
  $\sigvalbd \leq \valbd$, thereby matching the condition present in Corollary~\ref{cor:azar}(a).}
  $\sigvalbd \leq \valbd \sqrt{1 - \discount}$, we have
\begin{subequations}
\begin{align} \label{eq:thm2-var}
\inf_{\thetahat} \; \sup_{ \ValueFunc^*
  \in \Mspace_{\var}(\sigvalbd, \sigrewbd) \cap \Mspace_{\val}(\valbd, \sigrewbd)} \; \EE \left[\| \thetahat
  - \ValueFunc^* \|_{\infty} \right] \geq \frac{\unicon_2}{1 -
  \discount} \sqrt{\frac{\log(\Dim/2 )}{\Nsamp}} \left( \sigvalbd +
\sigrewbd \right).
\end{align}
\item[(b)] For each pair of positive scalars $(\rmax, \sigrewbd)$
  satisfying $\rmax \geq \sigrewbd \sqrt{\frac{\log \Dim}{\Nsamp}}$, we
  have
\begin{align}
\label{eq:thm2-rew}
\inf_{\thetahat} \; \sup_{ \ValueFunc^* \in \Mspace_{\rew} (\rmax,
  \sigrewbd) } \; \EE \left[\| \thetahat - \ValueFunc^* \|_{\infty}
  \right] \geq \frac{\unicon_2}{1 - \discount}
\sqrt{\frac{\log(\Dim/2)}{\Nsamp}} \left( \frac{\rmax}{(1 -
  \discount)^{1/2}} + \sigrewbd \right).
\end{align}
\end{subequations}
\end{enumerate}
\end{theorem}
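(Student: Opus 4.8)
The plan is to reduce value-function estimation to a multiway hypothesis test and invoke Fano's inequality. For each of the two additive terms in~\eqref{eq:thm2-var} and~\eqref{eq:thm2-rew} I would build a separate finite family of MRPs, indexed by which one of $\lfloor\Dim/2\rfloor$ designated states is ``active'', such that (i) every member of the family lies in the relevant class, (ii) the value functions of two distinct members differ, at a single distinct coordinate, by a constant multiple of the term in question, and (iii) the $\Nsamp$-fold KL divergence between any two members is at most a small constant multiple of $\log\Dim$. Given these three properties, Fano's inequality yields a lower bound of the order of that term; since the minimax risk over the full class dominates that over any sub-family, and since $\max\{a,b\}\geq\tfrac12(a+b)$, adding the two contributions produces the stated bounds.

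The family responsible for the $\sigrewbd$-term in each part uses absorbing states: state $j$ transitions to itself with probability one and carries a $\NORMAL(\rstar_j,\sigrewbd^2)$ reward, so $\|\sigmar\|_\infty=\sigrewbd$ and $\sigma(\ValueFunc^*)_j=0$. Because $\ValueFunc^*_j=\rstar_j/(1-\discount)$ for an absorbing state, shifting only $\rstar_i$ by $\tau_{\mathrm{rew}}$ moves $\ValueFunc^*$ by $\tau_{\mathrm{rew}}/(1-\discount)$ in coordinate $i$; the pairwise $\Nsamp$-fold KL divergence is $\Nsamp\tau_{\mathrm{rew}}^2/(2\sigrewbd^2)$, so the choice $\tau_{\mathrm{rew}}\asymp\sigrewbd\sqrt{\log\Dim/\Nsamp}$ makes Fano applicable and gives a lower bound of order $\tfrac{1}{1-\discount}\sqrt{\tfrac{\log\Dim}{\Nsamp}}\,\sigrewbd$. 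The side hypotheses of the theorem enter exactly at the class-membership check: in part~(b) we need $\|\reward\|_\infty\leq\rmax$, which holds since $\rmax\geq\sigrewbd\sqrt{\log\Dim/\Nsamp}$; in part~(a) one shifts the baseline rewards by $\pm\tfrac12\valbd(1-\discount)$ across the two halves of the state space and perturbs ``inward'', so that $\spannorm{\ValueFunc^*}$ stays within $\valbd$ while $\sigma(\ValueFunc^*)=0$ trivially meets the variance budget.

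The more delicate family supplies the remaining term---$\rmax/(1-\discount)^{3/2}$ in part~(b) and $\sigvalbd/(1-\discount)$ in part~(a), each multiplied by $\sqrt{\log\Dim/\Nsamp}$---and is built from $\lfloor\Dim/2\rfloor$ identical two-state gadgets, each a source state $s$ that self-loops with probability $p$ and otherwise jumps to a common absorbing sink of value zero, with deterministic reward $r_s$ at the source. Solving the Bellman equation, $\ValueFunc^*(s)=r_s/(1-\discount p)$ and $\sigma(\ValueFunc^*)_s=\sqrt{p(1-p)}\,\ValueFunc^*(s)$. I would take $p\asymp\discount$, so that $1-\discount p\asymp 1-\discount$ and $\sqrt{p(1-p)}\asymp\sqrt{1-\discount}$, and choose $r_s\asymp\rmax$ in part~(b) and $r_s\asymp\sigvalbd\sqrt{1-\discount}$ in part~(a); then $\ValueFunc^*(s)$ has order $\rmax/(1-\discount)$, respectively $\sigvalbd/\sqrt{1-\discount}$, and the bound $\sigvalbd\leq\valbd\sqrt{1-\discount}$ is precisely what keeps both $\spannorm{\ValueFunc^*}$ and $\sigma(\ValueFunc^*)$ within their budgets in part~(a). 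Perturbing one source's self-loop probability by $\delta_p\asymp\sqrt{(1-\discount)\log\Dim/\Nsamp}$ changes its value by $\tfrac{\discount r_s}{(1-\discount p)^2}\,\delta_p\asymp\tfrac{r_s}{(1-\discount)^{3/2}}\sqrt{\log\Dim/\Nsamp}$---the claimed rate once $r_s$ is substituted---while the pairwise $\Nsamp$-fold KL divergence is of order $\Nsamp\delta_p^2/\bigl(p(1-p)\bigr)\asymp\log\Dim$, so Fano again applies.

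The crux---and the source of the improvement over bounds obtained from the Bellman contraction alone---is calibrating this second construction. Placing $p\approx\discount$ makes the per-sample Fisher information for the Bernoulli transition parameter small, of order $1/\bigl(p(1-p)\bigr)\asymp 1/(1-\discount)$, while the sensitivity $\partial\ValueFunc^*(s)/\partial p$ remains large, of order $1/(1-\discount)^2$; optimizing this trade-off forces $\delta_p$ of order $\sqrt{(1-\discount)\log\Dim/\Nsamp}$, and for this to be an admissible probability perturbation---so that $p\pm\delta_p\in(0,1)$ and the first-order expansion of $\ValueFunc^*(s)$ is accurate up to constants---one needs $\delta_p\lesssim 1-\discount$, that is, $\Nsamp\gtrsim\log\Dim/(1-\discount)$, which is exactly the sample-size hypothesis. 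Assembling the two families and carrying the universal constants through Fano's inequality then yields~\eqref{eq:thm2-var} and~\eqref{eq:thm2-rew}.
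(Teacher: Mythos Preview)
Your proposal is correct and follows essentially the same route as the paper: split the lower bound into a reward-noise term and a transition-noise term, handle each via Fano's inequality over a family of $\lfloor \Dim/2\rfloor$ MRPs indexed by which coordinate is perturbed, and combine. Your two-state gadget with self-loop probability $p\asymp\discount$ and perturbation $\delta_p\asymp\sqrt{(1-\discount)\log\Dim/\Nsamp}$ is exactly the paper's construction (the paper takes $p=\tfrac{4\discount-1}{3\discount}$, so $1-p\asymp 1-\discount$, and bounds the Bernoulli KL via the elementary inequality $\KL(\BER(p)\|\BER(q))\leq (p-q)^2/(q(1-q))$); your identification of the sample-size hypothesis as the condition $\delta_p\lesssim 1-\discount$ matches the paper's reasoning. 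The only cosmetic differences are that the paper parameterizes the gadget with an extra reward parameter $\tau$ at the sink (setting $\tau=1/2$ for part~(a) and $\tau=0$ for part~(b)) rather than always taking a zero-value sink, and that your handling of the span-norm constraint in the reward construction for part~(a)---shifting the baseline by $\pm\tfrac12\valbd(1-\discount)$ and perturbing inward---is slightly more explicit than the paper's.
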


Equipped with these lower bounds, we can now assess the local minimax
optimality of the plug-in estimator.  In order to facilitate this
comparison, let us state a corollary of Theorem~\ref{thm:plugin} that
provides bounds on the worst-case error of the plug-in estimator over
particular subsets of the parameter space.  In order to further
simplify the comparison, we restrict our attention to the range
$\discount \in [\tfrac{1}{2}, 1)$ covered by the lower bounds.
\begin{corollary}
\label{cor:azar}
There are absolute constants $(\unicon_3, \unicon_4)$ such that 
for all $\discount \in [ \tfrac{1}{2}, 1)$ and sample sizes\footnote{As
shown in the proof, part (a) of the corollary holds without this assumption
on the sample size, but we state it here to facilitate a direct derivation of
Corollary~\ref{cor:azar} from Theorem~\ref{thm:plugin}.
}
 $\Nsamp \geq \frac{\unicon_3}{(1 - \discount)^2} \log (8
\Dim / \delta)$, the following statements hold.
\begin{enumerate}
\item[(a)] Consider a triple of positive scalars $(\sigvalbd, \valbd,
  \sigrewbd)$ such that\footnote{It is worth noting that the condition
    $\sigvalbd \leq \valbd$ in part (a) of the corollary does not
    entail any loss of generality, since we always have $\| \sigma
    (\thetastar) \|_\infty \leq \| \thetastar \|_{\myspan}$. Indeed,
    for MRPs in which \mbox{$\| \sigma (\thetastar) \|_\infty \ll \|
      \thetastar \|_{\myspan}$}, the second term on the RHS of
    inequality~\eqref{eq:cor1-var} will dominate the bound unless the
    sample size $\Nsamp$ is large.}  $\sigvalbd \leq \valbd$. Then for
  any value function \mbox{$\ValueFunc^* \in \MspaceVar(\sigvalbd,
    \sigrewbd) \cap \MspaceValFun(\valbd, \sigrewbd)$}, we have
  \begin{subequations}
    \begin{align} \label{eq:cor1-var}
      \|\ValueFunchat - \ValueFunc^* \|_\infty & \leq \frac{\unicon_4
      }{1 - \discount} \left\{ \sqrt{\frac{\log(8D / \delta)}{N}}
      \left( \sigvalbd + \sigrewbd \right) + \frac{\log(8D /
        \delta)}{N} \cdot \valbd \right\}
    \end{align}
with probability at least $1 - \delta$.
  \item[(b)] Consider an arbitrary pair of positive scalars $(\rmax,
    \sigrewbd)$.  Then for any value function \mbox{$\ValueFunc^*
      \in \Mspace_{\rew}(\rmax, \sigrewbd)$}, we have
    \begin{align} \label{eq:cor1-rew}
      \|\ValueFunchat - \ValueFunc^* \|_\infty &\leq \frac{\unicon_4
      }{1 - \discount} \sqrt{\frac{\log(8D / \delta)}{N}} \left(
      \frac{ \rmax}{(1 - \discount)^{1/2}} + \sigrewbd \right)
    \end{align}
 with probability at least $1- \delta$.
  \end{subequations}
\end{enumerate}
\end{corollary}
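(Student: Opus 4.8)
The plan is to obtain both parts by specializing the population bound~\eqref{eq:thm1-pop} of Theorem~\ref{thm:plugin}(b); the only work is to control, over each sub-class, the three instance-dependent quantities on its right-hand side: the weighted deviation $\|(\IdMat - \discount\Pmat)^{-1}\sigma(\ValueFunc^*)\|_\infty$, the reward noise $\|\sigmar\|_\infty$, and the span $\|\ValueFunc^*\|_{\myspan}$. I will repeatedly use the elementary fact that $(\IdMat - \discount\Pmat)^{-1} = \sum_{k \geq 0}\discount^k\Pmat^k$ is entrywise nonnegative with each row summing to $1/(1 - \discount)$, so that $\|(\IdMat - \discount\Pmat)^{-1} v\|_\infty \leq \frac{1}{1 - \discount}\|v\|_\infty$ for every $v \succeq 0$ --- and likewise with $(\IdMat - \discount\Pmat)^{-1}$ replaced by $(\IdMat - \discount\Pmat)^{-1}\Pmat$, whose rows also sum to $1/(1 - \discount)$.

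For part (a), I would apply this row-sum bound with $v = \sigma(\ValueFunc^*)$ to get $\discount\|(\IdMat - \discount\Pmat)^{-1}\sigma(\ValueFunc^*)\|_\infty \leq \sigvalbd/(1 - \discount)$ on $\MspaceVar(\sigvalbd, \sigrewbd)$, combine it with $\|\sigmar\|_\infty \leq \sigrewbd$ and $\discount\|\ValueFunc^*\|_{\myspan} \leq \valbd$ in~\eqref{eq:thm1-pop}, and factor out $1/(1 - \discount)$; this yields~\eqref{eq:cor1-var} directly, using neither $\sigvalbd \leq \valbd$ nor any sample-size condition beyond the one already present in Theorem~\ref{thm:plugin}.

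Part (b) is the substantive one. Here only $\|\reward\|_\infty \leq \rmax$ is assumed, so from $\ValueFunc^* = (\IdMat - \discount\Pmat)^{-1}\reward$ I immediately get $\|\ValueFunc^*\|_\infty \leq \rmax/(1 - \discount)$ and hence $\|\ValueFunc^*\|_{\myspan} \leq 2\rmax/(1 - \discount)$. The crux will be to prove the sharpened bound $\|(\IdMat - \discount\Pmat)^{-1}\sigma(\ValueFunc^*)\|_\infty \leq \unicon\,\rmax/(1 - \discount)^{3/2}$, which is a factor $(1 - \discount)^{-1/2}$ better than what part (a)'s reasoning gives. My plan for this is a two-step ``total-variance'' argument. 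First, an entrywise Cauchy--Schwarz step: since $M := (\IdMat - \discount\Pmat)^{-1} \succeq 0$ has row sums $1/(1 - \discount)$, one has $\|M\sigma(\ValueFunc^*)\|_\infty \leq (1 - \discount)^{-1/2}\,\|M\sigma^2(\ValueFunc^*)\|_\infty^{1/2}$ (squares and roots entrywise), reducing matters to bounding $\|M\sigma^2(\ValueFunc^*)\|_\infty$. Second, I would invoke the Bellman-type variance recursion: writing $G = \sum_{k \geq 0}\discount^k\reward_{s_k}$ for the deterministic-reward return, so that $\ValueFunc^*_s = \EE[G \mid s_0 = s]$, and setting $\nu_s = \Var(G \mid s_0 = s)$, conditioning on the first two states $(s_0, s_1)$ and applying the law of total variance gives $\nu = \discount^2\sigma^2(\ValueFunc^*) + \discount^2\Pmat\nu$, equivalently $\sigma^2(\ValueFunc^*) = \discount^{-2}(\IdMat - \discount^2\Pmat)\nu$; and since $|G| \leq \rmax/(1 - \discount)$ surely, $\|\nu\|_\infty \leq \rmax^2/(1 - \discount)^2$. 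Multiplying the last identity by $M$ and using $\IdMat - \discount^2\Pmat = (\IdMat - \discount\Pmat) + \discount(1 - \discount)\Pmat$ gives $M\sigma^2(\ValueFunc^*) = \discount^{-2}[\nu + \discount(1 - \discount)M\Pmat\nu]$; the row-sum bound applied to $M\Pmat$ then yields $\|M\sigma^2(\ValueFunc^*)\|_\infty \leq \discount^{-2}(1 + \discount)\|\nu\|_\infty \leq 8\rmax^2/(1 - \discount)^2$ on the range $\discount \geq \tfrac12$, and the claimed bound follows.

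To finish part (b), I would substitute this bound and $\|\ValueFunc^*\|_{\myspan} \leq 2\rmax/(1 - \discount)$ into~\eqref{eq:thm1-pop}: the first term becomes at most a constant times $\frac{1}{1 - \discount}\sqrt{\tfrac{\log(8\Dim/\delta)}{\Nsamp}}\big(\tfrac{\rmax}{(1 - \discount)^{1/2}} + \sigrewbd\big)$ and the second at most a constant times $\tfrac{\log(8\Dim/\delta)}{\Nsamp}\cdot\tfrac{\rmax}{(1 - \discount)^2}$, and the second is dominated by the $\rmax$-part of the first whenever $\Nsamp \geq \tfrac{\log(8\Dim/\delta)}{1 - \discount}$ --- which the hypothesis $\Nsamp \geq \tfrac{\unicon_3}{(1 - \discount)^2}\log(8\Dim/\delta)$ guarantees --- giving~\eqref{eq:cor1-rew}. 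The step I expect to be the main obstacle is the sharpened variance bound in part (b): obtaining $(1 - \discount)^{-3/2}$ rather than the easy $(1 - \discount)^{-2}$ hinges on getting the total-variance recursion exactly right and then transferring a bound that is naturally attached to the discount $\discount^2$ back to $(\IdMat - \discount\Pmat)^{-1}$ without losing a further power of $(1 - \discount)^{-1}$.
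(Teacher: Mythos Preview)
Your proposal is correct. Both parts go through essentially as you describe.

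For part~(b), your approach coincides with the paper's: both apply Theorem~\ref{thm:plugin}(b), bound $\|\ValueFunc^*\|_{\myspan}\le 2\rmax/(1-\discount)$, and then absorb the span term into the leading term under the sample-size hypothesis. The only difference is that the paper obtains the key estimate $\|(\IdMat-\discount\Pmat)^{-1}\sigma(\ValueFunc^*)\|_\infty\le c\,\rmax/(1-\discount)^{3/2}$ by citing Lemma~7 of Azar et al.~\cite{AzaMunKap13}, whereas you give a self-contained derivation via the variance Bellman recursion $\nu=\discount^2\sigma^2(\ValueFunc^*)+\discount^2\Pmat\nu$ combined with the entrywise Cauchy--Schwarz step and the decomposition $\IdMat-\discount^2\Pmat=(\IdMat-\discount\Pmat)+\discount(1-\discount)\Pmat$. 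Your argument is precisely the content of that lemma, so nothing is lost and the exposition is arguably cleaner.

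For part~(a), your route differs slightly from the paper's. You substitute the class constraints into the population bound~\eqref{eq:thm1-pop} of Theorem~\ref{thm:plugin}(b) and use $\|(\IdMat-\discount\Pmat)^{-1}\|_{1,\infty}\le(1-\discount)^{-1}$; this is the ``direct derivation from Theorem~\ref{thm:plugin}'' alluded to in the footnote, and it inherits that theorem's sample-size requirement. The paper instead returns to the pre-Bernstein error relation~\eqref{eq:error-relation-bellman} (which involves $(\IdMat-\discount\Phat)^{-1}$ rather than $(\IdMat-\discount\Pmat)^{-1}$), bounds $\|(\IdMat-\discount\Phat)^{-1}\|_{1,\infty}\le(1-\discount)^{-1}$, and then applies Bernstein and Hoeffding directly to $(\Phat-\Pmat)\ValueFunc^*$ and $\rhat-\reward$. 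This bypasses the leave-one-out machinery entirely and establishes~\eqref{eq:cor1-var} with no sample-size assumption at all, which is what the footnote claims. Both approaches prove the stated corollary; the paper's is marginally more elementary and yields the slightly stronger conclusion.
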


By comparing Corollary~\ref{cor:azar}(b) with Theorem~\ref{thm:lb}(b),
we see that the plug-in estimator is minimax optimal (up to constant
factors) over the class $\Mspace_{\rew}(\rmax, \sigrewbd)$.  This
conclusion parallels that of Azar et al.~\cite{AzaMunKap13} for the
related problem of optimal state-value function estimation in MDPs.
(In our notation, their work applies to the special case of $\sigrewbd
= 0$, but their analysis can easily be extended to this more general
setting.)

A comparison of part (a) of the two results is more interesting.  Here
we see that the first term in the upper bound~\eqref{eq:cor1-var}
matches the lower bound~\eqref{eq:thm2-var} up to a constant
factor. The second term of
inequality~\eqref{eq:cor1-var}, however, does not have an analogous
component in the lower bound, and this leads us to the interesting
question of whether the analysis of the plug-in estimator can be
sharpened so as to remove the dependence of the error on the span
semi-norm $\| \thetastar \|_{\myspan}$. Proposition~\ref{prop:plugin-lb},
presented in Appendix~\ref{AppPluginLB}, 
shows that this is impossible in general,
and that there are MRPs in which the $\ell_\infty$ error can be
\emph{lower bounded} by a term that is proportional to the span semi-norm.

This raises another natural question: Is there
a different estimator whose error can be bounded independently
of the span semi-norm $\| \thetastar \|_{\myspan}$, and which
is able achieve the lower bound~\eqref{eq:thm2-var}?
In the next section, we introduce such an
estimator via a median-of-means device.


\subsection{Closing the gap via the median-of-means method}
\label{sec:MoM}

In many situations, the span semi-norm of a value function
$\ValueFunc^*$ may be much larger its variance~$\sigma(\thetastar)$
under the transition model.  Such a discrepancy arises when there are
states with extremely large positive (or negative) rewards that are
visited with very low probability.  In such cases, the second terms in
the bounds~\eqref{eq:thm1} dominate the first. It is thus of interest
to derive bounds that are purely ``variance-dependent'' and
independent of the span norm.  In order to do so, we analyze a slight
variant of the plug-in approach.  In particular, we analyze the
\emph{median-of-means} estimator, which is a standard robust
alternative to the sample mean in other
scenarios~\cite{NemYu83,lecue2017robust}.  In the context of
reinforcement learning, Pazis et al.~\cite{pazis2016improving} made
use of it for online policy optimization in MDPs.

In our setting, we only employ median-of-means to obtain a better
estimate of term depending on the transition matrix; we still use the
estimate $\rhat$ defined in equation~\eqref{eq:plugin} as our estimate
of the reward function.\footnote{In principle, one could run a
  median-of-means estimate on the combination of reward and
  transition, but this is not necessary in our setting due to the
  sub-Gaussian assumption on the reward
  noise~\eqref{eq:reward-sg}. Slight modifications of our techniques
  also yield bounds for the combined median-of-means estimate assuming
  only that the standard deviation of the reward noise is bounded
  entry-wise by the vector $\sigmar$.}  Given the data set $\left\{
\Zmatsub{k} \right\}_{k = 1}^{\Nsamp}$ and some vector $\theta \in
\real^{\Dim}$, the median-of-means estimate~$\MoM(\theta)$ of the
population expectation $\Pmat \theta$ is given by the following
nonlinear operation:
\begin{itemize}
  \item First, split the data set into $K$ equal parts denoted
    $\{ \Dspace_1, \ldots, \Dspace_K \}$, where each subset
    $\Dspace_i$ has size $m = \lfloor \Nsamp / K \rfloor$.
  \item Second, compute the empirical mean $\muhat_i(\theta) \defn
    \frac{1}{m} \sum_{k \in \Dspace_i} \Zmatsub{k} \theta$ for each $i
    \in [K]$.
  \item Finally, return the quantity~$\MoM(\theta) \defn \med(
    \muhat_1(\theta), \ldots, \muhat_K(\theta))$, where the
    median---defined for convenience as the~$\lfloor K/ 2 \rfloor$-th
    order statistic---is taken entry-wise.
\end{itemize}

The random operator $\MoM$ defines the \emph{median-of-means empirical
  Bellman operator}, given by
\begin{align}
\label{eq:Bellman-MoM}
\Belemp^{\robust}_\Nsamp (\theta) & \defn \rhat + \discount
\MoM(\theta).
\end{align}
As shown in Lemma~\ref{lem:op-contracts} (see
Section~\ref{sec:proofs}), this operator is $\discount$-contractive in
the $\ell_\infty$-norm.  Consequently, it has a unique fixed point,
which we term the \emph{median-of-means value function estimate},
denoted by $\ValueFunchatrob$.

In practice, the estimate $\ValueFunchatrob$ can be found by starting
at an arbitrary initialization and repeatedly applying the
$\discount$-contractive operator $\Belemp^{\robust}_{\Nsamp}$ until
convergence.\footnote{Since the operator is $\discount$-contractive,
  it suffices to run this iterative algorithm for $\log_{\discount}
  \epsilon$ to obtain an $\epsilon$-approximate fixed point in an
  additive sense.}  The following theorem provides a population-based
guarantee on the error of this estimator.
\begin{theorem}
  \label{thm:rob}
Suppose that the median-of-means operator $\MoM$ is constructed with
the parameter choice \mbox{$K = 8 \log (4 \Dim / \delta)$}. Then there is a
universal constant $\unicon$ such that we have
\begin{align}
\label{eq:robust-result}
\|\ValueFunchatrob - \ValueFunc^* \|_\infty &\leq \frac{\unicon}{1 -
  \discount} \sqrt{\frac{\log(8D / \delta)}{N} } \Big( \discount \|
\sigma(\ValueFunc^*) \|_\infty + \|\sigmar\|_\infty \Big)
\end{align}
with probability exceeding $1 -\delta$.
\end{theorem}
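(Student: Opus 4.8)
The plan is to leverage the $\discount$-contractivity of the median-of-means empirical Bellman operator $\Belemp^{\robust}_{\Nsamp}$ (Lemma~\ref{lem:op-contracts}) to reduce the theorem to a one-shot concentration statement for $\rhat$ and $\MoM$ evaluated at the \emph{deterministic} target $\ValueFunc^*$. This is precisely the point of the robust variant: unlike the plug-in estimator, whose population bound in Theorem~\ref{thm:plugin}(b) requires a leave-one-out decoupling between $\ValueFunchat$ and $\Phat$, here no decoupling is needed because the argument never evaluates $\MoM$ at the data-dependent iterate $\ValueFunchatrob$.

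First I would derive a purely deterministic reduction. Since $\ValueFunchatrob = \Belemp^{\robust}_{\Nsamp}(\ValueFunchatrob)$ and $\ValueFunc^* = \reward + \discount \Pmat \ValueFunc^*$, the triangle inequality followed by $\discount$-contractivity gives $\|\ValueFunchatrob - \ValueFunc^*\|_\infty \le \|\Belemp^{\robust}_{\Nsamp}(\ValueFunchatrob) - \Belemp^{\robust}_{\Nsamp}(\ValueFunc^*)\|_\infty + \|\Belemp^{\robust}_{\Nsamp}(\ValueFunc^*) - \ValueFunc^*\|_\infty \le \discount \|\ValueFunchatrob - \ValueFunc^*\|_\infty + \|\Belemp^{\robust}_{\Nsamp}(\ValueFunc^*) - \ValueFunc^*\|_\infty$, and rearranging yields $\|\ValueFunchatrob - \ValueFunc^*\|_\infty \le \frac{1}{1-\discount}\|\Belemp^{\robust}_{\Nsamp}(\ValueFunc^*) - \ValueFunc^*\|_\infty$, with the factor $1/(1-\discount)$ emerging directly from the contraction. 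Writing $\Belemp^{\robust}_{\Nsamp}(\ValueFunc^*) = \rhat + \discount \MoM(\ValueFunc^*)$ and subtracting the true Bellman relation gives $\Belemp^{\robust}_{\Nsamp}(\ValueFunc^*) - \ValueFunc^* = (\rhat - \reward) + \discount\big(\MoM(\ValueFunc^*) - \Pmat \ValueFunc^*\big)$. Crucially, $\ValueFunc^*$ is a fixed (non-random) vector, so $\MoM(\ValueFunc^*)$ is an honest median-of-means estimate of $\Pmat \ValueFunc^*$, with the coordinates $(\Zmat \ValueFunc^*)_j$ having mean $(\Pmat \ValueFunc^*)_j$ and variance $\sigma_j^2(\ValueFunc^*)$.

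Next I would bound the two terms separately. The reward term is routine: each $\rhat_j - \rstar_j$ is an average of $\Nsamp$ independent $\sigmarindex$-sub-Gaussian variables by~\eqref{eq:reward-sg}, hence $(\sigmarindex/\sqrt{\Nsamp})$-sub-Gaussian, so a sub-Gaussian tail estimate and a union bound over the $\Dim$ coordinates give $\|\rhat - \reward\|_\infty \lesssim \|\sigmar\|_\infty \sqrt{\log(8\Dim/\delta)/\Nsamp}$ on an event of probability at least $1-\delta/2$. The transition term is the main work. Fix a coordinate $j$; the $K$ block means $\muhat_1(\ValueFunc^*)_j, \ldots, \muhat_K(\ValueFunc^*)_j$ are i.i.d.\ with mean $(\Pmat \ValueFunc^*)_j$ and variance $\sigma_j^2(\ValueFunc^*)/m$, where $m = \lfloor \Nsamp/K \rfloor$. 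By Chebyshev's inequality each block mean lies within $\unicon \, \sigma_j(\ValueFunc^*)/\sqrt{m}$ of $(\Pmat\ValueFunc^*)_j$ with probability at least $3/4$, so the entry-wise median $\MoM(\ValueFunc^*)_j$ deviates by more than this quantity only if at least $K/2$ of the $K$ blocks are ``bad'', an event of probability at most $e^{-\unicon' K}$ by a Chernoff bound for the binomial tail. With the prescribed choice $K = 8\log(4\Dim/\delta)$ this is at most $\delta/(4\Dim)$, and a union bound over coordinates yields $\|\MoM(\ValueFunc^*) - \Pmat \ValueFunc^*\|_\infty \le \unicon\,\|\sigma(\ValueFunc^*)\|_\infty \sqrt{K/\Nsamp} \lesssim \|\sigma(\ValueFunc^*)\|_\infty \sqrt{\log(8\Dim/\delta)/\Nsamp}$ with probability at least $1-\delta/4$. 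The essential feature is that only a second-moment (variance) control on $(\Zmat\ValueFunc^*)_j$ enters, through Chebyshev; this is exactly why $\sigma(\ValueFunc^*)$ appears rather than the span semi-norm $\| \ValueFunc^* \|_{\myspan}$, thereby closing the gap identified after Corollary~\ref{cor:azar}.

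Finally, combining the deterministic reduction with the two high-probability bounds via a union bound over failure events (total probability at most $\delta$) gives $\|\ValueFunchatrob - \ValueFunc^*\|_\infty \le \frac{\unicon}{1-\discount}\big(\discount\|\sigma(\ValueFunc^*)\|_\infty + \|\sigmar\|_\infty\big)\sqrt{\log(8\Dim/\delta)/\Nsamp}$, which is the claimed bound~\eqref{eq:robust-result}. I expect the only genuinely delicate step to be the median-of-means concentration: one must track the block size $m = \lfloor\Nsamp/K\rfloor$, balance the Chebyshev radius against the binomial Chernoff exponent so that the stated $K = 8\log(4\Dim/\delta)$ indeed produces a $\delta/(4\Dim)$ per-coordinate failure probability, and check that $\sqrt{K/\Nsamp}$ collapses to the advertised $\sqrt{\log(8\Dim/\delta)/\Nsamp}$ rate; the reward tail bound and the contraction-based reduction are standard. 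A minor point to flag is the implicit requirement $m \ge 1$, i.e.\ $\Nsamp \ge 8\log(4\Dim/\delta)$, below which $\ValueFunchatrob$ is undefined and the bound holds vacuously.
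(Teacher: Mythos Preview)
Your proposal is correct and follows essentially the same approach as the paper: both reduce to bounding $\|\rhat - \reward\|_\infty$ and $\discount\|\MoM(\ValueFunc^*) - \Pmat\ValueFunc^*\|_\infty$ via the $\discount$-contractivity of $\Belemp^{\robust}_{\Nsamp}$ (Lemma~\ref{lem:op-contracts}), then apply Hoeffding and the standard median-of-means concentration (the paper packages the latter as Lemma~\ref{lem:MoM}, whose proof is exactly the Chebyshev-plus-Chernoff argument you sketch). The only cosmetic difference is that the paper expands $\ValueFunchatrob - \ValueFunc^*$ algebraically and then invokes the $1$-Lipschitz property of $\MoM$, whereas you phrase the same reduction as $\|\ValueFunchatrob - \ValueFunc^*\|_\infty \le \frac{1}{1-\discount}\|\Belemp^{\robust}_{\Nsamp}(\ValueFunc^*) - \ValueFunc^*\|_\infty$; these are equivalent.
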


We have thus achieved our goal of obtaining a purely
variance-dependent bound. Indeed, for each pair of positive scalars
$(\sigvalbd, \sigrewbd)$, any value function~\mbox{$\thetastar
  \in \Mspace_{\var}(\sigvalbd, \sigrewbd)$}, and reward distribution satisfying
$\| \sigmar \|_\infty \leq \sigrewbd$, we have
\begin{align*}
\|\ValueFunchatrob - \ValueFunc^* \|_\infty \leq \frac{\unicon}{1 -
  \discount} \sqrt{\frac{\log(8D / \delta)}{N} } \left( \sigvalbd +
\sigrewbd \right),
\end{align*}
with probability exceeding $1 - \delta$. Integrating this tail bound
yields an analogous upper bound on the expected error, which matches
the lower bound~\eqref{eq:thm2-var} on the expected error up to a
constant factor.  As a corollary, we conclude that the minimax risk
over the class  $\Mspace_{\var}(\sigvalbd, \sigrewbd)$ scales as
\begin{align}
\inf_{\thetahat} \; \sup_{ \ValueFunc^* \in \Mspace_{\var}(\sigvalbd,
  \sigrewbd) } \; \EE \left[\| \thetahat - \ValueFunc^* \|_{\infty}
  \right] \asymp \frac{1}{1 - \discount} \sqrt{\frac{\log(\Dim
    )}{\Nsamp}} \left( \sigvalbd + \sigrewbd \right),
\end{align}
and is achieved (up to constant factors) by the estimator
$\ValueFunchatrob$.

However, our results fall short of showing that the estimator
$\ValueFunchatrob$ is minimax optimal over the class
$\MspaceRew(\rmax, \sigrewbd)$ of MRPs with bounded rewards. Indeed,
for any value function $\thetastar$ in the class
\mbox{$\MspaceRew(\rmax, \sigrewbd)$,} Theorem~\ref{thm:rob} yields
the corollary
\begin{align*}
\|\ValueFunchatrob - \ValueFunc^* \|_\infty \leq \frac{\unicon}{1 -
  \discount} \sqrt{\frac{\log(8D / \delta)}{N} } \left( \discount
\frac{\rmax}{1 - \discount} + \sigrewbd \right)
\end{align*}
with probability exceeding $1 - \delta$. Comparing
inequality~\eqref{eq:thm2-rew} with this bound, we see that our upper
bound on the median-of-means estimator is sub-optimal by a factor
\mbox{$(1 - \discount)^{-\tfrac{1}{2}}$} in the discount complexity.
From a technical standpoint, this is due to the fact that our upper
bound in Theorem~\ref{thm:rob} involves the functional $\frac{1}{1 -
  \discount}\| \sigma(\thetastar) \|_{\infty}$ and not the sharper
functional $\| (\Id - \discount \Pmat)^{-1} \sigma(\thetastar)
\|_{\infty}$ present in Theorem~\ref{thm:plugin}(b). We believe that
this gap is not intrinsic to the MoM method, and conjecture that an
upper bound depending on the latter functional can be proved for the
estimator $\ValueFunchatrob$; this would guarantee that the
median-of-means estimator is also minimax optimal over the class
$\Mspace_{\rew} (\rmax, \sigrewbd)$.

\section{Numerical experiments}

In this section, we explore the sharpness of our theoretical
predictions, for both the plug-in and the median-of-means (MoM)
estimator. Our bounds predict a range of
behaviors depending on the scaling of the maximum standard deviation
$\|\sigma(\thetastar)\|_\infty$, and the span semi-norm (for the
plug-in estimator). Let us verify these scalings via some simple experiments.  

\subsection{Behavior on the ``hard" example used for the lower bound}

First, we use a simple variant of our lower bound construction illustrated in panel (a) of Figure~\ref{fig:mrp}. This MRP consists of $\Dim = 2$ states, where state $1$ stays fixed with probability $p$, transitions to state $2$ with
probability $1-p$, and state $2$ is absorbing.  The rewards in states
$1$ and $2$ are given by $\nu$ and $\nu \tau$, respectively. Here the
triple $(p, \nu, \tau)$, along with the discount factor $\discount$,
are parameters of the construction.

\begin{figure}[ht]
  \begin{center}
    \begin{tabular}{ccc}
          \raisebox{1in}{\widgraph{0.35\textwidth}{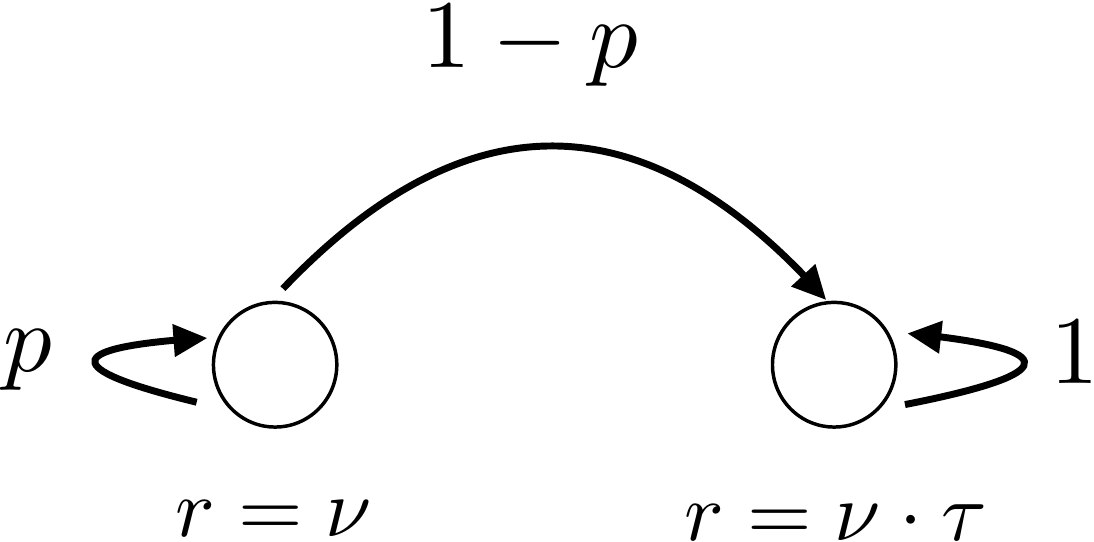}} &&
          \widgraph{0.58\textwidth}{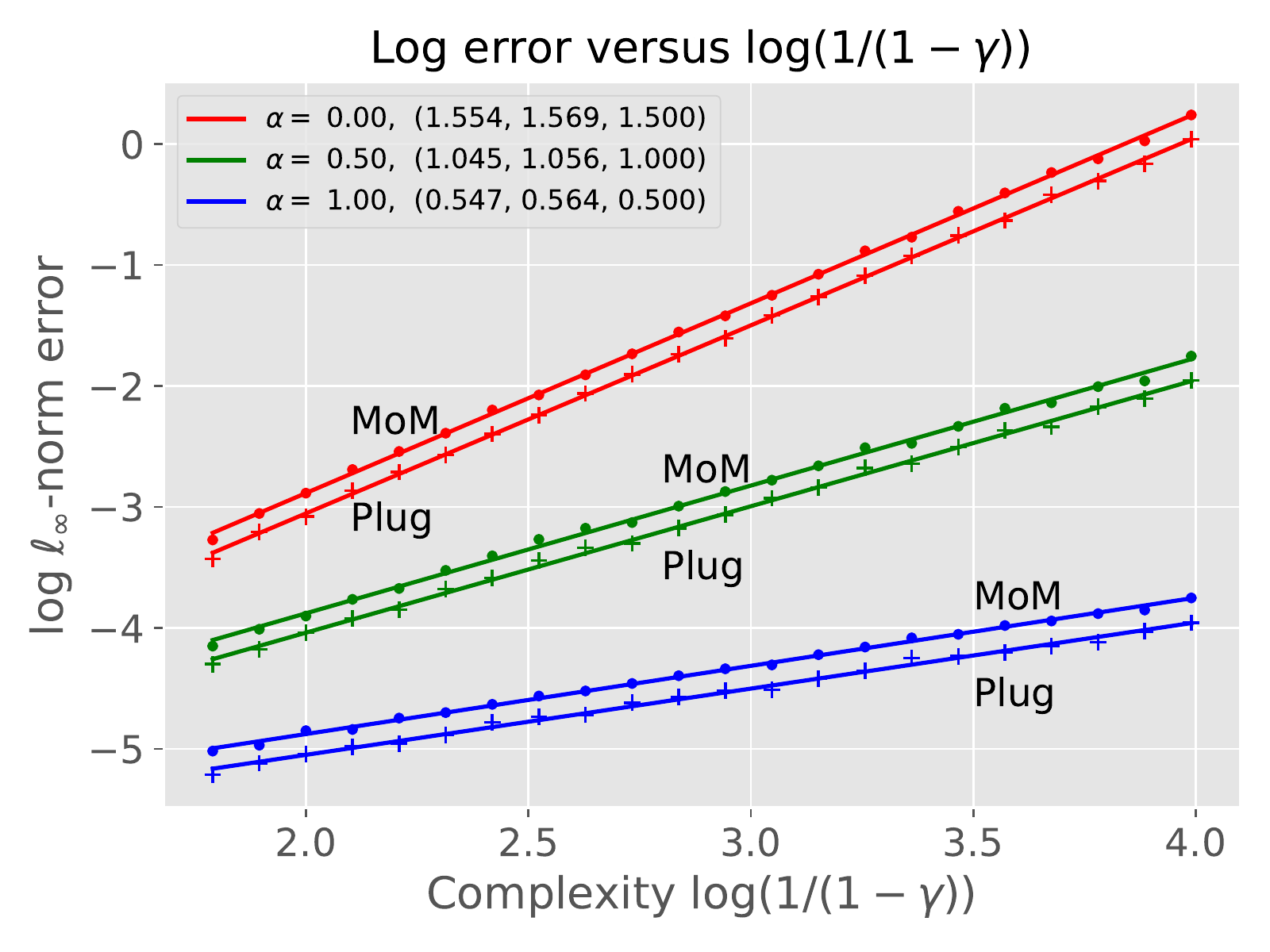}
          \\ (a) & & (b) \\
          && 
    \end{tabular}
    \caption{(a) Illustration of the MRP $\MRP_{\basic} (p, \lbmaxrew,
      \tau)$ used in the simulation, and also as a building block in
      the lower bound construction of Theorem~\ref{thm:lb}.  For the
      simulation, we choose $p = \tfrac{4\discount - 1}{3\discount}$,
      let $\lbmaxrew = 1$, and set $\tau = 1 - (1 -
      \discount)^{\alpha}$.  (b) Log-log plot of the
      $\ell_\infty$-error versus the discount complexity parameter $1
      / (1 - \discount)$ for both the plug-in estimator (in $+$
      markers) and median-of-means estimator (in~$\bullet$ markers)
      averaged over $T = 1000$ trials with $\Nsamp = 10^4$ samples
      each. We have also plotted the least-squares fits through these
      points, and the slopes of these lines are provided in the
      legend. In particular, the legend contains the tuple of slopes
      $(\betaplug, \betamom, \beta^*)$ for each value of
    $\alpha$. Logarithms are to the natural base.}
    \label{fig:mrp}
  \end{center}
\end{figure}

In order to parameterize this MRP in a scalarized manner, we vary the
triple $(p, \nu, \tau)$ in the following way.  First, we fix a scalar
$\alpha$ in the unit interval $[0, 1]$, and then we set
 \begin{align*}
 p = \tfrac{4 \discount - 1}{3 \discount}, \qquad \lbmaxrew = 1, \quad
 \text{ and } \quad \; \tau = 1 - (1 - \discount)^{\alpha}.
 \end{align*}
Note that this sub-family of MRPs is fully parameterized by the pair
$(\discount, \alpha)$.  Let us clarify why this particular
scalarization is interesting.  As shown in the proof of
Theorem~\ref{thm:lb} (see equation~\eqref{eq:basic-mrp-props}), the
underlying MRP has maximal standard deviation scaling as
\begin{align*}
 \| \sigma(\thetastar) \|_{\infty} \sim \left( \frac{1}{1 - \discount}
 \right)^{0.5 - \alpha}.
\end{align*}
Consequently, by the bound~\eqref{eq:robust-result} from
Theorem~\ref{thm:rob}, for a fixed sample size $\Nsamp$, the MoM
estimator should have $\ell_\infty$-norm scaling as $\left( \frac{1}{1
  - \discount} \right)^{1.5 - \alpha}$.  As we discuss in
Appendix~\ref{AppAux}, the same prediction also holds for the plug-in
estimator, assuming that $\Nsamp \succsim \frac{1}{(1-\discount)}$.

In order to test this prediction, we fixed the parameter $\alpha \in
[0,1]$, and generated a range of MRPs with different values of the
discount factor $\discount$.  For each such MRP, we drew $\Nsamp =
10^4$ samples from the generative observation model and computed both
the plug-in and median-of-means estimators, where the latter estimator
was run with the choice $K = 20$.  While the plug-in estimator has a
simple closed-form expression, the MoM estimator was obtained by
running the median-of-means Bellman operator
$\Belemp_{\Nsamp}^{\robust}$ iteratively until it converged to its
fixed point; we declared that convergence had occurred when the
$\ell_\infty$-norm of the difference between successive iterates fell
below $10^{-8}$.

In panel (b) of Figure~\ref{fig:mrp}, we plot the $\ell_\infty$-error,
of both the plug-in approach as well as the median-of-means estimator,
as a function of $\discount$.  The plot shows the behavior for three
distinct values $\alpha = \{0, 0.5, 1\}$.  Each point on each curve is
obtained by averaging $1000$ Monte Carlo trials of the experiment.
Note that on this log-log plot, we see a linear relationship between
the log $\ell_\infty$-error and log discount complexity, with the
slopes depending on the value of $\alpha$.  More precisely, from our
calculations above, our theory predicts that the log
$\ell_\infty$-error should be related to the log complexity $\log
\big( \tfrac{1}{1 - \discount} \big)$ in a linear fashion with slope
\begin{align*}
  \betastar & = 1.5 - \alpha.
\end{align*}
Consequently, for both the plug-in and MoM estimators, we performed a
linear regression to estimate these slopes, denoted by $\betaplug$ and
$\betamom$ respectively.  The plot legend reports the triple
$(\betaplug, \betamom, \betastar)$, and for each we see good agreement
between the theoretical prediction $\betastar$ and its empirical counterparts.

\subsection{When does the MoM estimator perform better than plug-in?}

Our theoretical results predict that the MoM estimator should outperform
the plug-in approach when the span semi-norm of the value function $\| \thetastar \|_{\myspan}$ is much larger than its maximum
standard deviation $\| \sigma(\thetastar) \|_{\infty}$. Indeed,
Proposition~\ref{prop:plugin-lb} in Appendix~\ref{AppPluginLB} demonstrates that there are MRPs on which the $\ell_\infty$-error of the plug-in estimator grows with the 
span semi-norm of the optimal value function. Let us now simulate
the behavior of both the plug-in and MoM approach on this MRP, constructed
by taking $\Dim / 3$ copies of the $3$-state MRP in Figure~\ref{fig:mrp2}(a).

\begin{figure}[ht]
  \begin{center}
    \begin{tabular}{ccc}
          \raisebox{1in}{\widgraph{0.35\textwidth}{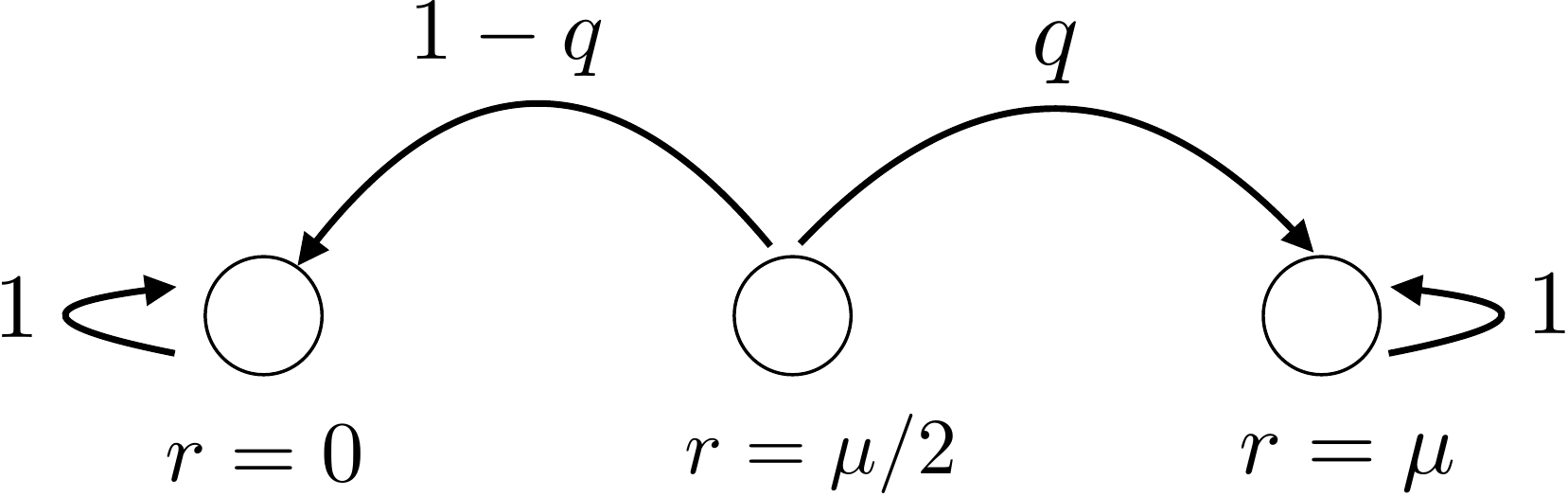}} &&
          \widgraph{0.58\textwidth}{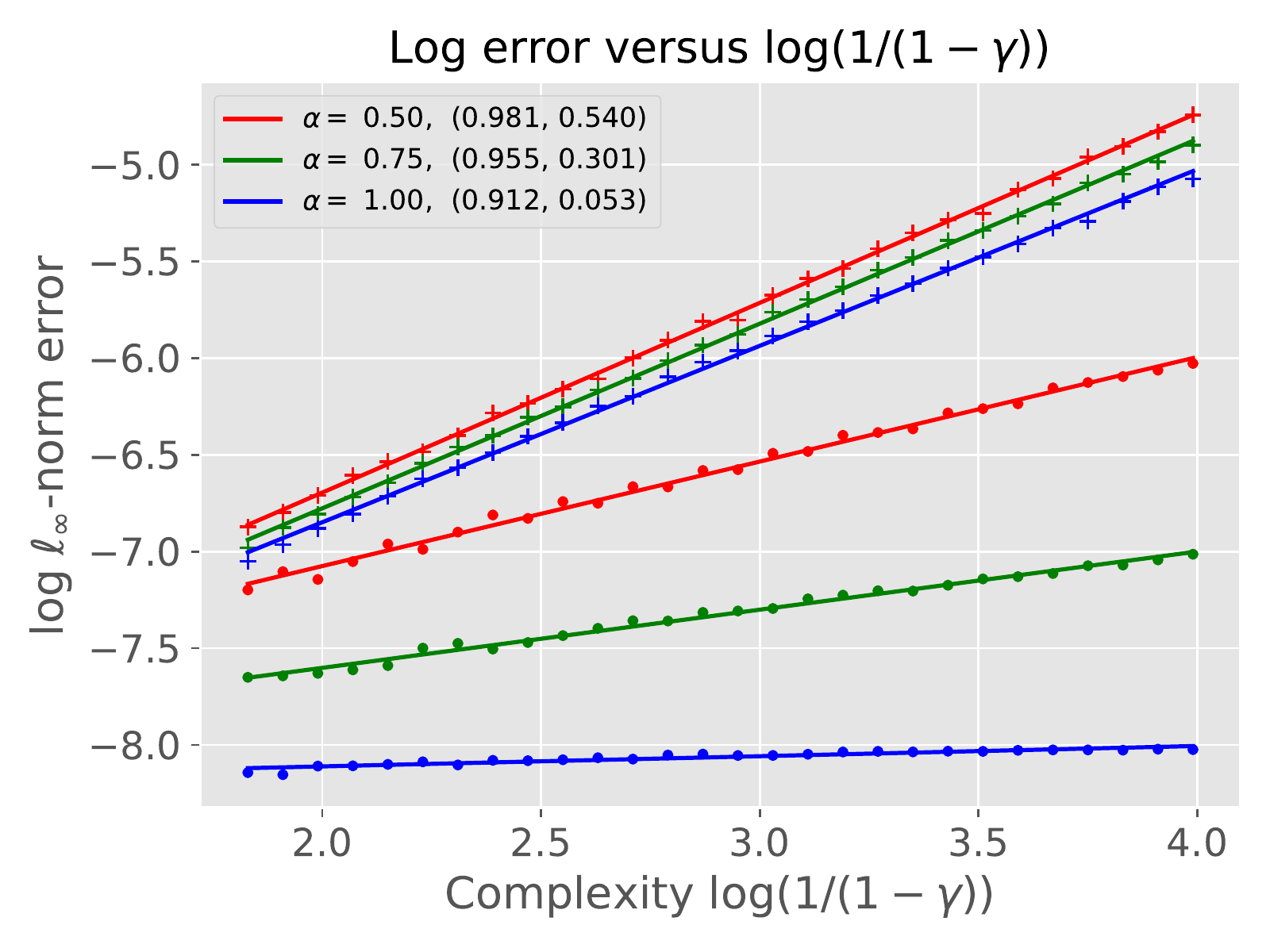}
          \\ (a) & & (b) \\
          && 
    \end{tabular}
    \caption{(a) Illustration of the MRP $\MRP_{\secondbasic} (q, \mu)$. 
    In the simulation as well as in
      the lower bound construction of Proposition~\ref{prop:plugin-lb},
      we concatenate $\Dim / 3$ such MRPs to produce an
      MRP on $\Dim$ states. For the simulation, we choose $\Dim = 3 \Big\lfloor \left( \frac{1}{1 - \discount} \right)^{\alpha} \Big\rfloor$ and set $q = \frac{10}{\Nsamp \Dim}$ and $\mu = 1$.  
      (b) Log-log plot of the
      $\ell_\infty$-error versus the discount complexity parameter $1
      / (1 - \discount)$ for both the plug-in estimator (in $+$
      markers) and median-of-means estimator (in~$\bullet$ markers)
      averaged over $T = 1000$ trials with $\Nsamp = 10^4$ samples
      each. We have also plotted the least-squares fits through these
      points, and the slopes of these lines are provided in the
      legend. In particular, the legend contains the tuple of slopes
      $(\betaplug, \betamom)$ for each value of
    $\alpha$. Logarithms are to the natural base.}
    \label{fig:mrp2}
  \end{center}
\end{figure}

Our simulation is carried out on $\Nsamp = 10^4$ samples from this
$\Dim$-state MRP, with noiseless observations of the reward.  In order
to parameterize the MRP via the discount factor alone, we fix the pair
$(q, \Dim)$ in the following way.  First, we fix a scalar $\alpha$ in
the unit interval $[0, 1]$, and then set
 \begin{align*}
 \Dim = 3 \Big\lfloor \left( \frac{1}{1 - \discount} \right)^{\alpha}
 \Big\rfloor \quad \text{ and } \quad \; q = \frac{10}{\Nsamp \Dim}.
 \end{align*}
Note that this sub-family of MRPs is fully parameterized by the pair
$(\discount, \alpha)$. The construction also ensures that
\begin{align}
\label{eq:span-gtr-var}
\frac{\| \thetastar \|_{\infty}}{\Nsamp} \gg \frac{\|
  \sigma(\ValueFunc^*) \|_{\infty}}{\sqrt{\Nsamp}},
\end{align}
for this chosen parameterization, and furthermore, that the ratio
 of the LHS and RHS of inequality~\eqref{eq:span-gtr-var} increases
 as the dimension $\Dim$ increases (see the proof of Proposition~\ref{prop:plugin-lb}).

As shown in Proposition~\ref{prop:plugin-lb} in
Appendix~\ref{AppPluginLB}, the $\ell_\infty$ error of the plug-in
estimator for this family of MRPs can be \emph{lower bounded} by $\|
\thetastar \|_{\infty} / \Nsamp$. It is also straightforward to show
that the error of the MoM estimator is \emph{upper bounded} by the
quantity $\frac{\| \sigma(\ValueFunc^*)
  \|_{\infty}}{\sqrt{\Nsamp}}$. Now increasing the value of $\alpha$
increases the dimension $\Dim$, and so the MoM estimator should behave
better and better for larger values of $\alpha$. In particular, this
behavior can be captured in the log-log plot of the error against $1 /
(1 - \discount)$, which is presented in Figure~\ref{fig:mrp2}(b).



The plot shows the behavior for three
distinct values $\alpha = \{0.5, 0.75, 1\}$.  Each point on each curve is
obtained by averaging $1000$ Monte Carlo trials of the experiment.
As expected, the MoM estimator consistently outperforms the 
plug-in estimator for each value of $\alpha$. Moreover, on this log-log plot, we see a linear relationship between
the log $\ell_\infty$-error and log discount complexity, with the
slopes depending on the value of $\alpha$. For both the plug-in and MoM estimators, we performed a linear regression to estimate these slopes, denoted by $\betaplug$ and
$\betamom$ respectively.  The plot legend reports the pair
$(\betaplug, \betamom)$, and we see that the gap between the slopes
increases as $\alpha$ increases.


\section{Proofs}
\label{sec:proofs}

We now turn to the proofs of our main results.  Throughout our proofs,
the reader should recall that the values of absolute constants may
change from line-to-line.  We also use the following facts
repeatedly. First, for a row stochastic matrix~$\Mmat$ with
non-negative entries and any scalar $\discount \in [0,1)$, we have the
  infinite series
\begin{subequations}
  \begin{align}
    \label{EqnInfinite}
(\IdMat - \discount \Mmat)^{-1} = \sum_{t = 0}^{\infty} (\discount
    \Mmat)^t,
\end{align}
which implies that the entries of $(\IdMat - \gamma \Mmat)^{-1}$ are
all non-negative. Second, for any such matrix, we also have the bound
$\| (\IdMat - \gamma \Mmat)^{-1} \|_{1, \infty} \leq \frac{1}{1 -
  \discount}$.  Finally, for any matrix $\Amat$ with positive entries
and a vector $v$ of compatible dimension, we have the elementwise
inequality
\begin{align}
 \label{EqnHandy}
|\Amat v| \preceq \Amat |v |.
\end{align}
\end{subequations}


\subsection{Proof of Theorem~\ref{thm:plugin}, part (a)}

Throughout this proof, we adopt the convenient shorthand $\thetahat
\equiv \ValueFunchat$ for notational convenience.  By the Bellman
equations~\eqref{eq:Bellman-true} and~\eqref{eq:Bellman-est} for
$\ValueFunc^*$ and $\thetahat$, respectively, we have
\begin{align*}
\thetahat - \thetastar & = \discount \left \{ \Phat \thetahat - \Pmat
\ValueFunc^* \right \} + (\rhat - \reward) \; = \; \discount \Phat
(\thetahat - \ValueFunc^*) + \discount (\Phat - \Pmat) \ValueFunc^* +
(\rhat - \reward).
\end{align*}
Introducing the shorthand $\DelHat \defn \thetahat - \thetastar$ and
re-arranging implies the relation
\begin{align}
\label{eq:error-relation-bellman}
\DelHat & = \discount (\IdMat - \discount \Phat)^{-1} (\Phat - \Pmat)
\thetastar + (\IdMat - \discount \Phat)^{-1} (\rhat - \reward),
\end{align}
and consequently, the elementwise inequality
\begin{align}
  \label{EqnNewError}
|\DelHat| & \preceq \discount (\IdMat - \discount \Phat)^{-1} |(\Phat
- \Pmat) \thetastar| + (\IdMat - \discount \Phat)^{-1} |(\rhat -
\reward)|,
\end{align}
where we have used the relation~\eqref{EqnHandy} with the matrix
$\Amat = (\IdMat - \discount \Phat)^{-1}$.  Given the sub-Gaussian
condition on the stochastic rewards, we can apply Hoeffding's
inequality combined with the union bound to obtain the elementwise
inequality $|\rhat - \reward| \preceq \unicon \LOGDN \cdot \sigmar$,
which holds with probability at least $1 - \tfrac{\delta}{4}$.  Since
the matrix $(\IdMat - \discount \Phat)^{-1}$ has non-negative entries
and $(1, \infty)$-norm at most $\tfrac{1}{1-\discount}$, we have
\begin{subequations}
\label{EqnTailBounds}
\begin{align}
  \label{eq:Hoeffding-rew}  
  (\IdMat - \discount \Phat)^{-1} |\rhat - \reward| & \preceq
  \frac{\unicon}{1-\discount} \| \sigmar \|_{\infty} \LOGDN \ones.
\end{align}
with the same probability.  On the other hand, by Bernstein's
inequality, we have
\begin{align*}
  |(\Phat - \Pmat) \thetastar | \preceq \unicon \Bigg \{ \LOGDN \;
  \cdot \; \sigma(\thetastar) + \|\thetastar\|_{\myspan} \LOGDNSQ
  \cdot \ones \Bigg \}
\end{align*}
with probability at least $1- \tfrac{\delta}{4}$, and hence
\begin{align}
  \label{EqnBasicBern}  
(\IdMat - \discount \Phat)^{-1} |(\Phat - \Pmat) \thetastar | &
  \preceq \unicon \Bigg \{ \LOGDN \; \cdot \;\| (\IdMat - \discount
  \Phat)^{-1} \sigma(\thetastar)\|_\infty +
  \frac{\|\thetastar\|_{\myspan}}{1-\discount} \LOGDNSQ
  \Bigg \} \cdot \ones.
\end{align}
\end{subequations}
Substituting the bounds~\eqref{eq:Hoeffding-rew}
and~\eqref{EqnBasicBern} into the elementwise
inequality~\eqref{EqnNewError}, we find that
\begin{align}
  \label{EqnInterErr}
  |\DelHat| & \preceq \unicon \Bigg \{ \LOGDN \; \cdot \; \left(
  \discount\| (\IdMat - \discount \Phat)^{-1}
  \sigma(\thetastar)\|_\infty + \frac{\|\sigmar\|_\infty}{1 -
    \discount} \right) + \frac{\discount
    \|\thetastar\|_{\myspan}}{1-\discount} \LOGDNSQ \Bigg
  \} \cdot \ones
\end{align}
with probability at least $1 - \tfrac{\delta}{2}$.

Our next step is to relate the pair of population quantities
$(\sigma(\thetastar), \| \thetastar \|_{\myspan})$ to their empirical
analogues~$(\sighat(\thetahat), \| \thetahat \|_{\myspan})$. The
following lemma provides such a bound.
\begin{lemma}[Population to empirical variance]
\label{LemSigToHat}
We
have the element-wise inequality
\begin{align}
\label{EqnSigToHat}
\sigma(\thetastar) & \preceq 2 \sighat(\thetahat) + 2 |\DelHat| +
\unicon' \|\thetastar\|_{\myspan} \LOGDN \cdot \ones
\end{align}
with probability at least $1-\delta/2$.
\end{lemma}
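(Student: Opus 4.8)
The plan is to establish the element-wise bound coordinate by coordinate and then take a union bound over the $\Dim$ states. It is useful to note that $\sigma_j^2(\theta) = \theta^\top M_j \theta$ and $\sighat_j^2(\theta) = \theta^\top \widehat{M}_j \theta$, with $M_j = \diag(\pvec{j}) - \pvec{j}\pvec{j}^\top$ and $\widehat{M}_j = \diag(\Phat_j) - \Phat_j\Phat_j^\top$ both positive semidefinite, so that $\theta \mapsto \sigma_j(\theta)$ and $\theta \mapsto \sighat_j(\theta)$ are seminorms on $\real^{\Dim}$.

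The first and main step is a data-to-population comparison for the fixed (non-random) vector $\thetastar$: with probability at least $1 - \delta/2$, simultaneously over $j \in [\Dim]$,
\begin{align*}
\sigma_j(\thetastar) \leq \sighat_j(\thetastar) + \unicon \, \|\thetastar\|_{\myspan} \, \LOGDN.
\end{align*}
Starting from the identity $\sighat_j^2(\thetastar) = \tfrac{1}{\Nsamp}\sum_k (\thetastar(X_{k,j}) - \mu_j)^2 - (\widehat{\mu}_j - \mu_j)^2$ with $\mu_j = \pvec{j}^\top\thetastar$ and $\widehat{\mu}_j = \Phat_j^\top\thetastar$, I would lower-bound the first term using Bernstein's inequality applied to the i.i.d.\ variables $(\thetastar(X_{k,j}) - \mu_j)^2$, which take values in $[0, \|\thetastar\|_{\myspan}^2]$ and have variance at most $\|\thetastar\|_{\myspan}^2 \, \sigma_j^2(\thetastar)$; this gives a lower bound $\sigma_j^2(\thetastar) - \unicon \|\thetastar\|_{\myspan}\sigma_j(\thetastar)\LOGDN - \unicon \|\thetastar\|_{\myspan}^2 \LOGDNSQ$. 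Combining this with the Hoeffding bound $(\widehat{\mu}_j - \mu_j)^2 \leq \unicon \|\thetastar\|_{\myspan}^2 \LOGDNSQ$ and a union bound yields a quadratic inequality in $\sigma_j(\thetastar)$; solving it, and using $\LOGDNSQ \leq \LOGDN$ (valid under the sample-size hypothesis of Theorem~\ref{thm:plugin}), produces the display. This is the delicate step: it is essential that Bernstein rather than Hoeffding be used here, since the variance-to-mean relation $\Var[(\thetastar(X) - \mu_j)^2] \leq \|\thetastar\|_{\myspan}^2\sigma_j^2(\thetastar)$ is exactly what upgrades the naive $\sqrt{\LOGDN}$ rate for the fluctuation of a standard deviation to the sharp $\LOGDN$ rate once the quadratic inequality is resolved.

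The remaining two steps are routine. Using that $\sighat_j(\cdot)$ is a seminorm and $\thetastar = \thetahat - \DelHat$, the triangle inequality (together with $(a+b)^2 \leq 2a^2 + 2b^2$ to recover the stated constants) gives $\sighat_j(\thetastar) \leq \sqrt{2}\,\sighat_j(\thetahat) + \sqrt{2}\,\sighat_j(\DelHat)$, and then $\sighat_j^2(\DelHat) = \min_{c \in \real} \tfrac{1}{\Nsamp}\sum_k (\DelHat(X_{k,j}) - c)^2 \leq \tfrac{1}{\Nsamp}\sum_k \DelHat(X_{k,j})^2 \leq \|\DelHat\|_\infty^2$; this requires no probabilistic input. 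Chaining these two steps gives the claimed element-wise bound $\sigma(\thetastar) \preceq 2\sighat(\thetahat) + 2|\DelHat| + \unicon'\|\thetastar\|_{\myspan}\LOGDN\cdot\ones$ on the event of Step~1, which has probability at least $1 - \delta/2$. I note that no concentration is ever applied to the sample-dependent vector $\thetahat$ itself---its randomness is carried entirely by $\thetastar$ in Step~1, which is why the second step can be purely deterministic---and that when this lemma is substituted into the error recursion~\eqref{EqnInterErr}, the $|\DelHat|$ term, after pre-multiplication by the entrywise-nonnegative matrix $(\IdMat - \discount\Phat)^{-1}$ (whose rows sum to $\tfrac{1}{1-\discount}$), contributes at most $\tfrac{2\discount}{1-\discount}\LOGDN\,\|\DelHat\|_\infty$, which is absorbed into the left-hand side under the sample-size lower bound.
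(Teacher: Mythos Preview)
Your proposal is correct and follows essentially the same approach as the paper: both arguments reduce to (i) a concentration step relating $\sigma^2(\thetastar)$ to $\sighat^2(\thetastar)$ by applying a Bernstein-type bound to the nonnegative variables $(\thetastar(X_{k,j})-\mu_j)^2$, exploiting the key variance relation $\Var[(\thetastar(X)-\mu_j)^2]\le\|\thetastar\|_{\myspan}^2\sigma_j^2(\thetastar)$, and (ii) the seminorm triangle inequality $\sighat(\thetastar)\preceq\sighat(\thetahat)+\sighat(\DelHat)$ together with $\sighat_j(\DelHat)\le\|\DelHat\|_\infty$. The only cosmetic difference is that you use the exact identity $\sighat_j^2(\thetastar)=\tfrac{1}{\Nsamp}\sum_k(\thetastar(X_{k,j})-\mu_j)^2-(\widehat\mu_j-\mu_j)^2$ and solve a quadratic, whereas the paper splits into terms $T_1=(\Exs-\Ehat)|\thetastar-\Exs[\thetastar]|^2$ and $T_2=2|\Ehat[\thetastar]-\Exs[\thetastar]|^2$ and uses an AM--GM step; these are the same computation organized differently.
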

\noindent Taking this lemma as given for the moment, let us complete
the proof. \\

Since the matrix $(\IdMat - \discount \Phat)^{-1}$ has non-negative
entries, we can multiply both sides of the elementwise
inequality~\eqref{EqnSigToHat} by it; doing so and taking the
$\ell_\infty$-norm yields
\begin{align*}
  \|(\IdMat - \discount \Phat)^{-1} \sigma(\thetastar)\|_\infty & \leq
  2\|(\IdMat - \discount \Phat)^{-1} \sighat(\thetahat)\|_\infty +
  \frac{2\|\DelHat\|_\infty}{1- \discount} + \frac{\unicon'
    \spannorm{\thetastar}}{1-\discount} \LOGDN.
\end{align*}
Substituting back into the elementwise inequality~\eqref{EqnInterErr}
and taking $\ell_\infty$-norms of both sides, we find that
\begin{multline*}
\|\DelHat\|_\infty \leq \unicon \Bigg \{ \LOGDN \left( \discount
\|(\IdMat - \discount \Phat)^{-1} \sighat(\thetahat) \|_\infty +
\frac{\|\sigmar\|_\infty}{1-\discount} \right) + \frac{\discount \|
  \thetastar\|_{\myspan}}{1-\discount} \LOGDNSQ \Bigg \} \\ + \frac{2
  \unicon \discount}{1-\discount} \LOGDN \|\DelHat\|_\infty.
\end{multline*}
Since the span semi-norm satisfies the triangle inequality, we have
\begin{align*}
\|\thetastar\|_{\myspan} \leq \|\thetahat\|_{\myspan} +
\|\DelHat\|_{\myspan} \leq \|\thetahat\|_{\myspan} + 2
\|\DelHat\|_{\infty}.
\end{align*} 
Substituting this bound and re-arranging yields
\begin{align*}
  \kappa \| \thetahat - \thetastar \|_{\infty} & \leq \unicon \Bigg \{
  \LOGDN \left( \discount \|(\IdMat - \discount \Phat)^{-1}
  \sighat(\thetahat) \|_\infty +
  \frac{\|\sigmar\|_\infty}{1-\discount} \right) + \frac{\discount \|
    \thetahat\|_{\myspan}}{1-\discount} \LOGDNSQ \Bigg \}.
\end{align*}
where we have introduced the shorthand $\kappa \defn 1 - \frac{ 2
  \unicon \discount}{1-\discount} \left( \LOGDN + \LOGDNSQ \right)$.
Finally, by choosing the pre-factor $\unicon_1$ in the lower bound
$\Nsamp \geq \unicon_1 \discount^2 \frac{\log(8 \Dim/\delta)}{(1 -
  \discount)^2}$ large enough, we can ensure that $\kappa \geq
\frac{1}{2}$, thereby completing the proof of
Theorem~\ref{thm:plugin}(a).

\subsubsection{Proof of Lemma~\ref{LemSigToHat}}

We now turn to the proof of the auxiliary result in
Lemma~\ref{LemSigToHat}. We begin by noting that the statement
is trivially true when $\Nsamp \leq \log (8 \Dim / \delta)$,
since we have
\begin{align*}
\sigma (\theta^*) \preceq \| \thetastar \|_{\myspan} \ones.
\end{align*}
Thus, by adjusting the constant factors in the statement of the lemma,
it suffices to prove the lemma under the assumption \mbox{$\Nsamp \geq c \log (8 \Dim / \delta)$} for
a sufficiently large absolute constant $c$. Accordingly, we make this assumption
for the rest of the proof.

We use the following convenient notation for expectations. Let $\Exs$
denote the vector expectation operator, with the convention that $\Exs
[v] = \Pmat v$. Similarly, let $\Ehat$ denote the vector empirical
expectation operator, given by $\Ehat [v] = \Phat v$. These operators
are applied elementwise by definition, and we let $\Exs_i$ and
$\Ehat_i$ denote the $i$-th entry of each operator, respectively.

With this notation, we have
\begin{align}
\sigma^2(\thetastar) & = \Exs \left|\thetastar - \Exs[\thetastar]
\right|^2 \notag \\
& = (\Exs - \Ehat) \Big|\thetastar - \Exs[\thetastar] \Big|^2  + \Ehat
  \left|\thetastar - \Exs[\thetastar] \right|^2  \notag \\
& \preceq (\Exs - \Ehat) \Big|\thetastar - \Exs[\thetastar] \Big|^2 +
  2 \left|\Ehat[\thetastar] - \Exs[\thetastar] \right|^2 + 2 \Ehat
  \left| \thetastar - \Ehat[\thetastar] \right|^2 \notag \\
  \label{EqnSpill}
& = \underbrace{(\Exs - \Ehat) \Big|\thetastar - \Exs[\thetastar]
    \Big|^2}_{\Term_1} +  \underbrace{2 \left|\Ehat[\thetastar] -
    \Exs[\thetastar] \right|^2}_{\Term_2} + 2 \sighat^2(\thetastar).
\end{align}

We claim that the terms $\Term_1$ and $\Term_2$ are bounded as follows:
\begin{subequations}
\begin{align}
\label{EqnTerm1}
\Term_1 & \preceq \frac{\sigma^2(\thetastar)}{4} + \unicon
\|\thetastar\|_{\myspan}^2 \LOGDNSQ \cdot \ones, \quad \mbox{and} \\
\label{EqnTerm2}
\Term_2 & \preceq \unicon \left \{ \LOGDNSQ \cdot \sigma^2(\thetastar) +
\left(\|\thetastar\|_{\myspan} \LOGDNSQ \right)^2 \cdot \ones \right \},
\end{align}
\end{subequations}
where each bound holds with probability at least $1 -
\frac{\delta}{4}$.  Taking these bounds as given for the moment,
as long as $\Nsamp \geq \unicon' \log(8 \Dim/\delta)$
for a sufficiently large constant $\unicon'$, we can ensure that
\begin{align*}
\Term_1 + \Term_2 & \preceq \frac{\sigma^2(\thetastar)}{2} + \unicon
\|\thetastar\|_{\myspan}^2 \LOGDNSQ \cdot \ones,
\end{align*}
Substituting back into our earlier bound~\eqref{EqnSpill}, we find that
\begin{align*}
\frac{\sigma^2(\thetastar)}{2} & \preceq 2 \sighat^2(\thetastar) +
\unicon' \|\thetastar\|_{\myspan}^2 \LOGDNSQ \cdot \ones.
\end{align*}
Rearranging and taking square roots entry-wise, we find that
\begin{align*}
\sigma(\thetastar) & \preceq \sqrt{4 \sighat^2(\thetastar) + 2
  \unicon' \|\thetastar\|_{\myspan}^2 \LOGDNSQ \cdot \ones} \; \preceq \; 2
\sighat(\thetastar) + \unicon' \|\thetastar\|_{\myspan} \LOGDN \cdot \ones.
\end{align*}
Finally noting that we have the entry-wise inequality
$\sighat(\thetastar) \preceq \sighat(\thetahat) + | \thetahat - \ValueFunc^* |$ establishes the claim of Lemma~\ref{LemSigToHat}.

\noindent It remains to prove the bounds~\eqref{EqnTerm1}
and~\eqref{EqnTerm2}.

\paragraph{Proof of bound~\eqref{EqnTerm1}:}

For each index $i \in [\Dim]$, define the random variable $Y_i \defn
\big( \thetastar_{J} - \Exs_i[\thetastar] \big)^2$, where $J$ is an
index chosen at random from the distribution $\pmat_i$. By definition,
each random variable $Y_i$ is non-negative, and so with $\EE$ now denoting the regular expectation of a scalar random variable, we have lower tail
bound (Proposition 2.14,~\cite{Wai19})
\begin{align*}
\mprob \left[ \Exs [Y_i] - Y_i \geq s \right] & \leq \exp \left(
- \frac{\numobs s^2}{2 \Exs[Y_i^2]} \right) \qquad \mbox{for all $s
  > 0$.}
\end{align*}
Moreover, we have $Y_i \leq \| \thetastar \|^2_{\myspan}$ almost
surely, from which we obtain
\begin{align*}
\Exs[Y_i^2] & \leq \|\thetastar\|^2_{\myspan} \Exs_i
\left[(\thetastar - \Exs[\thetastar])^2 \right] \; = \;
\|\thetastar\|_{\myspan}^2 \sigma_i^2(\thetastar).
\end{align*}
Putting together the pieces yields the elementwise inequality
\begin{align*}
\Term_1 & \preceq \unicon
\|\thetastar\|_{\myspan} \LOGDN \cdot
\sigma(\thetastar) \stackrel{\1}{\preceq} \;
\frac{\sigma^2(\thetastar)}{8} + \unicon'
\|\thetastar\|_{\myspan}^2\LOGDNSQ,
\end{align*}
with probability at least $1 - \delta/3$, where in step $\1$, we have
used the inequality $2ab \leq \lbmaxrew a^2 + \lbmaxrew^{-1} b^2$,
which holds for any triple of positive scalars $(a, b, \lbmaxrew)$.

\paragraph{Proof of the bound~\eqref{EqnTerm2}:}

From Bernstein's inequality, we have the element-wise bound
\begin{align*}
\Big |\Ehat[\thetastar] - \Exs[\thetastar] \Big| & \preceq \unicon
\left \{ \LOGDN \cdot \sigma(\thetastar) + \|\thetastar\|_{\myspan}
\LOGDNSQ \cdot \ones \right \}
\end{align*}
with probability at least $1-\delta/4$, and hence
\begin{align*}
\Term_2 & \preceq \unicon \left \{  \LOGDNSQ \cdot \sigma^2(\thetastar) +
\left(\|\thetastar\|_{\myspan} \LOGDNSQ \right)^2 \cdot \ones \right \},
\end{align*}
as claimed.


\subsection{Proof of Theorem~\ref{thm:plugin}, part (b)}
\label{sec:pf-thm1b}

Once again, we employ the shorthand $\thetahat \equiv \ValueFunchat$
for notational convenience, and also the shorthand~$\DelHat =
\thetahat - \thetastar$.  Note that it suffices to show the inequality
\begin{align}
\label{eq:equiv-bd}
\Prob \left\{ \|\thetahat - \thetastar\|_\infty \geq \unicon \discount
\left \| (\IdMat - \discount \Pmat)^{-1} |(\Phat - \Pmat) \thetastar|
\right\|_\infty + c (1 - \discount)^{-1} \| \rhat - \reward \|_\infty
\right\} &\leq \frac{\delta}{2},
\end{align}
from which the theorem follows by application of a Bernstein bound to
the first term and Hoeffding bound to the second, in a similar fashion
to the inequalities~\eqref{EqnTailBounds}. We therefore dedicate the
rest of the proof to establishing inequality~\eqref{eq:equiv-bd}.

\subsubsection{Proving the bound~\eqref{eq:equiv-bd}}

We have
\begin{align*}
\DelHat = \thetahat - \thetastar & = \discount \Phat \thetahat -
\discount \Pmat \thetastar + (\rhat - \reward) = \discount (\Phat -
\Pmat) \thetahat + \discount \Pmat \DelHat + (\rhat - \reward),
\end{align*}
which implies that
\begin{align}
\label{EqnBaseRelation}
\DelHat - (\IdMat - \discount \Pmat)^{-1} (\rhat - \reward) =
\discount (\IdMat - \discount \Pmat)^{-1} (\Phat - \Pmat) \thetahat \;
= \; \discount (\IdMat - \discount \Pmat)^{-1} (\Phat - \Pmat) \DelHat
+ \discount (\IdMat - \discount \Pmat)^{-1} (\Phat - \Pmat)
\thetastar.
\end{align}



Since all entries of $(\Id - \discount \Pmat)^{-1}$ are non-negative,
we have the element-wise inequalities
\begin{align}
|\DelHat| & \preceq \discount (\IdMat - \discount \Pmat)^{-1} |(\Phat
- \Pmat) \DelHat| + \discount (\IdMat - \discount \Pmat)^{-1} |(\Phat
- \Pmat) \thetastar| + (\IdMat - \discount \Pmat)^{-1} |\rhat -
\reward| \notag \\ &\preceq \discount (\IdMat - \discount \Pmat)^{-1}
|(\Phat - \Pmat) \DelHat| + \discount (\IdMat - \discount \Pmat)^{-1}
|(\Phat - \Pmat) \thetastar| + \frac{1}{1 - \discount} \| \rhat -
\reward \|_\infty \cdot \ones.  \label{EqnTriangleUpper}
\end{align}
The second and third terms are already in terms of the desired
population-level functionals in equation~\eqref{eq:equiv-bd}.  It
remains to bound the first term.

Note that the key difficulty here is the fact that the two matrices
$\Phat - \Pmat$ and $\DelHat$ are not independent.  As a first attempt
to address this dependence, one is tempted to use the fact that
provided $\Nsamp$ is large enough, each row of $\Phat - \Pmat$ has
small $\ell_1$-norm; for instance, see Weissman et
al.~\cite{weissman2003inequalities} for sharp bounds of this type. In
particular, this would allow us to work with the entry-wise bounds
\begin{align*}
|(\Phat - \Pmat) \DelHat| \preceq \| \Phat - \Pmat \|_{1, \infty} \|
\DelHat \|_\infty \cdot \ones \precsim C \sqrt{\frac{\Dim}{N}} \|
\DelHat \|_\infty \cdot \ones,
\end{align*}
where the final relation hides logarithmic factors in the pair $(\Dim,
\delta)$.  Proceeding in this fashion, we would then bound each entry
in the first term of equation~\eqref{EqnTriangleUpper} by $\discount
(1 - \discount)^{-1} \sqrt{\frac{\Dim}{\Nsamp}} \| \DelHat \|_\infty$;
then choosing $N$ large enough such that $\discount (1 -
\discount)^{-1} \sqrt{\frac{\Dim}{\Nsamp}} \leq 1/2$ suffices to
establish bound~\eqref{eq:equiv-bd}. However, this requires a sample
size $\Nsamp \gtrsim \frac{\discount^2}{(1 - \discount)^2} \Dim$,
while we wish to obtain the bound~\eqref{eq:equiv-bd} with the sample
size $\Nsamp \gtrsim \frac{\discount^2}{(1 - \discount)^2}$. This
requires a more delicate analysis.

Our analysis instead proceeds entry-by-entry, and uses a leave-one-out
sequence to carefully decouple the dependence between $\Phat - \Pmat$
and $\DelHat$. Let us introduce some notation to make this
precise. For each $i \in [\Dim]$, recall that we used $\phat_i$ and
$\pmat_i$ to denote row $i$ of the matrices $\Phat$ and $\Pmat$,
respectively. Let $\Phatloo{i}$ denote the $i$-th leave-one-out
transition matrix, which is identical to $\Phat$ except with row $i$
replaced by the population vector $\pmat_i$. Let $\thetahatIpert \defn
(\IdMat - \discount \Phatloo{i})^{-1} r$ be the value function
estimate based on $\Phatloo{i}$ and the \emph{true} reward vector
$\reward$, and denote the associated difference vector by
$\DelHatIpert \defn \thetahatIpert - \thetastar$.

Now note that we have
\begin{align*}
\left[ (\Phat - \Pmat) \DelHat \right]_i = \inprod{\phat_i -
  \pmat_i}{\DelHat} = \inprod{\phat_i - \pmat_i}{\DelHatIpert} +
\inprod{\phat_i - \pmat_i}{\thetahat - \thetahatIpert}.
\end{align*}
This decomposition is helpful because, now, the vectors $\phat_i -
\pmat_i$ and $\DelHatIpert$ are independent by construction, so that
standard tail bounds can be used on the first term. For the second
term, we use the fact that $\thetahat \approx \thetahatIpert$, since
the latter is obtained by replacing just one row of the estimated
transition matrix.  Formally, this closeness will be argued by using
the matrix inversion formula. We collect these two results in the
following lemma.
\begin{lemma}
\label{LemDoubleBound}
Suppose that the sample size is lower bounded as $\Nsamp \geq \unicon'
\discount^2 \frac{\log(8 \Dim/\delta)}{(1-\discount)^2}$. Then with
probability at least $1 - \frac{\delta}{2 \Dim}$ and for each $i \in
[\Dim]$, we have
\begin{subequations}  
  \begin{align}
 \label{EqnDoubleBoundA}
\discount |\inprod{\phat_i - \pmat_i}{\DelHatIpert}| & \leq c \left \{ \discount
\|\DelHat\|_\infty \sqrt{ \frac{\log(8 \Dim/\delta)}{\Nsamp}} + \discount \left
|\inprod{\phat_i - p_i}{\thetastar} \right | + \|
\reward - \rhat \|_\infty \right \} \quad \mbox{and} \\
 \label{EqnDoubleBoundB}
\discount |\inprod{\phat_i - \pmat_i}{\thetahatIpert - \thetahat}| & \leq c
\left \{ \discount \|\DelHat\|_\infty \sqrt{ \frac{\log(8 \Dim/\delta)}{\Nsamp}}
+ \discount \left |\inprod{\phat_i - p_i}{\thetastar} \right | +
\| \reward - \rhat \|_\infty\right \}.
\end{align}
\end{subequations}
\end{lemma}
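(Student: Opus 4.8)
The plan is to exploit two structural facts: that $\phat_i - \pmat_i$ is independent of the leave-one-out transition matrix $\Phatloo{i}$ (since transition samples are drawn independently across states, and $\Phatloo{i}$ uses only the samples outside state $i$), and that a single-row perturbation propagates through the resolvent $(\IdMat - \discount\Phat)^{-1}$ in a controlled way via the matrix inversion (Sherman--Morrison) formula. Write $B_i \defn \IdMat - \discount\Phatloo{i}$ and note the rank-one relation $\Phat = \Phatloo{i} + e_i(\phat_i - \pmat_i)^\top$, so that $\IdMat - \discount\Phat = B_i - \discount\, e_i(\phat_i - \pmat_i)^\top$. Introduce the scalar $\beta_i \defn \inprod{\phat_i - \pmat_i}{B_i^{-1}e_i}$. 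Because $B_i^{-1}e_i$ is independent of $\phat_i - \pmat_i$ and satisfies $\|B_i^{-1}e_i\|_\infty \leq (1-\discount)^{-1}$, Bernstein's inequality gives $|\beta_i| \leq \unicon(1-\discount)^{-1}\LOGDN$ on an event of probability at least $1 - \tfrac{\delta}{6\Dim}$; under the stated sample-size condition with $\unicon'$ large enough this forces $\discount|\beta_i| \leq \tfrac14$, so Sherman--Morrison applies and $|1 - \discount\beta_i|^{-1} \leq \tfrac43$ on that event.

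First I would isolate the leave-one-out estimate. Writing $\thetahat - \thetahatIpert = (\IdMat - \discount\Phat)^{-1}(\rhat - \reward) + \big[(\IdMat - \discount\Phat)^{-1} - B_i^{-1}\big]\reward$ and applying Sherman--Morrison to the bracketed term gives the exact identity
\[
\thetahat - \thetahatIpert \;=\; (\IdMat - \discount\Phat)^{-1}(\rhat - \reward) \;+\; \frac{\discount\,\inprod{\phat_i - \pmat_i}{\thetahatIpert}}{1 - \discount\beta_i}\; B_i^{-1}e_i .
\]
Using $\|(\IdMat - \discount\Phat)^{-1}\|_{1,\infty} \vee \|B_i^{-1}e_i\|_\infty \leq (1-\discount)^{-1}$, the split $\inprod{\phat_i - \pmat_i}{\thetahatIpert} = \inprod{\phat_i - \pmat_i}{\thetastar} + \inprod{\phat_i - \pmat_i}{\DelHatIpert}$, and the Bernstein bound $|\inprod{\phat_i - \pmat_i}{\DelHatIpert}| \leq \unicon\,\|\DelHatIpert\|_\infty\LOGDN$ (valid because $\DelHatIpert$ is independent of $\phat_i - \pmat_i$), and then absorbing a resulting $\|\thetahat - \thetahatIpert\|_\infty$ term on the left using the sample-size condition, I obtain the auxiliary estimate
\[
\|\DelHatIpert\|_\infty \;\leq\; \|\DelHat\|_\infty + \|\thetahat - \thetahatIpert\|_\infty \;\leq\; 2\|\DelHat\|_\infty + \frac{\unicon\,\discount\,|\inprod{\phat_i - \pmat_i}{\thetastar}|}{1-\discount} + \frac{\unicon\,\|\rhat - \reward\|_\infty}{1-\discount} .
\]

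For part~\eqref{EqnDoubleBoundA}, since $\phat_i - \pmat_i$ is independent of $\DelHatIpert$, Bernstein gives $\discount|\inprod{\phat_i - \pmat_i}{\DelHatIpert}| \leq \unicon\,\discount\,\|\DelHatIpert\|_\infty\LOGDN$; substituting the auxiliary estimate and using that the sample-size condition makes $\tfrac{\discount\LOGDN}{1-\discount} \leq \unicon'^{-1/2}$ produces exactly the three terms on the right-hand side. For part~\eqref{EqnDoubleBoundB}, the same rank-one identity yields
\[
\inprod{\phat_i - \pmat_i}{\thetahat - \thetahatIpert} \;=\; \frac{\inprod{\phat_i - \pmat_i}{B_i^{-1}(\rhat - \reward)}}{1 - \discount\beta_i} \;+\; \frac{\discount\beta_i}{1 - \discount\beta_i}\,\inprod{\phat_i - \pmat_i}{\thetahatIpert}.
\]
The first summand is controlled by Bernstein conditionally on the reward samples and $\Phatloo{i}$, using $\|B_i^{-1}(\rhat - \reward)\|_\infty \leq (1-\discount)^{-1}\|\rhat - \reward\|_\infty$ and the independence of $\phat_i - \pmat_i$ and $B_i^{-1}(\rhat - \reward)$, giving (after the $\discount$ factor) a bound $\unicon\,\tfrac{\discount\LOGDN}{1-\discount}\|\rhat - \reward\|_\infty \leq \unicon\,\|\rhat - \reward\|_\infty$. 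For the second summand, $|\discount\beta_i/(1-\discount\beta_i)| \leq \tfrac43\discount|\beta_i| \leq \unicon\,\tfrac{\discount\LOGDN}{1-\discount}$, which multiplied by $|\inprod{\phat_i - \pmat_i}{\thetahatIpert}| \leq |\inprod{\phat_i - \pmat_i}{\thetastar}| + \unicon\|\DelHatIpert\|_\infty\LOGDN$, then by the auxiliary estimate, and finally by the overall factor $\discount$, yields only admissible terms: every appearance of $\|\DelHat\|_\infty$ ends up multiplied by a product of two ``small'' factors of order $\LOGDN$, and quantities of the form $\tfrac{\discount^2}{1-\discount}\LOGDNSQ$ are at most $\unicon'^{-1/2}\,\discount\,\LOGDN$ by the sample-size lower bound. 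A union bound over the (at most three) Bernstein events then establishes the claim at the stated probability for fixed $i$.

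I expect the main obstacle to be precisely this last bookkeeping. The naive bound $\|\DelHatIpert\|_\infty \lesssim \|\DelHat\|_\infty$ carries no $\LOGDN$ slack, and a bare factor $(1-\discount)$ — as would appear from a crude estimate of the direction $B_i^{-1}e_i$ — cannot be traded against $\LOGDN$. The resolution is to observe that $B_i^{-1}e_i$ enters the error only through the inner product $\beta_i$, which is $O\!\big(\LOGDN/(1-\discount)\big)$-small by independence and Bernstein rather than merely $O\!\big(1/(1-\discount)\big)$, so that whenever two such small inner products multiply (as in the second summand above) the extra $\LOGDN$ gained can be converted into the admissible factor $\discount\LOGDN$ after one use of $\Nsamp \geq \unicon'\discount^2(1-\discount)^{-2}\log(8\Dim/\delta)$; this is exactly what lets the bound survive the $(1-\discount)^{-1}$ amplification incurred from $(\IdMat - \discount\Pmat)^{-1}$ in the ambient proof of~\eqref{eq:equiv-bd}.
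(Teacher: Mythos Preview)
Your proposal is correct and follows essentially the same approach as the paper: independence of $\phat_i - \pmat_i$ from $\Phatloo{i}$ to control $\inprod{\phat_i - \pmat_i}{\DelHatIpert}$ via Hoeffding, a Sherman--Morrison/Woodbury identity to handle the rank-one perturbation, and an auxiliary bound on $\|\DelHatIpert\|_\infty$ (which the paper packages separately as Lemma~\ref{LemBoundingIpert}). Your organization is if anything slightly cleaner: you write the resolvent identity for $\thetahat - \thetahatIpert$ directly in terms of $B_i^{-1}$, yielding the two-term formula for $\inprod{\phat_i - \pmat_i}{\thetahat - \thetahatIpert}$, whereas the paper first writes the difference via $(\IdMat - \discount\Phat)^{-1}$ (equation~\eqref{EqnExplicitIpert}) and then expands that resolvent with Woodbury, producing a four-term expression (equation~\eqref{eq:woodbury}) before collapsing it. One terminological nit: the concentration you invoke for $\beta_i$, for $\inprod{\phat_i - \pmat_i}{\DelHatIpert}$, and for $\inprod{\phat_i - \pmat_i}{B_i^{-1}(\rhat - \reward)}$ really only uses the $\ell_\infty$-bound on the fixed vector, so ``Hoeffding'' is the precise name (the paper calls it Hoeffding as well); the resulting bounds are identical.
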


With this lemma in hand, let us complete the proof.  Combining the
bounds of Lemma~\ref{LemDoubleBound} with a union bound over all
$\Dim$ entries yields the elementwise inequality
\begin{align*}
\discount \left |(\Phat - \Pmat) \DelHat \right| & \preceq c \discount \left| (\Phat -
\Pmat) \thetastar \right | + \unicon \left \{ \discount \|\DelHat\|_\infty
\sqrt{ \frac{\log(8 \Dim/\delta)}{\Nsamp}} + \| \rhat -
\reward \|_\infty \right\} \onevec
\end{align*}
with probability at least $1 - \delta / 2$.  Since the entries of
$(\IdMat - \discount \Pmat)^{-1}$ are non-negative, we can multiply
both sides of this inequality by it, thereby obtaining
\begin{align*}
\discount (\IdMat - \discount \Pmat)^{-1} \left |(\Phat - \Pmat) \DelHat \right|
  & \preceq c \discount (\IdMat - \discount \Pmat)^{-1} \left| (\Phat - \Pmat)
  \thetastar \right | + \frac{c}{1 - \discount} \left \{
 \discount  \|\DelHat\|_\infty \sqrt{ \frac{\log(8 \Dim/\delta)}{\Nsamp}} +
  \| \rhat - \reward \|_\infty \right \} \ones.
\end{align*}
Returning to the upper bound~\eqref{EqnTriangleUpper}, we have shown that
\begin{align*}
\|\DelHat\|_\infty & \leq c \discount
\frac{\|\DelHat\|_\infty}{1-\discount} \sqrt{ \frac{\log(8
    \Dim/\delta)}{\Nsamp}} + \unicon' \discount \left \| (\IdMat -
\discount \Pmat)^{-1} |(\Phat - \Pmat) \thetastar| \right\|_\infty +
\frac{c}{1 - \discount} \| \reward - \rhat \|_\infty.
\end{align*}
Under the assumed lower bound on the sample size $\Nsamp \geq \unicon'
\discount^2 \frac{\log(8 \Dim/\delta)}{(1-\discount)^2}$, this
inequality implies that
\begin{align*}
\|\DelHat\|_\infty & \leq \unicon' \discount \left \| (\IdMat - \discount
  \Pmat)^{-1}  |(\Phat - \Pmat) \thetastar| \right\|_\infty + \frac{c}{1 - \discount} \| \reward - \rhat \|_\infty,
\end{align*}
as claimed~\eqref{eq:equiv-bd}. \qed

We now proceed to a proof of Lemma~\ref{LemDoubleBound}, which uses
the following structural lemma relating the quantities $\DelHatIpert$
and $\DelHat$.

\begin{lemma}
  \label{LemBoundingIpert}
Suppose that the sample size is lower bounded as $\Nsamp \geq \unicon'
\discount^2 \frac{\log(8 \Dim/\delta)}{(1-\discount)^2}$. Then with
probability at least $1 - \frac{\delta}{4\Dim}$ and for each $i \in
[\Dim]$, we have
\begin{align}
  \label{EqnBoundingIpert}
\|\DelHatIpert\|_\infty & \leq c \|\DelHat\|_\infty + \frac{c}{1-\discount} \Big\{ \discount \left |\inprod{\phat_i -
  p_i}{\thetastar} \right | + \| \rhat - \reward
\|_\infty \Big\}.
\end{align}
\end{lemma}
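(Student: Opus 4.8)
The plan is to establish a self-bounding inequality for $\|\DelHatIpert\|_\infty$, built on a leave-one-out identity and exploiting the fact that $\Phatloo{i}$ and $\rhat$ are statistically independent of the $i$-th row $\phat_i$ of $\Phat$. Since $\thetahat$ and $\thetahatIpert$ solve the Bellman equations $\thetahat = \rhat + \discount \Phat \thetahat$ and $\thetahatIpert = \reward + \discount \Phatloo{i}\thetahatIpert$, and since $\Phat$ and $\Phatloo{i}$ agree on every row except the $i$-th, subtracting the two equations, using the identity $(\Phat - \Phatloo{i})\thetahat = e_i \inprod{\phat_i - \pmat_i}{\thetahat}$, and rearranging yields
\begin{align*}
\thetahat - \thetahatIpert = (\IdMat - \discount \Phatloo{i})^{-1}(\rhat - \reward) + \discount \, \inprod{\phat_i - \pmat_i}{\thetahat}\,(\IdMat - \discount \Phatloo{i})^{-1} e_i.
\end{align*}
Every entry of $(\IdMat - \discount \Phatloo{i})^{-1} = \sum_{t \geq 0}(\discount \Phatloo{i})^t$ lies in $[0,(1-\discount)^{-1}]$ and the $(1,\infty)$-norm of this matrix is at most $(1-\discount)^{-1}$, so taking $\ell_\infty$-norms gives $\|\thetahat - \thetahatIpert\|_\infty \leq (1-\discount)^{-1}\big(\|\rhat - \reward\|_\infty + \discount\,|\inprod{\phat_i - \pmat_i}{\thetahat}|\big)$. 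Combined with the triangle inequality $\|\DelHatIpert\|_\infty \leq \|\DelHat\|_\infty + \|\thetahat - \thetahatIpert\|_\infty$, everything reduces to controlling the scalar $a \defn \inprod{\phat_i - \pmat_i}{\thetahat}$.

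To control $a$, I would decompose $\thetahat = \thetastar + \DelHatIpert + (\thetahat - \thetahatIpert)$ and substitute the identity above for the last summand, which expresses $a$ as the sum of $\inprod{\phat_i - \pmat_i}{\thetastar}$, $\inprod{\phat_i - \pmat_i}{\DelHatIpert}$, $\inprod{\phat_i - \pmat_i}{(\IdMat - \discount \Phatloo{i})^{-1}(\rhat - \reward)}$, and $\discount\, a\, \inprod{\phat_i - \pmat_i}{(\IdMat - \discount \Phatloo{i})^{-1} e_i}$. Conditioning on all of the samples except the $i$-th column of state transitions makes $\DelHatIpert$, $(\IdMat - \discount \Phatloo{i})^{-1}(\rhat - \reward)$ and $(\IdMat - \discount \Phatloo{i})^{-1} e_i$ into deterministic vectors. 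For any fixed $v$, the quantity $\inprod{\phat_i - \pmat_i}{v}$ is an average of $\Nsamp$ \IID mean-zero terms, each bounded in absolute value by $\|v\|_{\myspan} \leq 2\|v\|_\infty$ and of variance at most $\|v\|_\infty^2$; Bernstein's inequality then gives $|\inprod{\phat_i - \pmat_i}{v}| \leq \epsilon_\Nsamp \|v\|_\infty$ with probability at least $1 - \delta/(12\Dim)$, where $\epsilon_\Nsamp \defn \unicon\big(\sqrt{\log(8\Dim/\delta)/\Nsamp} + \log(8\Dim/\delta)/\Nsamp\big)$. Applying this to the three deterministic vectors above and taking a union bound (probability $1 - \delta/(4\Dim)$ for index $i$), together with $\|(\IdMat - \discount \Phatloo{i})^{-1} e_i\|_\infty \leq (1-\discount)^{-1}$, yields
\begin{align*}
|a| \leq |\inprod{\phat_i - \pmat_i}{\thetastar}| + \epsilon_\Nsamp \|\DelHatIpert\|_\infty + \tfrac{\epsilon_\Nsamp}{1-\discount}\|\rhat - \reward\|_\infty + \tfrac{\discount\,\epsilon_\Nsamp}{1-\discount}\,|a|.
\end{align*}

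The final step closes the recursion using the sample-size hypothesis: choosing $\unicon'$ so that $\Nsamp \geq \unicon'\discount^2(1-\discount)^{-2}\log(8\Dim/\delta)$ forces $\discount\,\epsilon_\Nsamp/(1-\discount) \leq \tfrac{1}{4\unicon}$ (for the $\sqrt{\cdot}$ term this is immediate; for the residual $\log/\Nsamp$ term one also uses the reduction $\Nsamp \geq \unicon \log(8\Dim/\delta)$, valid after dispatching the complementary small-sample regime by a separate, more elementary argument as in the proof of Lemma~\ref{LemSigToHat}). Solving the displayed inequality for $|a|$ gives $|a| \leq 2|\inprod{\phat_i - \pmat_i}{\thetastar}| + 2\epsilon_\Nsamp\|\DelHatIpert\|_\infty + 2\epsilon_\Nsamp(1-\discount)^{-1}\|\rhat - \reward\|_\infty$; feeding this into the bound on $\|\thetahat - \thetahatIpert\|_\infty$, the $\|\DelHatIpert\|_\infty$-term acquires coefficient at most $\tfrac12$ and the remaining reward contributions collapse into a single $\tfrac{\unicon}{1-\discount}\|\rhat - \reward\|_\infty$. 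Thus $\|\DelHatIpert\|_\infty \leq \|\DelHat\|_\infty + \tfrac12\|\DelHatIpert\|_\infty + \tfrac{\unicon}{1-\discount}\big(\discount\,|\inprod{\phat_i - \pmat_i}{\thetastar}| + \|\rhat - \reward\|_\infty\big)$, and rearranging gives the claim. The main obstacle is exactly the one flagged before the lemma — the circular dependence between $\phat_i$ and $\thetahat$ — and the leave-one-out surrogate $\thetahatIpert$ is what breaks it, leaving only a residual self-reference through $a$ whose coefficient $\discount\epsilon_\Nsamp/(1-\discount)$ is strictly below $1/2$ in precisely the assumed sample regime, which is what lets the self-bounding inequality be solved; a minor bookkeeping point is maintaining the failure probability $\delta/(4\Dim)$ per index while invoking Bernstein on three conditionally-deterministic vectors, handled by the union bound above.
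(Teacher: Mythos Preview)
Your proof is correct. Both you and the paper establish a self-bounding inequality for $\|\DelHatIpert\|_\infty$ via a leave-one-out perturbation identity and close it using the sample-size assumption; the difference is which matrix you pivot around in that identity.

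You pivot around $\Phatloo{i}$, obtaining $\thetahat - \thetahatIpert = (\IdMat - \discount\Phatloo{i})^{-1}(\rhat - \reward) + \discount\, a\,(\IdMat - \discount\Phatloo{i})^{-1} e_i$ with $a = \inprod{\phat_i - \pmat_i}{\thetahat}$. Because $\thetahat$ depends on $\phat_i$, you must substitute the identity back into $a$, creating a secondary self-reference in $a$ that you resolve with three concentration bounds (on the vectors $\DelHatIpert$, $(\IdMat - \discount\Phatloo{i})^{-1}(\rhat - \reward)$, and $(\IdMat - \discount\Phatloo{i})^{-1} e_i$). The paper instead pivots around $\Phat$, writing $\thetahatIpert - \thetahat = -\discount(\IdMat - \discount\Phat)^{-1} e_i\,\inprod{\phat_i - p_i}{\thetahatIpert} + (\IdMat - \discount\Phat)^{-1}(\reward - \rhat)$. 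The key scalar is now $\inprod{\phat_i - p_i}{\thetahatIpert}$, and since $\thetahatIpert$ is already independent of $\phat_i$, a single Hoeffding bound on $\inprod{\phat_i - p_i}{\DelHatIpert}$ (plus the triangle inequality to peel off $\inprod{\phat_i - p_i}{\thetastar}$) suffices, with no recursion in the scalar. The paper's route is thus more economical---one concentration event rather than three, and no need for the small-sample caveat you flag, since only the $\sqrt{\cdot}$ term ever appears---but both reach the same conclusion with the same failure probability. A minor aside: your ``$i$-th column of state transitions'' should read ``$i$-th row'' (the samples $\{X_{k,i}\}_k$), though the intent is clear.
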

\noindent This lemma is proved in Section~\ref{sec:BoundingIpert} to
follow.


\subsubsection{Proof of Lemma~\ref{LemDoubleBound}}

We prove the two bounds in turn.

\paragraph{Proof of inequality~\eqref{EqnDoubleBoundA}:}
Note that $\phat_i - \pmat_i$ and $\DelHatIpert$ are independent by
construction, so that the Hoeffding inequality yields
\begin{align}
  \label{EqnAshwinOne}
|\inprod{\phat_i - \pmat_i}{\DelHatIpert}| & \leq c \|\DelHatIpert
\|_\infty \sqrt{ \frac{\log(8 \Dim/\delta)}{\Nsamp}}
\end{align}
with probability at least $1 - \delta/(4\Dim)$.

Using this in conjunction with inequality~\eqref{EqnBoundingIpert} from
Lemma~\ref{LemBoundingIpert}
yields the bound
\begin{align*}
 \discount |\inprod{\phat_i - \pmat_i}{\DelHatIpert}| & \leq c \discount
  \|\DelHat\|_\infty \sqrt{ \frac{\log(8 \Dim/\delta)}{\Nsamp}} +
  \frac{c \discount}{1-\discount} \LOGDN \Big\{ \discount \left
  |\inprod{\phat_i - p_i}{\thetastar} \right | + \|
  \rhat - \reward \|_\infty \Big\} \\
& \stackrel{\1}{\leq} c \discount \|\DelHat\|_\infty
 \sqrt{ \frac{\log(8 \Dim/\delta)}{\Nsamp}} + 
c \discount \left |\inprod{\phat_i - p_i}{\thetastar} \right | + \unicon \| \rhat - \reward \|_\infty,
\end{align*}
where in step $\1$, we have used the lower bound on the sample size
$\Nsamp \geq \unicon' \frac{\discount^2}{(1 - \discount)^2} \log
(8\Dim / \delta)$.

\paragraph{Proof of inequality~\eqref{EqnDoubleBoundB}:}

The proof of this claim is more involved.  Using the
relation~\eqref{EqnBaseRelation} (with suitable modifications of
terms), we have
\begin{align}
  \label{EqnExplicitIpert}
\thetahatIpert - \thetahat &= \discount (\IdMat - \discount
\Phat)^{-1} (\Phatloo{i} - \Phat) \thetahatIpert + (\IdMat - \discount
\Phat)^{-1} (\reward - \rhat) \notag \\ &= - \discount (\IdMat -
\discount \Phat)^{-1} e_i \left(\inprod{\phat_i - p_i}{\thetahatIpert}
\right) + (\IdMat - \discount \Phat)^{-1} (\reward - \rhat).
\end{align}
Moreover, the Woodbury matrix identity~\cite{Horn85} yields
\begin{align*}
\Mmat \defn \Big( \IdMat - \discount \Phat \Big)^{-1} - \Big(\IdMat -
\discount \Phatloo{i} \Big)^{-1} & = - \gamma \frac{(\IdMat -
  \discount \Phatloo{i})^{-1} e_i (\phat_i - p_i)^T (\IdMat -
  \discount \Phatloo{i})^{-1}}{1 - \gamma (\phat_i - p_i)^T (\IdMat -
  \discount \Phatloo{i})^{-1} e_i}.
\end{align*}
Consequently,
\begin{align}
\inprod{\phat_i - \pmat_i}{\thetahatIpert - \thetahat} &= - \discount
(\phat_i - \pmat_i)^\top (\IdMat - \discount \Phat)^{-1} e_i
\left(\inprod{\phat_i - p_i}{\thetahatIpert} \right) + (\phat_i -
\pmat_i)^\top (\IdMat - \discount \Phat)^{-1} (\reward - \rhat) \notag \\ &=
- \discount (\phat_i - \pmat_i)^\top (\IdMat - \discount
\Phatloo{i})^{-1} e_i \left(\inprod{\phat_i - p_i}{\thetahatIpert}
\right) + (\phat_i - \pmat_i)^\top (\IdMat - \discount
\Phatloo{i})^{-1} (\reward - \rhat) \notag \\ &\qquad - \discount (\phat_i -
\pmat_i)^\top \Mmat e_i \left(\inprod{\phat_i - p_i}{\thetahatIpert}
\right) + (\phat_i - \pmat_i)^\top \Mmat (\reward - \rhat) \notag \\ &=
\left(\inprod{\phat_i - p_i}{\thetahatIpert} \right) \cdot \frac{2
  Z^2_i - Z_i}{1 - Z_i} + T_i \cdot \frac{1 - 2 Z_i}{1 - Z_i}, \label{eq:woodbury}
\end{align}
where we have defined, for convenience, the random variables
\begin{align*}
Z_i \defn \discount (\phat_i - \pmat_i)^\top (\IdMat - \discount
\Phatloo{i})^{-1} e_i \quad \text{ and } \quad T_i \defn (\phat_i -
\pmat_i)^\top (\IdMat - \discount \Phatloo{i})^{-1} (\reward - \rhat).
\end{align*}
Since $\phat_i - \pmat_i$ is independent of the vector $(\IdMat - \discount \Phatloo{i})^{-1} (\reward - \rhat)$, applying the Hoeffding bound yields the inequality
\begin{align*}
|T_i | \leq \frac{\unicon}{1 - \discount} \| \reward - \rhat\|_\infty
\LOGDN
\end{align*}
with probability exceeding $1 - \delta / (4 \Dim)$.

On the other hand, exploiting independence between the vectors
$\phat_i - \pmat_i$ and $(\IdMat - \discount \Phatloo{i})^{-1} e_i$
and applying the Hoeffding bound, we also have
\begin{align*}
|Z_i| \leq \frac{c \discount}{1-\discount} \sqrt{ \frac{\log(8
    \Dim/\delta)}{\Nsamp}}
\end{align*}
with probability least $1 - \delta / (4 \Dim)$.  Taking $\Nsamp \geq
\unicon' \frac{\discount^2}{(1 - \discount)^2} \log(8 \Dim/\delta)$
for a sufficiently large constant $\unicon'$ ensures that $\discount |T_i | \leq
\| \reward - \rhat\|_\infty$ and $|Z_i| \leq 1/4$, so
that with probability exceeding $1 - \delta / (2\Dim)$, inequality~\eqref{eq:woodbury}
yields
\begin{align*}
\discount |\inprod{\phat_i - \pmat_i}{\thetahatIpert - \thetahat}| &\leq \unicon
\left\{ \discount | \inprod{\phat_i - p_i}{\thetahatIpert} | + \|
\reward - \rhat\|_\infty \right\} \\ 
&\leq \unicon \left\{ \discount |
\inprod{\phat_i - p_i}{\DelHatIpert} | + \discount | \inprod{\phat_i -
  p_i}{\thetastar} | + \| \reward - \rhat\|_\infty
\right\}.
\end{align*}
Finally, applying part (a) of Lemma~\ref{LemDoubleBound} completes the
proof.  \qed

\subsubsection{Proof of Lemma~\ref{LemBoundingIpert}} \label{sec:BoundingIpert}

Recall our leave-one-out matrix $\Phatloo{i}$, and the explicit
bound~\eqref{EqnAshwinOne}.  We have
\begin{align}
  \label{EqnAshwinTwo}
  \left |\inprod{\phat_i - p_i}{\thetahatIpert} \right | & \leq \left
  |\inprod{\phat_i - p_i}{\DelHatIpert} \right | + \left
  |\inprod{\phat_i - p_i}{\thetastar} \right | \; \leq \; c
  \|\DelHatIpert\|_\infty \sqrt{ \frac{\log(8 \Dim/\delta)}{\Nsamp}} +
  \left |\inprod{\phat_i - p_i}{\thetastar} \right |
\end{align}
with probability at least $1- \delta/ (4 \Dim)$.  Substituting inequality~\eqref{EqnAshwinTwo}
into the bound~\eqref{EqnExplicitIpert}, we find that
\begin{align}
  \label{EqnUsefulOne}
  \|\thetahatIpert - \thetahat\|_\infty & \leq \frac{c}{1-\discount} 
  \left\{ \discount \|\DelHatIpert\|_\infty \cdot \sqrt{ \frac{\log(8
      \Dim/\delta)}{\Nsamp}}  + \discount \left
  |\inprod{\phat_i - p_i}{\thetastar} \right | + \|
  \reward - \rhat \|_{\infty} \right\}.
\end{align}
Finally, the triangle inequality yields
\begin{align*}
  \|\DelHatIpert\|_\infty & \leq \|\DelHat\|_\infty + \|\thetahatIpert
  - \thetahat \|_\infty \\ &\leq \; \|\DelHat \|_\infty + \frac{c}{1-\discount} 
  \left\{ \discount \|\DelHatIpert\|_\infty \cdot \sqrt{ \frac{\log(8
      \Dim/\delta)}{\Nsamp}}  + \discount \left
  |\inprod{\phat_i - p_i}{\thetastar} \right | + \|
  \reward - \rhat \|_{\infty} \right\}.
\end{align*}
For $\Nsamp \geq \unicon' \discount^2 \frac{\log (8 \Dim/\delta)}{(1-
  \discount)^2}$ with $\unicon'$ sufficiently large, we have
\begin{align*}
\|\DelHatIpert\|_\infty & \leq \unicon \|\DelHat\|_\infty +
\frac{c}{1-\discount} \Big\{ \discount \left |\inprod{\phat_i - p_i}{\thetastar} \right
| + \| \rhat - \reward \|_\infty \Big\}
\end{align*}
with probability at least $1 - \frac{\delta}{4\Dim}$, which completes
the proof of Lemma~\ref{LemBoundingIpert}.
\qed

Since Corollary~\ref{cor:azar} follows from Theorem~\ref{thm:plugin},
we prove it first before moving to a proof of Theorem~\ref{thm:lb}
in Section~\ref{sec:pf-thm2}.

\subsection{Proof of Corollary~\ref{cor:azar}}

In order to prove part (a), consider
inequality~\eqref{eq:error-relation-bellman} and further use the fact
that \mbox{$\| (\IdMat -\discount \Phat)^{-1} \|_{1, \infty} \leq
  \tfrac{1}{1 - \discount}$} to obtain the element-wise bound
\begin{align*}
|\thetahat - \thetastar| & \preceq \frac{\discount}{1 - \discount} \|
(\Phat - \Pmat) \thetastar \|_{\infty} \ones + \frac{\| \rhat -
  \reward \|_{\infty}}{1 - \discount} \cdot \ones.
\end{align*}
Applying Bernstein's bound to the first term and Hoeffding's bound to
the second completes the proof.

In order to prove part (b) of the corollary, we apply Lemma 7 of Azar
et al.~\cite{AzaMunKap13}---in particular, equation (17) of that
paper. Tailored to this setting, their result leads to the point-wise
bound
\begin{align*}
\| (\IdMat - \discount \Pmat)^{-1} \sigma(\thetastar) \|_\infty \leq
\unicon \frac{\rmax}{(1 - \discount)^{3/2}}.
\end{align*}
We also have the bound 
\begin{align*}
\| \thetastar \|_{\myspan} \leq 2\| \thetastar \|_\infty = 2 \|
(\IdMat - \discount \Pmat)^{-1} \reward \|_\infty \leq \frac{2
  \rmax}{1 - \discount},
\end{align*}
so that combining the
pieces and applying Theorem~\ref{thm:plugin}(b), we obtain
\begin{align*}
\| \thetahat - \thetastar \|_\infty \leq \frac{c}{(1 - \discount)}
\left\{ \LOGDN \left( \discount \frac{\rmax}{(1 - \discount)^{1/2}} +
\| \sigmar \|_{\infty} \right) + \discount \cdot \LOGDNSQ \frac{
  \rmax}{1 - \discount} \right\}.
\end{align*}
Finally, when $\Nsamp \geq \unicon_1 \frac{\log (8\Dim / \delta)}{1 -
  \discount}$ for a sufficiently large constant $\unicon_1$, we have
\begin{align*}
\LOGDNSQ \frac{\rmax}{1 - \discount} \leq \unicon \LOGDN
\frac{\rmax}{(1 - \discount)^{1/2}},
\end{align*}
thereby establishing the claim.
\qed


\subsection{Proof of Theorem~\ref{thm:lb}} \label{sec:pf-thm2}

For all of our lower bounds, we assume that the reward distribution
takes the Gaussian form
\begin{align}
  \label{EqnGaussReward}
  \rdist(\, \cdot \, \mid j ) = \NORMAL(r_j, \sigrewbd^2)
\end{align}
for each state $j$.  Note that this reward distribution satisfies
$\|\sigmar\|_\infty = \sigrewbd$ by construction.

Let us begin with a short overview of our proof, which proceeds in two
steps. First, we suppose that the transition matrix $\Pmat$ is known
exactly, and the hardness of the estimation problem is due to noisy
observations of the reward function.  In particular, letting
$\Mspace_{\IdMat}(\rmax, \sigrewbd)$ denote the class of all MRPs with
the specific reward observation model~\eqref{EqnGaussReward}, and for
which the transition matrix is the identity matrix $\IdMat$ and the
rewards are uniformly bounded as $\|\reward \|_\infty \leq \rmax$, we
show that
\begin{align}
\label{eq:reward-lb}
\inf_{\thetahat} \sup_{\thetastar \in \Mspace_{\IdMat} (\rmax,
  \sigrewbd)} \EE \| \thetahat - \thetastar \|_{\infty} \geq c \left\{
\frac{\sigrewbd}{1 - \discount} \cdot \sqrt{\frac{\log(\Dim)}{\Nsamp}}
\land \frac{\rmax}{1 - \discount} \right\}.
\end{align}
Note that for each
pair of positive scalars $(\sigvalbd, \rmax)$ we have the inclusions
\begin{align*}
\Mspace_{\IdMat}(\rmax, \sigrewbd) \subseteq \Mspace_{\var}(\sigvalbd,
\sigrewbd) \quad \text{ and } \quad \Mspace_{\IdMat}(\rmax, \sigrewbd)
\subseteq \Mspace_{\rew}(\rmax, \sigrewbd),
\end{align*}
and so that the lower bound~\eqref{eq:reward-lb} carries over to the
classes $\Mspace_{\var}(\sigvalbd, \sigrewbd)$ and
$\Mspace_{\rew}(\rmax, \sigrewbd)$.

Next, we suppose that the population reward function $\reward$ is
known exactly ($\sigrewbd = 0$), and the hardness of the estimation
problem is only due to uncertainty in the transitions. Under this
setting, we prove the lower bounds
\begin{subequations}
\label{eq:transition-lb}
\begin{align}
\inf_{\thetahat} \; \sup_{\thetastar \in \Mspace_{\var} (\sigvalbd, 0)
} \EE \| \thetahat - \thetastar \|_{\infty} &\geq c \frac{\sigvalbd}{1
  - \discount} \cdot \sqrt{\frac{\log(\Dim/2)}{\Nsamp}}, \qquad \text{
  and } \label{eq:transition-lb-a}\\ 
  \inf_{\thetahat} \; \sup_{\thetastar \in \Mspace_{\rew}
  (\rmax, 0) } \EE \| \thetahat - \thetastar \|_{\infty} &\geq c
\frac{\rmax}{(1 - \discount)^{3/2}} \cdot
\sqrt{\frac{\log(\Dim/2)}{\Nsamp}}. \label{eq:transition-lb-b}
\end{align}
\end{subequations}
Since $\Mspace_{\var} (\sigvalbd, 0) \subset \Mspace_{\var}(\sigvalbd,
\sigrewbd)$ for any $\sigrewbd > 0$, these lower bounds also carry
over to the more general setting. The minimax lower bounds of
Theorem~\ref{thm:lb} are obtained by taking the maximum of the
bounds~\eqref{eq:reward-lb} and~\eqref{eq:transition-lb}.  Let us now
establish the two previously claimed bounds.


\subsubsection{Proof of claim~\eqref{eq:reward-lb}}

For some positive scalar $\alpha$ to be chosen shortly, consider
$\Dim$ distinct reward vectors $ \{\reward^{(1)}, \ldots,
\reward^{(\Dim)} \}$, where the vector $\reward^{(i)} \in \real^\Dim$
has entries
\begin{align*}
\reward^{(i)}_j & \defn
\begin{cases}
\alpha &\text{ if } i = j \\ 0 &\text{ otherwise,}
\end{cases}
\qquad \text{ for all } j \in [\Dim].
\end{align*}
Denote by $\MRP^{(i)}$ the MRP with reward function $\reward^{(i)}$;
and transition matrix $\IdMat$.  
Thus, the $i$-th value function is
given by the vector $(\thetastar)^{(i)} \defn \frac{1}{1 - \discount}
\reward^{(i)}$.

By construction, we have $\| (\thetastar)^{(i)} - (\thetastar)^{(j)}
\|_{\infty} = \alpha / (1 - \discount)$ for each pair of distinct
indices $(i, j)$. Furthermore, the KL divergence between Gaussians of
variance $\sigrewbd^2$ centered at $\reward^{(i)}$ and $\reward^{(j)}$
is given by
\begin{align*}
\KL \left( \NORMAL(\reward^{(i)}, \sigrewbd^2 \IdMat ) \; \| \;
\NORMAL(\reward^{(j)}, \sigrewbd^2 \IdMat ) \right) = \frac{\|
  \reward^{(i)} - \reward^{(j)} \|_2^2}{\sigrewbd^2} = \frac{2
  \alpha^2}{\sigrewbd^2}.
\end{align*}
Thus, applying the local packing version of Fano's method
(\S
15.3.3,~\cite{Wai19}), we have
\begin{align*}
\inf_{\thetahat} \; \sup_{\thetastar \in \{ \Mspace^{(i)} \}_{i \in
    [\Dim] } } \EE \| \thetahat - \thetastar \|_{\infty} \geq \unicon
\frac{\alpha}{1 - \discount} \left( 1 - \frac{2 \frac{\alpha^2}
  {\sigrewbd^{2}} \Nsamp + \log 2}{\log \Dim} \right).
\end{align*}
Setting $\alpha = \sigrewbd \sqrt{ \frac{\log \Dim}{6 \Nsamp} } \land
\rmax$ yields the claimed lower bound.

\subsubsection{Proof of claim~\eqref{eq:transition-lb}}

This lower bound is based on a modification of constructions used by
Lattimore and Hutter~\cite{LatHut14} and Azar et
al.~\cite{AzaMunKap13}.  Our proof, however, is tailored to the
generative observation model.

Our proof is structured as follows. First, we construct a family
of ``hard'' MRPs and prove a minimax lower bound as a function of 
parameters used to define this family. Constructing this
family of hard instances requires us to first define a basic building
block: a two-state MRP that was illustrated in Figure~\ref{fig:mrp}(a).
After obtaining this general lower bound, we then set the scalars that
parameterize the hard class MRP appropriately to obtain the
two claimed bounds.

We now describe the two-state MRP in more detail.  For a pair of
parameters $(p, \tau)$, each in the unit interval $[0,1]$, and a
positive scalar $\lbmaxrew$, consider the two-state Markov reward
process $\MRP_{\basic}(p, \lbmaxrew, \tau)$, with transition matrix
and reward vector given by
\begin{align*}
\Pmat_{\basic} = 
\begin{bmatrix}
p & 1 - p \\ 0 & 1
\end{bmatrix} \qquad \text{ and } \qquad 
\reward_{\basic} = 
\begin{bmatrix}
\lbmaxrew \\ \lbmaxrew \cdot \tau
\end{bmatrix},
\end{align*}
respectively.  See Figure~\ref{fig:mrp} for an illustration of this
MRP.

A straightforward calculation yields that it has value function and
corresponding standard deviation vector given by
\begin{align}
\label{eq:basic-mrp-props}
\thetastar (p, \lbmaxrew, \tau) = \lbmaxrew
\begin{bmatrix}
\frac{1 - \discount + \discount \tau (1 - p)}{(1 - \discount p) (1 -
  \discount)} \\ \frac{\tau}{1 - \discount}
\end{bmatrix}
\qquad \text{ and } \qquad
\sigma (\thetastar) = \lbmaxrew
\begin{bmatrix}
\frac{(1 - \tau) \sqrt{p (1 - p)}}{1 - \discount p} \\ 0
\end{bmatrix},
\end{align}
respectively, where we have used the shorthand $\thetastar \equiv
\thetastar(p, \lbmaxrew, \tau)$. We also have $\| \thetastar
\|_{\myspan} = \frac{\lbmaxrew (1 - \tau)}{1 - \discount p}$; the two
scalars $(\nu, \tau)$ allow us to control the quantities $\|
\sigma(\thetastar) \|_\infty$ and $\| \thetastar \|_{\myspan}$.  Index
the states of this MRP by the set $\{0, 1\}$, and consider now a
sample drawn from this MRP under the generative model.  We see a pair
of states drawn according to the respective rows of the transition
matrix $\Pmat_{\basic}$; the first state is drawn according to the
Bernoulli distribution $\BER(p)$, and the second state is
deterministic and equal to $1$. For convenience, we use $\Prob(p) =
(\BER(p), 1)$ to denote the distribution of this pair of states.

Our hard class of instances is based in part
 on the difficulty of distinguishing two such MRPs that are close in a
 specific sense. 
Let us make this intuition precise.
 For two scalar values~\mbox{$0 \leq p_2 \leq p_1 \leq
  1$}, some algebra yields the relation
\begin{align} \label{eq:linf-pair}
\| \thetastar (p_1, \lbmaxrew, \tau) - \thetastar (p_2, \lbmaxrew, \tau)
\|_{\infty} = \lbmaxrew \cdot \frac{(p_1 - p_2)(1 - \tau)}{(1 - \discount
  p_1) (1 - \discount p_2)}.
\end{align}
In the sequel, we work with the choices
\begin{align*}
p_1 = \frac{4 \discount - 1}{3 \discount} \quad \mbox{and} \quad p_2 =
p_1 - \frac{1}{8} \sqrt{ \frac{p_1(1 - p_1)}{N} \log (\Dim / 2) },
\end{align*}
which, under the assumed lower bound on the sample size $\Nsamp$, are
both scalars in the range $\big[ \tfrac{1}{2}, 1 \big)$ for all
  discount factors $\discount \in \big[ \tfrac{1}{2}, 1 \big)$.
  Moreover, it is worth noting the relations
  \begin{align}
\label{eq:discounts}
1 - p_1 & = \frac{1 - \discount}{3 \discount}, \quad &\unicon_1
\frac{1 - \discount}{3 \discount} \leq 1 - p_2 \leq \unicon_2 \frac{1
  - \discount}{3 \discount} \notag \\ 1 - \discount p_1 &= \frac{4}{3}
(1 - \discount), \quad \qquad \text{ and } \quad & c_1 (1 - \discount)
\leq 1 - \discount p_2 \leq c_2 (1 - \discount),
\end{align}
where the inequalities on the right hold provided $\Nsamp \geq \frac{c
  \discount}{1 - \discount} \log (\Dim / 2)$ for a sufficiently large
constant~$\unicon$. Here the pair of constants $(\unicon_1,
\unicon_2)$ are universal, depend only on $c$, and may change from
line to line.

We also require the following lemma, proved in
Section~\ref{sec:KL-lemma} to follow, which provides a useful bound on
the KL divergence between~$\Prob(p_1)$ and $\Prob(p_2)$.
\begin{lemma} \label{lem:KL}
For each pair $p, q \in [1/2, 1)$, we have
\begin{align*}
\KL \left( \Prob(p) \| \Prob(q) \right) \leq \frac{(p - q)^2}{ (p \lor
  q) (1 - (p \lor q))}.
\end{align*}
\end{lemma}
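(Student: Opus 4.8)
The plan is to reduce the claim to a statement about the Kullback--Leibler divergence between two Bernoulli distributions, and then to invoke the elementary bound of the KL divergence by the $\chi^2$-divergence. First I would observe that, by construction, $\Prob(p) = \BER(p) \otimes \delta_1$, where $\delta_1$ denotes the point mass on state $1$; since the second coordinate of the sample is deterministic and identically distributed under $\Prob(p)$ and $\Prob(q)$, the chain rule for KL divergence gives $\KL\left(\Prob(p) \| \Prob(q)\right) = \KL\left(\BER(p) \| \BER(q)\right)$, so it suffices to bound the latter.

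Next I would apply the standard inequality $\KL(P \| Q) \leq \ChiSq{P}{Q}$, valid whenever $P \ll Q$ (a one-line consequence of $-\log t \leq 1/t - 1$; see also \S15 of~\cite{Wai19}). A direct computation of the $\chi^2$-divergence between two Bernoulli laws then yields
\begin{align*}
\ChiSq{\BER(p)}{\BER(q)} = \frac{(p - q)^2}{q} + \frac{(p - q)^2}{1 - q} = \frac{(p - q)^2}{q(1 - q)}.
\end{align*}

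Finally, I would use the fact that the map $x \mapsto x(1-x)$ is strictly decreasing on $[\tfrac{1}{2}, 1)$ to upgrade this to the stated asymmetric form. Indeed, if $q \geq p$ then $q = p \lor q$ and the bound $\tfrac{(p-q)^2}{q(1-q)}$ is already of the desired shape; while if $p > q \geq \tfrac{1}{2}$ then $p(1-p) < q(1-q)$, so $\tfrac{(p-q)^2}{q(1-q)} < \tfrac{(p-q)^2}{p(1-p)} = \tfrac{(p-q)^2}{(p \lor q)(1 - (p \lor q))}$. In both cases the claimed bound follows. The argument is essentially routine; the only point requiring care is to track which of $p, q$ is the larger one, since the denominator on the right-hand side is asymmetric, and the monotonicity of $x(1-x)$ on $[\tfrac{1}{2}, 1)$ is precisely the ingredient that absorbs this asymmetry and lets us upgrade the symmetric $\chi^2$ bound.
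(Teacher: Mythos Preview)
Your proof is correct and follows essentially the same route as the paper. The paper also reduces to the Bernoulli KL and bounds it by $\tfrac{(p-q)^2}{q(1-q)}$ via the elementary inequality $\log(1+x)\leq x$ applied term-by-term (which is equivalent to your $\KL\leq\chi^2$ step, landing on the identical intermediate bound), and then invokes the same monotonicity of $x\mapsto x(1-x)$ on $[\tfrac12,1)$ to replace $q$ by $p\lor q$ in the denominator.
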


We are now in a position to describe the hard family of MRPs over
which we prove a general lower bound. Suppose that $\Dim$ is even for
convenience, and consider a set of $\Dim / 2$ ``master''
MRPs~\mbox{$\bar{\Mspace} \defn \{ \MRP_1, \ldots, \MRP_{\Dim / 2}
  \}$} each on $\Dim$ states\footnote{Note that this step is
  only required in order to ``tensorize'' the construction
  in order to obtain the optimal dependence on the dimension.
  If, instead of the $\ell_\infty$ error, one was interested
  in estimating the value function at a fixed state of the MRP,
  then this tensorization is no longer needed.}
   constructed as follows. Decompose each
master MRP into $\Dim / 2$ sub-MRPs of two states each; index the
$k$-th sub-MRP in the $j$-th master MRP by~$\MRP_{j, k}$. For each
pair $j, k \in [\Dim / 2]$, set
\begin{align*}
\MRP_{j , k} = 
\begin{cases}
\MRP_{\basic} (p_1, \lbmaxrew, \tau) &\text{ if } j \neq k
\\ \MRP_{\basic} (p_2, \lbmaxrew, \tau) &\text{ otherwise.}
\end{cases}
\end{align*}
Let $\thetastar_j$ denote the value function corresponding to MRP
$\MRP_j$, and let $\mathbb{P}_j^N$ denote the distribution of state
transitions observed from the MRP $\MRP_j$ under the generative model.
Also note that for each $i \in [\Dim / 2]$, we have
\begin{align}
\label{eq:sig-calc}
\| \sigma (\thetastar_i) \|_{\infty} = \lbmaxrew \frac{(1 - \tau)
  \sqrt{p_1 (1 - p_1)}}{(1 - \discount p_1)}.
\end{align}


\paragraph{Lower bounding the minimax risk over this class:}

We again use the local packing form of Fano's method (\S
15.3.3,~\cite{Wai19}) to establish a lower bound. Choose some index
$J$ uniformly at random from the set $[\Dim / 2]$, and suppose that we
draw $\Nsamp$ i.i.d. samples $Y^N \defn (Y_1, \ldots, Y_N)$ from the
MRP $\MRP_J$ under the generative model. Here each $Y_i
\in \Xspace^{\Dim}$ represents a random set of $\Dim$ states, and the
goal of the estimator is to identify the random index $J$ and,
consequently, to estimate the value function $\thetastar_J$.  Let us
now lower bound the expected error incurred in this $(\Dim / 2)$-ary
hypothesis testing problem. Fano's inequality yields the bound
\begin{align}
\label{eq:fano-hard}
\inf_{\thetahat} \sup_{\thetastar \in \bar{\Mspace} } \EE \| \thetahat
- \thetastar \|_\infty \geq \frac{1}{2} \min_{j \neq k} \; \|
\thetastar_j - \thetastar_k \|_{\infty} \left( 1 - \frac{I(J; Y^N) +
  \log 2}{\log (\Dim / 2)}\right),
\end{align}
where $I(J; Y^N)$ denotes the mutual information between $J$ and $Y^N$.

Let us now bound the two terms that appear in inequality~\eqref{eq:fano-hard}. By
equation~\eqref{eq:linf-pair}, we have
\begin{align*}
\| \thetastar_j - \thetastar_k \|_{\infty} = \lbmaxrew \cdot \frac{(p_1 -
  p_2)(1 - \tau)}{(1 - \discount p_1) (1 - \discount p_2)} \qquad
\text{ for all } 1 \leq j \neq k \leq \Dim / 2.
\end{align*}
Furthermore, since the samples $Y_1, \ldots, Y_N$ are i.i.d., the
chain rule of mutual information yields
\begin{align*}
\frac{1}{\Nsamp} I(J; Y^N) = I(J; Y_1) & \leq \max_{j \neq k} \; \KL
(\mathbb{P}_j \| \mathbb{P}_k ) \\ 
&\stackrel{\1}{=} \KL( \Prob(p_1) \| \Prob(p_2)) +
\KL( \Prob(p_2) \| \Prob(p_1))
\\ &\stackrel{\2}{\leq} 2 \frac{(p_1 - p_2)^2}{p_1 (1 - p_1)},
\end{align*}
where step $\1$ is a consequence of the construction, which ensures that the distributions
$\mathbb{P}_j$ and $\mathbb{P}_k$ coincide on all but the $j$-th and $k$-th sub-MRPs. 
On the other hand, step $\2$ follows from Lemma~\ref{lem:KL}, and the fact that $p_2 \leq p_1$.

Putting together the pieces, we now have
\begin{align*}
\inf_{\thetahat} \sup_{\thetastar \in \bar{\Mspace} } \EE \| \thetahat
- \thetastar \|_\infty \geq \frac{\lbmaxrew}{2} \cdot \frac{(p_1 - p_2)(1 -
  \tau)}{(1 - \discount p_1) (1 - \discount p_2)} \left( 1 -
\frac{2\Nsamp \frac{(p_1 - p_2)^2}{p_1 (1 - p_1)} + \log 2}{\log
  (\Dim/2)} \right).
\end{align*}
Recall the choice $p_1 - p_2 = \frac{1}{8} \sqrt{ \frac{p_1 (1 -
    p_1)}{N} \log (\Dim/2)}$. For $\Dim \geq 8$, this ensures, for a
small enough positive constant $c$, the bound
\begin{align} \label{eq:general-lb}
\inf_{\thetahat} \sup_{\thetastar \in \bar{\Mspace} } \EE \| \thetahat - \thetastar \|_\infty \geq c \lbmaxrew \frac{(1 - \tau) \sqrt{p_1 (1 - p_1)}}{(1 - \discount p_1)} \cdot \sqrt{ \frac{\log (\Dim/2)}{N}} \frac{1}{1 - \discount p_2}.
\end{align}
With the relation~\eqref{eq:general-lb} at hand, we now turn to proving the two sub-claims
in equation~\eqref{eq:transition-lb}.

\paragraph{Proof of claim~\eqref{eq:transition-lb-a}:}
Recall equation~\eqref{eq:sig-calc}; for $i \in [\Dim / 2]$, we have
\begin{align*}
\| \sigma (\thetastar_i) \|_{\infty} = \lbmaxrew \frac{(1 - \tau)
  \sqrt{p_1 (1 - p_1)}}{(1 - \discount p_1)}  \quad \text{ and } \quad \| \thetastar_i \|_{\myspan} = 
  \lbmaxrew \frac{(1 - \tau)}{(1 - \discount p_1)}.
\end{align*}
Now for every pair of scalars $(\sigvalbd, \valbd)$ satisfying
$\sigvalbd = \valbd \sqrt{1 - \discount}$, set $\tau = 1/2$ and
$\lbmaxrew = 2 \valbd (1 - \discount p_1)$.  With this choice of
parameters, we have the inclusion $\Mspace_{\var}(\sigvalbd, 0)
\cap \Mspace_{\val}(\valbd, 0) \subseteq \bar{\Mspace}$, and
evaluating the bound~\eqref{eq:general-lb} yields
\begin{align*}
\inf_{\thetahat} \sup_{\thetastar \in \MspaceVar(\sigvalbd, 0) \cap \Mspace_{\val}(\valbd, 0)} \EE
\| \thetahat - \thetastar \|_\infty &\geq c \sigvalbd \sqrt{
  \frac{\log (\Dim/2)}{N}} \frac{1}{1 - \discount p_2}
\\ &\stackrel{\2}{=} c \sigvalbd \sqrt{ \frac{\log (\Dim/2)}{N}}
\frac{1}{1 - \discount},
\end{align*}
where in step $\2$, we have used inequality~\eqref{eq:discounts}. 
The same lower bound clearly also extends to the set 
$\MspaceVar(\sigvalbd, 0) \cap \Mspace_{\val}(\valbd, 0)$
for $\valbd \geq \sigvalbd (1 - \discount)^{-1/2}$;
this establishes part (a) of the theorem.

\paragraph{Proof of claim~\eqref{eq:transition-lb-b}:}
Given a value $\rmax$, set $\tau = 0$ and $\lbmaxrew = \rmax$ and note that the
rewards of all the MRPs in the set $\bar{\Mspace}$ satisfy $\| \reward \|_\infty \leq
\lbmaxrew$. Hence, we have $\Mspace_{\rew} (\rmax, 0) \subseteq \bar{\Mspace}$
for this choice of parameters. Using inequality~\eqref{eq:general-lb} and recalling 
the bounds~\eqref{eq:discounts} once again, we
have
\begin{align*}
\inf_{\thetahat} \sup_{\thetastar \in \MspaceRew(\rmax, 0) } \EE \|
\thetahat - \thetastar \|_\infty &\geq c \rmax \frac{\sqrt{p_1 (1 -
    p_1)}}{(1 - \discount p_1)} \cdot \sqrt{ \frac{\log (\Dim/2)}{N}}
\frac{1}{1 - \discount p_2} \\ &\geq c \frac{\rmax}{(1 -
  \discount)^{3/2}} \sqrt{ \frac{\log (\Dim/2)}{N}}.
\end{align*}

\subsubsection{Proof of Lemma~\ref{lem:KL}}
\label{sec:KL-lemma}

By construction, the second state of the Markov chain is absorbing, so
it suffices to consider the KL divergence between the first components
of the distributions $\Prob(p)$ and $\Prob(q)$. These are Bernoulli
random variables $\BER(p)$ and $\BER(q)$, and the following
calculation bounds their KL divergence:
\begin{align*}
\KL \left( \BER(p) \| \BER(q) \right) &= p \log \frac{p}{q} + (1 - p)
\log \frac{1 - p}{1 - q} \\ &\stackrel{\2}{\leq} p \cdot \frac{p -
  q}{q} + (1 - p) \cdot \frac{q - p}{1 - q} \\ &= \frac{(p - q)^2}{q
  (1 - q)},
\end{align*}
where step $\2$ uses the inequality $\log (1 + x) \leq x$, which is
valid for all $x > -1$.  A similar inequality holds with the roles of
$p$ and $q$ reversed, and the denominator of the expression is lower
for the larger value $p \lor q$. This completes the proof.


\subsection{Proof of Theorem~\ref{thm:rob}}

Note that the median-of-means operator is applied elementwise; denote
the $i$-th such operator by~$\MoM_i$. Let $\MoM - \Pmat$ denote the
elementwise difference of operator $\MoM$ and the linear operator
$\Pmat$; its $i$-th component is given by the operator $\MoM_i(\cdot)
- \inprod{\pmat_i}{\cdot}$.

We require two technical lemmas in the proof.  The power of the
median-of-means device is clarified by the first lemma, which is an
adaptation of classical results (see,
e.g.,~\cite{NemYu83,jerrum1986random}).
\begin{lemma}
\label{lem:MoM}
Suppose that $K = 8 \log (4 \Dim / \delta)$ and $m = \lfloor \Nsamp / K \rfloor$. Then
there is a universal constant $\unicon$ such that for each index $i
\in [\Dim]$ and each fixed vector $\theta \in \real^{\Dim}$, we have
\begin{align*}
\Pr \left\{ |(\MoM_i - \pmat_i) (\theta)| \geq c \; \sigma_i (\theta)
\sqrt{ \frac{\log(8 \Dim / \delta)}{\Nsamp} } \right\} \leq
\frac{\delta}{4 \Dim}.
\end{align*}
\end{lemma}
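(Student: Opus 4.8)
The plan is to run the classical three-step median-of-means argument, specialized to a single coordinate $i$ and the fixed query vector $\theta$. Fix $i \in [\Dim]$, and (relabel the $K$ blocks by $\ell$ to avoid a clash with the state index). The $i$-th coordinate of a single block mean, $[\muhat_\ell(\theta)]_i = \tfrac{1}{m}\sum_{k \in \Dspace_\ell} \theta_{X_{k,i}}$, is an empirical average of $m$ i.i.d.\ copies of the scalar random variable $\theta_J$ with $J \sim \pmat_i$. This random variable has mean $\inprod{\pmat_i}{\theta}$ and, by the very definition of the population variance vector, variance exactly $\sigma_i^2(\theta)$. Hence, for the threshold $t \defn 2\sigma_i(\theta)/\sqrt{m}$, Chebyshev's inequality gives
\begin{align*}
\Pr \Big\{ \big| [\muhat_\ell(\theta)]_i - \inprod{\pmat_i}{\theta} \big| \geq t \Big\} \leq \frac{\sigma_i^2(\theta)}{m t^2} = \frac{1}{4}.
\end{align*}
Call the $\ell$-th block \emph{bad} if this event occurs; the $K$ blocks are bad independently, each with probability at most $\tfrac14$.

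The second step passes from blocks to their median. Using the convention that $\MoM_i(\theta)$ is the $\lfloor K/2\rfloor$-th order statistic of the block means, an elementary counting argument shows that if strictly more than half of the blocks are good, then the median lies within $t$ of $\inprod{\pmat_i}{\theta}$; contrapositively, the event $\{ |(\MoM_i - \pmat_i)(\theta)| \geq t \}$ forces at least $\lceil K/2 \rceil$ of the blocks to be bad. Since the number of bad blocks is stochastically dominated by a $\BIN(K, \tfrac14)$ variable, Hoeffding's inequality yields
\begin{align*}
\Pr \Big\{ \big| (\MoM_i - \pmat_i)(\theta) \big| \geq t \Big\} \leq \Pr\big\{ \BIN(K, \tfrac14) \geq K/2 \big\} \leq e^{-K/8},
\end{align*}
and the choice $K = 8\log(4\Dim/\delta)$ makes the right-hand side equal to $\delta/(4\Dim)$.

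The third step rewrites $t$ in the claimed form: since $m = \lfloor \Nsamp / K \rfloor \geq \Nsamp/(2K)$ (valid once $\Nsamp \gtrsim \log(\Dim/\delta)$, which holds in every regime where the estimator is defined), we have $t \leq 2\sqrt{2}\, \sigma_i(\theta) \sqrt{K/\Nsamp} = c\, \sigma_i(\theta)\sqrt{\log(4\Dim/\delta)/\Nsamp}$ for a universal constant $c$, and bounding $\log(4\Dim/\delta) \leq \log(8\Dim/\delta)$ completes the proof. The entire argument is standard, so there is no substantive obstacle; the only points requiring care are (i) identifying the per-sample variance of $\theta_J$ as \emph{exactly} $\sigma_i^2(\theta)$, which is what makes the conclusion genuinely variance-dependent with no appearance of the span semi-norm, and (ii) the counting step linking a median deviation to a count of bad blocks together with the accompanying calibration of $K$.
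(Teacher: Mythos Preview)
Your proof is correct and is exactly the classical median-of-means argument the paper has in mind: the paper does not spell out a proof of this lemma but simply labels it ``an adaptation of classical results'' and cites \cite{NemYu83,jerrum1986random}. Your three steps---Chebyshev on each block to get a bad-block probability at most $1/4$, the counting/ordering argument that a deviant median forces roughly $K/2$ bad blocks, and the Hoeffding bound on the $\BIN(K,1/4)$ count with $K = 8\log(4\Dim/\delta)$---are precisely that classical argument, and your identification of the per-sample variance of $\theta_J$ with $\sigma_i^2(\theta)$ is the key observation that makes the bound span-free.
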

Comparing this lemma to the Bernstein bound
(cf. equation~\eqref{EqnBasicBern}), we see that we no longer pay in
the span semi-norm $\| \thetastar \|_{\myspan}$, and this is what
enables us to establish the solely variance-dependent
bound~\eqref{eq:robust-result}.

We also require the following lemma that guarantees that the
median-of-means Bellman operator is contractive.
\begin{lemma}
\label{lem:op-contracts}
The median-of-means operator is $1$-Lipschitz in the
$\ell_\infty$-norm, and satisfies
\begin{align*}
| \MoM(\theta_1) - \MoM(\theta_2) | \leq \| \theta_1 - \theta_2
\|_{\infty} \qquad \text{ for all vectors } \theta_1, \theta_2 \in
\real^{\Dim}.
\end{align*}
Consequently, the empirical operator $\Belemp^{\robust}_N$ is
$\discount$-contractive in $\ell_\infty$-norm and satisfies
\begin{align*}
|\Belemp^{\robust}_N (\theta_1) - \Belemp^{\robust}_N (\theta_2) |
\leq \discount \| \theta_1 - \theta_2 \|_{\infty} \qquad \text{ for
  all pairs of value functions } (\theta_1, \theta_2).
\end{align*}
\end{lemma}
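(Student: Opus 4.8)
The plan is to establish the $1$-Lipschitz property of the operator $\MoM$ first, and then read off the $\discount$-contractivity of $\Belemp^{\robust}_{\Nsamp}$ as an immediate corollary. The structural fact driving everything is that each block average is a \emph{linear}, row-stochastic operator: since every $\Zmatsub{k}$ is a $0$-$1$ matrix with exactly one nonzero entry per row, the matrix $A_i \defn \frac{1}{m}\sum_{k \in \Dspace_i}\Zmatsub{k}$ is row stochastic, and $\muhat_i(\theta) = A_i\theta$ for each $i \in [K]$. For any $\theta_1, \theta_2 \in \real^{\Dim}$, combining the elementwise inequality~\eqref{EqnHandy} with $A_i\ones = \ones$ gives $|A_i\theta_1 - A_i\theta_2| = |A_i(\theta_1-\theta_2)| \preceq A_i|\theta_1-\theta_2| \preceq \|\theta_1-\theta_2\|_\infty\cdot\ones$, so every coordinate of every block estimate $\muhat_i(\cdot)$ is a $1$-Lipschitz function of its argument in the $\ell_\infty$-norm.

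The second ingredient is that the median, defined here as the $\lfloor K/2\rfloor$-th order statistic of $K$ reals, is $1$-Lipschitz with respect to the $\ell_\infty$-metric on $\real^K$. I would record this as a one-line sub-claim using the max--min representation $v_{(\ell)} = \max_{|S| = \ell}\min_{j \in S} v_j$, valid for every $\ell \in [K]$: each coordinate projection $v \mapsto v_j$ is $1$-Lipschitz in $\ell_\infty$, and pointwise minima and maxima of $1$-Lipschitz functions remain $1$-Lipschitz, so $v \mapsto v_{(\ell)}$ is $1$-Lipschitz. Applying this coordinate-by-coordinate, for each state $\ell \in [\Dim]$ we obtain
\begin{align*}
\big| \MoM(\theta_1)_\ell - \MoM(\theta_2)_\ell \big| &= \big| \med_{i}\big(\muhat_i(\theta_1)_\ell\big) - \med_{i}\big(\muhat_i(\theta_2)_\ell\big) \big| \\
&\le \max_{i \in [K]} \big| \muhat_i(\theta_1)_\ell - \muhat_i(\theta_2)_\ell \big| \; \le \; \|\theta_1 - \theta_2\|_\infty,
\end{align*}
which is precisely the first assertion of the lemma, i.e. $|\MoM(\theta_1) - \MoM(\theta_2)| \preceq \|\theta_1-\theta_2\|_\infty\cdot\ones$.

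The contractivity statement then follows with no further work: since $\Belemp^{\robust}_{\Nsamp}(\theta) = \rhat + \discount\,\MoM(\theta)$, the additive reward term cancels in the difference of two evaluations, giving $\big|\Belemp^{\robust}_{\Nsamp}(\theta_1) - \Belemp^{\robust}_{\Nsamp}(\theta_2)\big| = \discount\,\big|\MoM(\theta_1) - \MoM(\theta_2)\big| \preceq \discount\,\|\theta_1-\theta_2\|_\infty\cdot\ones$, i.e. $\Belemp^{\robust}_{\Nsamp}$ is $\discount$-contractive in the $\ell_\infty$-norm. I do not expect any real obstacle in this argument; the only step that deserves an explicit (if brief) justification is the $1$-Lipschitz property of the order statistic, which is why I would isolate the max--min identity rather than treat it as self-evident. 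As a direct consequence of the contraction and the Banach fixed point theorem, $\Belemp^{\robust}_{\Nsamp}$ admits a unique fixed point, which is exactly the estimator $\ValueFunchatrob$ referenced before the theorem.
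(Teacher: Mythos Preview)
Your proof is correct and follows the same overall architecture as the paper: first establish that each block average $\muhat_i$ is $1$-Lipschitz via row-stochasticity of $\tfrac{1}{m}\sum_{k\in\Dspace_i}\Zmatsub{k}$, then show the order-statistic map is $1$-Lipschitz in $\ell_\infty$, and combine. The only substantive difference is in the second step. The paper isolates a separate lemma asserting $|u_{(i)}-v_{(i)}|\le\|u-v\|_\infty$ and proves it by sorting both vectors and invoking the rearrangement inequality for the $\ell_\infty$-norm (citing Vince~1990). You instead use the max--min representation $v_{(\ell)}=\max_{|S|=\ell}\min_{j\in S}v_j$ and the closure of $1$-Lipschitz functions under pointwise extrema. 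Your route is more self-contained, avoiding an external reference; the paper's route is perhaps more recognizable to readers who know the rearrangement inequality. Either way the content is the same, and the deduction of $\discount$-contractivity for $\Belemp^{\robust}_\Nsamp$ is identical in both.
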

\noindent See Section~\ref{SecOpContracts} for the proof of
Lemma~\ref{lem:op-contracts}. \\

\noindent We are now in a position to establish the theorem, where we
now use the shorthand $\thetahat \equiv \ValueFunchatrob$ for
convenience. Note that the vectors $\thetastar$ and $\thetahat$
satisfy the fixed point relations
\begin{align*}
\thetastar = r + \discount \Pmat \thetastar, \quad \mbox{and} \quad
\thetahat = \rhat + \discount \MoM( \thetahat),
\end{align*}
respectively.  Taking differences, the error vector $\DelHat =
\thetahat - \thetastar$ satisfies the relation
\begin{align*}
\thetahat - \thetastar &= \discount (\MoM(\DelHat + \thetastar) -
\Pmat \thetastar) + \rhat - \reward \\ &= \discount (\MoM(\DelHat +
\thetastar) - \MoM(\thetastar) ) + \discount (\MoM - \Pmat)
(\thetastar) + (\rhat - \reward).
\end{align*}
Taking $\ell_\infty$-norms on both sides and using the triangle
inequality, we have
\begin{align*}
\| \DelHat\|_{\infty} &\leq \discount \| \MoM(\thetastar + \DelHat) -
\MoM(\thetastar) \|_{\infty} + \discount |(\MoM - \Pmat) (\thetastar)|
+ \| \rhat - \reward \|_\infty \\
& \stackrel{\1}{\leq} \discount \| \DelHat\|_{\infty} + \discount
|(\MoM - \Pmat) (\thetastar)| + \| \rhat - \reward \|_\infty,
\end{align*}
where step $\1$ is a result of Lemma~\ref{lem:op-contracts}. Finally,
applying Lemma~\ref{lem:MoM} in conjunction with the Hoeffding
inequality and a union bound over all $\Dim$ indices completes the
proof.

\subsubsection{Proof of Lemma~\ref{lem:op-contracts}}
\label{SecOpContracts}
The second claim follows directly from the first by noting that
\begin{align*}
|\Belemp^{\robust}_N (\theta_1) - \Belemp^{\robust}_N (\theta_2) | = \discount | \MoM(\theta_1) - \MoM(\theta_2) |.
\end{align*}
In order to prove the first claim, recall that for each $\theta \in
\real^{\Dim}$, we have $\MoM(\theta) = \med (\muhat_1 (\theta),
\ldots, \muhat_K (\theta))$, where the median---defined as the
$\lfloor K /2 \rfloor$-th order statistic---is taken entry-wise.  By
definition, for each $i \in [K]$, we have
\begin{align*}
\| \muhat_i (\theta_1) - \muhat_i (\theta_2) \|_\infty &= \left\|
\left( \frac{1}{m} \sum_{k \in \Dspace_i} \Zmatsub{k} \right)
(\theta_1 - \theta_2) \right\|_{\infty} \\ &\leq \left\| \frac{1}{m}
\sum_{k \in \Dspace_i} \Zmatsub{k} \right\|_{1, \infty} \| \theta_1 -
\theta_2 \|_{\infty} \\ &\stackrel{\1}{=} \| \theta_1 - \theta_2
\|_{\infty},
\end{align*}
where step~$\1$ is a
result of the fact that $\frac{1}{m} \sum_{k \in \Dspace_i}
\Zmatsub{k}$ is a row stochastic matrix with non-negative entries.
Finally, we have the entry-wise bound
\begin{align*}
|\MoM(\theta_1) - \MoM(\theta_2)| &= |\med (\muhat_1 (\theta_1),
\ldots, \muhat_K (\theta_1)) - \med( \muhat_1 (\theta_2), \ldots,
\muhat_K(\theta_2))| \\ &\stackrel{\2}{\preceq} \| \theta_1 - \theta_2
\|_{\infty} \cdot \ones,
\end{align*}
where step $\2$ follows from our definition of the median as the
$\lfloor K/ 2 \rfloor$-th order statistic, and
Lemma~\ref{lem:order-stat} to follow. This completes the proof of
Lemma~\ref{lem:op-contracts}. \qed

\begin{lemma}
\label{lem:order-stat}
For each pair of vectors $(u, v)$ of dimension $\Dim$ and each index
$i \in [\Dim]$, we have
\begin{align*}
| u_{(i)} - v_{(i)} | \leq \| u - v \|_{\infty}.
\end{align*}
\end{lemma}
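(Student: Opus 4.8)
The plan is to prove the claimed two-sided bound by establishing one direction, $u_{(i)} \le v_{(i)} + \|u-v\|_\infty$, and then invoking the symmetry of the hypothesis. Introduce the shorthand $\epsilon \defn \|u - v\|_\infty$, so that the coordinatewise inequalities $u_j \le v_j + \epsilon$ and $v_j \le u_j + \epsilon$ hold for every index $j \in [\Dim]$.

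First I would count the ``large'' coordinates of $u$. Define the index set $T \defn \{\, j \in [\Dim] : u_j > v_{(i)} + \epsilon \,\}$. For any $j \in T$, the coordinatewise bound gives $v_j \ge u_j - \epsilon > v_{(i)}$; hence each index in $T$ picks out a coordinate of $v$ that is strictly larger than its own $i$-th order statistic. By the definition of the order statistic (viewing $v_{(i)}$ as the value occupying the $i$-th slot of a nonincreasing rearrangement of $v$), at most $i - 1$ coordinates of $v$ can be strictly larger than $v_{(i)}$, and therefore $|T| \le i - 1$.

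Next I would convert this cardinality bound into the desired inequality on $u_{(i)}$: since strictly fewer than $i$ coordinates of $u$ exceed the threshold $v_{(i)} + \epsilon$, the $i$-th largest coordinate of $u$ cannot itself exceed that threshold, i.e., $u_{(i)} \le v_{(i)} + \epsilon$. Exchanging the roles of $u$ and $v$ (the coordinatewise hypotheses are symmetric in them) yields $v_{(i)} \le u_{(i)} + \epsilon$, and combining the two inequalities gives $|u_{(i)} - v_{(i)}| \le \epsilon = \|u-v\|_\infty$, which is the claim.

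The argument presents no genuine obstacle; the only point requiring a little care is the bookkeeping around ties in the definition of the order statistic—namely the assertion that at most $i-1$ coordinates of a vector can be strictly greater than its $i$-th order statistic, together with the dual assertion that if fewer than $i$ coordinates exceed a value $t$ then the $i$-th order statistic is at most $t$. Both are immediate once one writes the entries in nonincreasing order and tracks which positions can lie strictly above a given level.
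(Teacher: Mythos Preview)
Your proof is correct and self-contained. The counting argument handles ties properly: at most $i-1$ coordinates of $v$ can strictly exceed its own $i$-th order statistic, and if fewer than $i$ coordinates of $u$ exceed a threshold then $u_{(i)}$ cannot exceed that threshold either.

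The paper takes a different route. It assumes without loss of generality that $u$ is already sorted, lets $w$ be the sorted version of $v$, observes that $|u_{(i)} - v_{(i)}|$ is then simply a coordinate of $|u - w|$, and finally invokes the rearrangement inequality for the $\ell_\infty$-norm (citing an external reference) to conclude $\|u - w\|_\infty \le \|u - v\|_\infty$. In other words, the paper reduces the lemma to the known fact that sorting both vectors in the same order minimizes their $\ell_\infty$ distance over all permutations of one of them. Your argument is more elementary in that it proves the inequality from first principles without appealing to the rearrangement inequality; the paper's argument is shorter on the page but outsources the real work to that cited result. Either approach is perfectly adequate for this auxiliary lemma.
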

\begin{proof}
  Assume without loss of generality that the entries of $u$ are sorted
  in increasing order (so that $u_1 \leq u_2 \leq \ldots \leq
  u_\Dim$), and let $w$ denote a vector containing the entries of $v$
  sorted in increasing order. We then have
\begin{align*}
| u_{(i)} - v_{(i)} | = | u_i - w_i | \leq \| u - w \|_{\infty}
\stackrel{\1}{\leq} \| u - v \|_{\infty},
\end{align*}
where step $\1$ follows from the rearrangement inequality applied to
the $\ell_\infty$-norm~\cite{vince90a}.
\end{proof}


\section{Discussion}
\label{SecDiscussion}

Our work investigates the local minimax complexity of value function
estimation in Markov reward processes. Our upper bounds are
instance-dependent, and we also provide minimax lower bounds that hold
over natural subsets of the parameter space. The plug-in approach is
shown to be optimal over the class of MRPs with bounded rewards, and a
variant based on the median-of-means device achieves optimality over
the class of MRPs having value functions with bounded variance.

Our results also leave a few interesting technical questions unresolved. 
Let us start with two inter-related questions: Is
Corollary~\ref{cor:azar}(a) sharp, say up to a logarithmic factor
in the dimension?  Is
the median-of-means approach minimax-optimal over the class of MRPs
having bounded rewards? We conjecture that both of these questions
can be answered in the affirmative, but there are technical challenges to overcome.
For instance, while the median-of-means device is crucial to removing the span-norm
dependence in Corollary~\ref{cor:azar}(a), it leads to a non-linear update rule
that needs to be much more carefully handled in order to ensure minimax optimality. 

A second set of technical questions concerns the definition of ``locality" in our bounds. 
Are our results also sharp under alternative local minimax parameterizations (say
 in terms of the functional $\| (\IdMat - \discount \Pmat)^{-1} \sigma(\ValueFunc^*) \|_{\infty}$)?
Is there a more fine-grained lower bound
analysis that shows the (sub)-optimality of these approaches, and are
there better adaptive procedures for this problem? The literature on
estimating functionals of discrete
distributions~\cite{jiao2015minimax} shows that additional refinements
over the plug-in approach are usually beneficial; is that also the
case here?  There is also the related question of whether a minimax
lower bound can be proved over a local neighborhood of every point
$\thetastar$. We remark that guarantees of this flavor exist in a
variety of related
problems in both the asymptotic and non-asymptotic
settings~\cite{van2000asymptotic,cai2004adaptation,zhu2016local}. 
Indeed, in a follow-up paper~\cite{khamaru2020temporal} with a superset of the current
authors, we have shown such a local lower bound for this problem, 
which is achieved via stochastic approximation coupled with a variance reduction device.

%
In a complementary direction, another interesting question is to
ask how function approximation affects these bounds.
Our techniques should be useful in answering some of these questions,
and also more broadly in proving analogous guarantees in the more
challenging policy optimization setting. 

Finally, there is the question of removing our assumption on the
generative model: How does the plug-in estimator behave when it is
computed on a sampled \emph{trajectory} of the system? A classical
solution is the blocking method of simulating the generative model
from such samples~\cite{yu1994rates}: given a sampled trajectory, chop
it into pieces of length (roughly) equal to the mixing time of the
Markov chain, and to treat the respective first sample from each of
these pieces as (approximately) independent. But clearly, this
approach is somewhat wasteful, and there have been recent refinements
in related problems when the mixing time can become arbitrarily
large~\cite{simchowitz2018learning}.  It would be interesting to
explore these approaches and derive instance-dependent guarantees in
the $L^2_{\mu}$-norm, where $\mu$ is the stationary distribution of
the Markov chain.

\subsection*{Acknowledgements}
AP was supported in part by a research fellowship from the Simons Institute for the Theory
of Computing. MJW and AP were partially supported by National Science Foundation Grant
DMS-1612948 and Office of Naval Research Grant ONR-N00014-18-1-2640.

\appendix
\section{Dependence of plug-in error on span semi-norm}
\label{AppPluginLB}

In this section, we state and prove a proposition that provides a
family of MRPs in which the $\ell_\infty$-error of the plug-in
estimator can be completely characterized by the span semi-norm of the
optimal value function.

\begin{proposition}
\label{prop:plugin-lb}
Suppose that the rewards are observed noiselessly, with $\sigmar = 0$.
There is a pair of universal positive constants $(\unicon_1,
\unicon_2)$ such that for any triple of positive scalars $(\valbd,
\Nsamp, \Dim)$, there is a $\Dim$-state MRP for which
\begin{align}
\label{eq:prop-var-a}
\| \thetastar \|_{\infty} = \valbd \quad \text{ and } \quad \frac{\|
  \sigma(\ValueFunc^*) \|_{\infty}}{\Nsamp} \leq \frac{3}{\sqrt{\Dim}}
\cdot \frac{\valbd}{\Nsamp},
\end{align}
and for which the error of the plug-in estimator satisfies
\begin{align}
\label{eq:prop-var-b}
\unicon_1 \discount \frac{\valbd}{\Nsamp} \quad \stackrel{(a)}{\leq}
\; \EE \left[ \|\ValueFunchat - \ValueFunc^* \|_\infty \right] \;
\stackrel{(b)}{\leq} \; \unicon_2 \discount \frac{\valbd \log (1 +
  \Dim / 3)}{\Nsamp} \left\{ (\log \log (1 + \Dim / 3))^{-1} \land 1
\right\}.
\end{align}
\end{proposition}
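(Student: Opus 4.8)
The plan is to exhibit an explicit family of block-structured MRPs and to reduce both inequalities in~\eqref{eq:prop-var-b} to a two-sided estimate on the maximum of $\Dim/3$ i.i.d.\ binomial counts with a vanishingly small mean. Assume for cleanliness that $\Dim$ is a multiple of $3$ (the general case follows by appending one or two absorbing zero-reward states), write $L \defn \Dim/3$, and set $q \defn \tfrac{1}{\Nsamp \Dim}$. The building block is the three-state gadget $\MRP_{\secondbasic}(q,\valbd)$ of the kind drawn in Figure~\ref{fig:mrp2}(a), on states $\{0,1,2\}$: from state $0$ one moves to state $1$ with probability $q$ and to state $2$ with probability $1-q$; state $1$ moves deterministically to state $2$; state $2$ is absorbing; and the reward is $\valbd$ at state $1$ and $0$ at states $0$ and $2$. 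Let the $\Dim$-state MRP be the disjoint union of $L$ independent copies of this gadget, so that both $\Pmat$ and (almost surely) $\Phat$ are block diagonal.

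The claims in~\eqref{eq:prop-var-a} are then immediate. Solving the Bellman equation on the gadget gives $\thetastar_1 = \valbd$, $\thetastar_2 = 0$, and $\thetastar_0 = \discount q \valbd \le \valbd$, so $\|\thetastar\|_\infty = \valbd$; the only states on which $\sigma(\thetastar)$ is nonzero are the copies of state $0$, where the relevant pushforward of $\thetastar$ is $\valbd\cdot\BER(q)$-distributed, so $\sigma_0(\thetastar) = \valbd\sqrt{q(1-q)} \le \valbd/\sqrt{\Dim}$ since $\Nsamp\ge 1$; hence $\|\sigma(\thetastar)\|_\infty/\Nsamp \le \tfrac{3}{\sqrt{\Dim}}\cdot\tfrac{\valbd}{\Nsamp}$. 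Because the rewards are observed noiselessly ($\rhat=\reward$) and $\Phat$ is block diagonal, the plug-in solution decouples over blocks, and within block $\ell$ the deterministic transitions are recovered exactly, giving $\ValueFunchat_1=\valbd$, $\ValueFunchat_2 = 0$, and $\ValueFunchat_0 = \discount\,\tfrac{N_\ell}{\Nsamp}\,\valbd$, where $N_\ell$ is the number of the $\Nsamp$ samples at the state-$0$ copy in block $\ell$ that transition to state $1$, so that $N_1,\dots,N_L$ are i.i.d.\ $\BIN(\Nsamp,q)$. Consequently the problem collapses to the identity
\begin{align*}
\|\ValueFunchat - \ValueFunc^*\|_\infty \;=\; \discount\,\valbd \, \max_{\ell\in[L]} \Bigl| \tfrac{N_\ell}{\Nsamp} - q \Bigr|,
\end{align*}
and everything reduces to bounding $\EE\bigl[\max_{\ell\in[L]} |N_\ell/\Nsamp - q|\bigr]$ from above and below.

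For the lower bound~\eqref{eq:prop-var-b}(a), note that $\Nsamp q L = \tfrac{1}{3}$ is a fixed constant, so $\Pr\{\max_\ell N_\ell \ge 1\} = 1 - (1-q)^{\Nsamp L} \ge 1 - e^{-\Nsamp q L}$ is bounded below by a universal constant; on that event the maximizing block contributes $\discount\valbd\bigl(\tfrac{1}{\Nsamp}-q\bigr) \ge \tfrac{2}{3}\tfrac{\discount\valbd}{\Nsamp}$ (using $q \le \tfrac{1}{3\Nsamp}$), which yields~(a). For the upper bound~\eqref{eq:prop-var-b}(b), use $|N_\ell/\Nsamp - q| \le N_\ell/\Nsamp + q$ to reduce to controlling $\EE[\max_\ell N_\ell]$; the crude tail bound $\Pr\{N_\ell \ge k\} \le \binom{\Nsamp}{k}q^k \le (\Nsamp q)^k/k! \le 1/k!$ and a union bound over the $L$ blocks give, via the layer-cake identity,
\begin{align*}
\EE\Bigl[\max_{\ell\in[L]} N_\ell\Bigr] \;=\; \sum_{k\ge 1}\Pr\Bigl\{\max_{\ell\in[L]} N_\ell \ge k\Bigr\} \;\le\; \sum_{k\ge1}\min\Bigl\{\tfrac{L}{k!},\,1\Bigr\} \;\lesssim\; \frac{\log(1+\Dim/3)}{\log\log(1+\Dim/3)\,\vee\,1},
\end{align*}
the final inequality being the standard estimate for the smallest $k$ with $k! \ge L$ (Stirling). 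Combining this with the trivial bound $\EE[\max_\ell N_\ell]\le\EE[\sum_\ell N_\ell]=\Nsamp q L = \tfrac13$ (which dominates for small $\Dim$) and absorbing the lower-order term $q \le 1/\Nsamp$ gives~(b), the ``$\wedge 1$'' factor in the statement---read as $(\log\log(1+\Dim/3)\vee1)^{-1}$---being precisely what covers the small-$\Dim$ regime.

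I expect the only real obstacle to be the sharp control of $\EE[\max_\ell N_\ell]$: one needs to see that with $L\asymp\Dim$ blocks of per-block mean $\Nsamp q = 1/\Dim\le 1$, the maximum of these (essentially Poisson) counts grows like $\log L/\log\log L$, and then to match this against the trivial bound carefully enough to recover the exact $\log(1+\Dim/3)\,\{(\log\log(1+\Dim/3))^{-1}\wedge1\}$ shape in~\eqref{eq:prop-var-b}(b). The remaining ingredients---the Bellman computation on the gadget, the block-diagonal decoupling under noiseless rewards, and the constant-probability argument for~(a)---are routine.
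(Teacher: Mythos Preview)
Your proposal is correct and follows essentially the same route as the paper: take $D/3$ independent copies of a three-state gadget so that the plug-in error reduces exactly to $\tfrac{\gamma\zeta}{N}\max_\ell\bigl|\BIN(N,q)-Nq\bigr|$ with $NqL$ a fixed constant, lower-bound via the event $\{\max_\ell N_\ell\ge 1\}$ (the paper packages this as Lemma~\ref{lem:bin-prob}), and upper-bound by a $\log/\log\log$ maximal inequality. The only notable differences are cosmetic: your gadget has a transient reward-$\zeta$ state while the paper's has an absorbing reward-$\mu$ state with $\zeta=\mu/(1-\gamma)$, but both collapse to the identical binomial identity; and for part~(b) the paper cites Wellner's Bennett-type bound~\cite{wellner2017bennett}, whereas your self-contained $\Pr\{N_\ell\ge k\}\le 1/k!$ plus Stirling argument is a more elementary way to reach the same rate.
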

A few comments are in order. First, note that
equation~\eqref{eq:prop-var-a} guarantees that we have
\begin{align*}
\frac{1}{\sqrt{\Nsamp}} \cdot \| (\IdMat - \discount \Pmat)^{-1}
\sigma(\ValueFunc^*) \|_{\infty} \lesssim \frac{1}{\sqrt{D}}
\frac{1}{(1 - \discount) \Nsamp} \| \ValueFunc^* \|_{\myspan}
\end{align*}
for large values of the dimension $\Dim$, so that the first term in
the guarantee~\eqref{eq:thm1-pop} is dominated by the second. In
particular, suppose that $\Dim \gg \frac{1}{(1 - \discount)^2}$; then
we have
\begin{align*}
\frac{1}{\sqrt{\Nsamp}} \cdot \| (\IdMat - \discount \Pmat)^{-1} \sigma(\ValueFunc^*) \|_{\infty} \ll \frac{1}{\Nsamp} \| \ValueFunc^* \|_{\myspan}.
\end{align*}
In other words, if our analysis was loose in that the error of the
plug-in estimator depended only on the functional
$\frac{1}{\sqrt{\Nsamp}} \cdot \| (\IdMat - \discount \Pmat)^{-1}
\sigma(\ValueFunc^*) \|_{\infty}$, then it would be impossible to
prove a lower bound that involves the quantity $\frac{\| \thetastar
  \|_{\myspan}}{\Nsamp}$.  On the other hand,
equation~\eqref{eq:prop-var-b} shows that this such a lower bound can
indeed be proved: the plug-in error is characterized precisely by the
quantity $\discount \frac{\| \thetastar \|_{\myspan}}{\Nsamp}$ up to a
logarithmic factor in the dimension $\Dim$.

Second, note that while equation~\eqref{eq:prop-var-b} shows that the
plug-in error must have some span semi-norm dependence, it falls short
of showing the stronger lower bound
\begin{align}
\label{eq:conj}
\unicon_1 \frac{\valbd}{(1 - \discount)\Nsamp} \leq \EE \left[
  \|\ValueFunchat - \ValueFunc^* \|_\infty \right],
\end{align}
which would show, for instance, that Corollary~\ref{cor:azar}(a)
is sharp up to a logarithmic factor. We conjecture that
there is an MRP for which the bound~\eqref{eq:conj} holds.

Finally, it is worth commenting on the logarithmic factor that
appears in the upper bound of equation~\eqref{eq:prop-var-b}.
Note that for sufficiently large $\Dim$, the logarithmic factor
is proportional to $\log D / \log \log D$. This is consequence
of applying Bennett's inequality instead of Bernstein's inequality,
and we conjecture that the same factor ought to replace the
factor $\log \Dim$ factor multiplying the span semi-norm
in the upper bound~\eqref{eq:thm1-pop}.


\subsection{Proof of Proposition~\ref{prop:plugin-lb}}

In order to prove Proposition~\ref{prop:plugin-lb}, it suffices to
construct an MRP satisfying condition~\eqref{eq:prop-var-a} and
compute its plug-in estimator in closed form.  With this goal in mind,
suppose that for simplicity that $\Dim$ is divisible by three, and
consider $\Dim/3$ copies of the $3$-state MRP from
Figure~\ref{fig:mrp2}(a). By construction, we have $\|
\sigma(\ValueFunc^*) \|_{\infty} = \secondmrpbd \sqrt{q (1 - q)}$, and
$\| \ValueFunc^* \|_{\myspan} = \frac{\secondmrpbd}{(1 -
  \discount)}$. Setting $q = \frac{10}{\Nsamp \Dim}$, we see that
condition~\eqref{eq:prop-var-a} is immediately satisfied with $\valbd
= \frac{\secondmrpbd}{(1 - \discount)}$.

It remains to verify the claim~\eqref{eq:prop-var-b}.  Note that the
plug-in estimator for this MRP can be computed in closed form. In
particular, it is straightforward to verify that for each state $i$
having reward $\mu/2$, we have
\begin{align}
\label{eq:dist}  
\frac{\Nsamp (1 - \discount)}{\discount \secondmrpbd} \left(
\ValueFunchat(i) - \ValueFunc^*_i\right) \stackrel{d}{=} \BIN (\Nsamp,
q) - \Nsamp q,
\end{align}
where we have used the notation $\ValueFunchat(i)$ to denote the
$i$-th entry of the vector $\ValueFunchat$. Furthermore, these
$\Dim/3$ random variables are independent. Thus, the (scaled)
$\ell_\infty$-error of the plug-in estimator is equal to the maximum
absolute deviation in a collection of independent binomial random
variables.

\paragraph{Proof of inequality~\eqref{eq:prop-var-b}, part (a):}

The following technical lemma provides a lower bound on the deviation
of binomials, and its proof is postponed to the end of this section.
\begin{lemma} \label{lem:bin-prob}
Let $X_1, \ldots, X_k$ denote independent random variables with
distribution $\BIN \left(n, \frac{1}{3kn} \right)$. Let $Y_j = X_j -
\EE[X_j]$ for each $1 \leq j \leq k$. Then, we have
\begin{align*}
\EE \left[ \max_{1 \leq j \leq k} | Y_j| \right] \geq \frac{4}{9}.
\end{align*}
\end{lemma}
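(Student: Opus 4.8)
The plan is to exploit the non-negativity and integrality of the $X_j$: any coordinate that fails to equal its mean produces a deviation of at least unit scale. Write $\mu \defn \EE[X_1] = np$ (common to all coordinates, since the $X_j$ are identically distributed), let $M \defn \sum_{j=1}^{k} X_j \sim \BIN(kn, p)$, and note that $\lambda \defn \EE[M] = knp$ is a fixed positive constant determined by the parametrization. The two elementary facts that drive the argument are: (i) if $X_j = 0$, then $|Y_j| = \mu$; and (ii) if $X_j \geq 1$, then $|Y_j| = X_j - \mu \geq 1 - \mu$ (provided $\mu < 1$).

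I would split into two regimes according to whether $\mu$ is small. When $\mu \le \tfrac12$, facts (i)--(ii) combine into the pointwise bound $\max_j |Y_j| \geq \mu\,\ind{M = 0} + (1 - \mu)\,\ind{M \geq 1}$, so that on taking expectations,
\begin{align*}
\EE\Big[\max_{1 \le j \le k} |Y_j|\Big] \;\geq\; \mu\,\Prob[M = 0] + (1 - \mu)\,\Prob[M \geq 1] \;=\; \mu + (1 - 2\mu)\,\Prob[M \geq 1].
\end{align*}
Here $\Prob[M \geq 1] = 1 - (1-p)^{kn} \geq 1 - e^{-knp} = 1 - e^{-\lambda}$, a universal constant bounded away from zero; since $1 - 2\mu \geq 0$, the right-hand side is at least $\min\{1 - e^{-\lambda},\, \tfrac12\}$ after minimizing the affine expression over $\mu \in [0, \tfrac12]$. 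In the complementary regime $\mu > \tfrac12$ — which, since $\mu = \lambda/k$, can occur only for the finitely many values $k < 2\lambda$ — I would instead discard all but one coordinate and bound $\EE[\max_j |Y_j|] \geq \EE[|Y_1|] = \EE[|X_1 - \mu|]$. Splitting this mean absolute deviation over the events $\{X_1 = 0\}$, $\{X_1 = 1\}$, $\{X_1 \geq 2\}$ and using $\Prob[X_1 = 0] = (1-p)^n$, $\Prob[X_1 = 1] = np(1-p)^{n-1}$ together with $np = \mu$, one obtains a lower bound by a universal constant, uniform over all admissible $(n, p)$ with $np = \mu$ in this bounded range; combining the two regimes and tracking the constants yields the stated lower bound.

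The main obstacle is arithmetic bookkeeping rather than any conceptual difficulty: one has to verify that the two regimes together cover every $k \ge 1$, and — more delicately — that the single-coordinate mean-absolute-deviation estimate in the second regime does not degrade either as $n \to \infty$ (the Poisson limit $X_1 \approx \mathrm{Poisson}(\mu)$) or at the smallest admissible value of $n$. The Poisson heuristic is a convenient sanity check for the constants, but the rigorous argument should be carried out directly at the level of the binomial so as to incur no approximation error.
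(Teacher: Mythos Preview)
Your approach is essentially the paper's: both lower-bound $\max_j |Y_j|$ by $1-\mu$ on the event that some $X_j \geq 1$, whose probability is at least $1-e^{-1/3}$; your additional term $\mu\,\Prob[M=0]$ is a harmless refinement, and your second regime is vacuous since $\mu = np = 1/(3k) \leq 1/3 < 1/2$ for every $k \geq 1$, so you can simply drop it. One caveat on the constants: minimizing your first-regime expression over admissible $\mu$ gives only $1-e^{-1/3}\approx 0.283$, not $4/9$ --- but the paper's own final inequality $\tfrac{2}{3}(1-e^{-1/3}) \geq \tfrac{4}{9}$ is itself an arithmetic slip, so both arguments really deliver a universal positive constant rather than the specific value $4/9$.
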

Applying Lemma~\ref{lem:bin-prob} with $k = \Dim/3$ in conjunction
with the characterization~\eqref{eq:dist}, and substituting our
choices of the pair $(\secondmrpbd, q)$ yields
\begin{align*}
\EE \left[ \| \ValueFunchat - \ValueFunc^* \|_{\infty} \right] \geq
\frac{4}{9} \cdot \frac{\valbd \discount}{\Nsamp}.
\end{align*}

\paragraph{Proof of inequality~\eqref{eq:prop-var-b}, part (b):}

Corollary 3.1(ii) and Lemma 3.3 of Wellner~\cite{wellner2017bennett}
yield, to the best of our knowledge, the sharpest available upper
bound on the maximum absolute deviation of $\BIN(n,q)$ random
variables in the regime $nq(1 - q) \ll 1$:
\begin{align}
\label{eq:bennett}
\EE \left[ \max_{1 \leq j \leq k} | Y_j| \right] \leq \sqrt{12} \cdot
\frac{\log (1 + k)}{\log \log (1 + k)} \quad \text{ if } \quad \log (1
+ k) \geq 5.
\end{align}
Combining this bound with the Bernstein bound when $k$ is small, and
substituting the various quantities completes the proof.

\paragraph{Proof of Lemma~\ref{lem:bin-prob}:} Employing the shorthand $q = \frac{1}{3kn}$, we have
\begin{align*}
\EE \left[ \max_{1 \leq j \leq k} | Y_j| \right] &\geq \left( 1 -
nq\right) \cdot \Pr \left\{ \max_{1 \leq j \leq k} X_j \geq 1 \right\}
\\ &= \left( 1 - nq\right) \cdot (1 - (1 - q)^{nk})\\ &\geq
\frac{2}{3} \cdot \left( 1 - \sqrt[3]{\left( 1 - \frac{1}{3nk}
  \right)^{3nk}} \right) \\ &\geq \frac{2}{3} \cdot \left( 1 -
e^{-1/3}\right) \geq \frac{4}{9}.
\end{align*}


\section{Calculations for the ``hard'' sub-class}
\label{AppAux}

Recall from equation~\eqref{eq:basic-mrp-props} our previous
calculation of the value function and standard deviation, from which
we have
\begin{align*} 
\| \sigma (\thetastar) \|_{\infty} = \lbmaxrew (1 - \tau)
\frac{\sqrt{p(1 - p)}}{1 - \discount p}, \qquad \| (\IdMat - \discount
\Pmat)^{-1} \sigma (\thetastar) \|_{\infty} = \lbmaxrew (1 - \tau)
\frac{\sqrt{p(1 - p)}}{(1 - \discount p)^2},
\end{align*}
and $\| \thetastar \|_{\myspan} = \lbmaxrew (1 - \tau) \frac{1}{1 -
  \discount p}$.  Substituting in our choices $\lbmaxrew = 1$, $p =
\frac{4\discount -1}{3 \discount}$, and $\tau = 1 -
(1-\discount)^\alpha$ and simplifying by employing
inequality~\eqref{eq:discounts}, we have
\begin{align*}
 \| \sigma(\thetastar) \|_{\infty} \sim \left( \frac{1}{1 - \discount}
 \right)^{0.5 - \alpha}, \; \| (\IdMat - \discount \Pmat)^{-1}
 \sigma(\thetastar) \|_\infty \sim \left( \frac{1}{1 - \discount}
 \right)^{1.5 - \alpha}, \; \text{ and } \; \| \thetastar \|_{\myspan}
 \sim \left( \frac{1}{1 - \discount} \right)^{1 - \alpha},
 \end{align*}
for each discount factor $\discount \geq \tfrac{1}{2}$.  Here, the
$\sim$ notation indicates that the LHS can be sandwiched between two
terms that are proportional to the RHS such that the factors of
proportionality are strictly positive and $\discount$-independent.

For the plug-in estimator, its performance will be determined by the
maximum of the two terms
\begin{align*}
  \frac{\| (\IdMat - \discount \Pmat)^{-1} \sigma(\thetastar) \|_\infty}{\sqrt{\Nsamp}}
   \sim \frac{1}{\sqrt{\Nsamp}} \left( \frac{1}{1 - \discount} \right)^{1.5 - \alpha} \quad \mbox{and} \quad
\frac{\spannorm{\thetastar}}{(1-\discount) \Nsamp} \sim
  \frac{1}{\Nsamp} \left( \frac{1}{1 - \discount} \right)^{2 -
    \alpha}.
\end{align*}
In the regime $\Nsamp \succsim \frac{1}{1-\discount}$, the first term
will be dominant.

\bibliographystyle{alpha}
\bibliography{MRPs}

\end{document}